\newtheorem{theorem}{Theorem}
\newtheorem{assumption}{Assumption}[section]
\newtheorem{corollary}[theorem]{Corollary}
\newtheorem{proposition}[theorem]{Proposition}
\begin{document}

\def\spacingset#1{\renewcommand{\baselinestretch}%
{#1}\small\normalsize} \spacingset{1}
%
%
\title{\bf \LARGE Foresighted Online Policy Optimization with Interference}
\author[1]{Liner Xiang} 
\author[2]{Jiayi Wang} 
\author[1]{Hengrui Cai}
\affil[1]{Department of Statistics, University of California, Irvine} 
\affil[2]{Department of Mathematical Sciences, University of Texas at Dallas} 
 \date{}
 \maketitle  

\baselineskip=21pt

\begin{abstract} 
Contextual bandits, which leverage the baseline features of sequentially arriving individuals to optimize cumulative rewards while balancing exploration and exploitation, are critical for online decision-making. 
Existing approaches typically assume no interference, where each individual’s action affects only their own reward. Yet, such an assumption can be violated in many practical scenarios, and the oversight of interference can lead to short-sighted policies that focus solely on maximizing the immediate outcomes for individuals, which further results in suboptimal decisions and potentially increased regret over time. To address this significant gap, we introduce the \underline{f}o\underline{r}esighted \underline{o}nline policy with i\underline{nt}erference (FRONT) that innovatively considers the long-term impact of the current decision on subsequent decisions and rewards. 
The proposed FRONT method employs a sequence of exploratory and exploitative strategies to manage the intricacies of interference, ensuring robust parameter inference and regret minimization. 
Theoretically, we establish a tail bound for the online estimator and derive the asymptotic distribution of the parameters of interest under suitable conditions on the interference network. We further show that FRONT attains sublinear regret under two distinct definitions, capturing both the immediate and consequential impacts of decisions, and we establish these results with and without statistical inference. The effectiveness of FRONT is further demonstrated through extensive simulations and a real-world application to urban hotel profits.
\end{abstract}

\noindent
{\it Keywords:} Contextual bandits, Interference, Online policy optimization, Regret bound, Statistical inference

\section{Introduction}
\label{sec:intro}

In the online decision-making process, contextual bandit algorithms \citep{langford2007} aim to maximize cumulative rewards for sequentially arriving individuals by taking the optimal action based on their baseline features, while also carefully balancing the trade-off between exploitation and exploration. Contextual bandits have been widely applied in various fields, such as recommendation systems \citep{li2011, bouneffouf2012}, precision medicine \citep{tewari2017,durand2018,lu2021}, and dynamic pricing \citep{misra2019,tajik2024}. Most existing works \citep[see e.g., ][]{chambaz2017,chen2021,bibaut2021,zhan2021,dimakopoulou2021,zhang2021,khamaru2021,ramprasad2023,shen2024} typically assume that the mean outcome of interest is determined solely by the individual's current action and characteristics. Based on this primary assumption, a considerable number of methods have been developed by directly modeling the mean outcome function such as Upper Confidence Bound \citep{li2011} and Thompson Sampling \citep{agrawal2013thompson} (also see \citet{tewari2017} for a comprehensive survey). However, in reality, the current action may also influence the outcomes of subsequent individuals. This effect, known as \textit{interference} \citep{cox1958} in the causal inference literature, introduces demanding and yet-to-be-resolved challenges in online decision making.

\begin{figure}[!t]
    \centering
    \vspace{-0.5cm}
    \begin{subfigure}{\linewidth}
        \centering
        \includegraphics[width=0.96\linewidth]{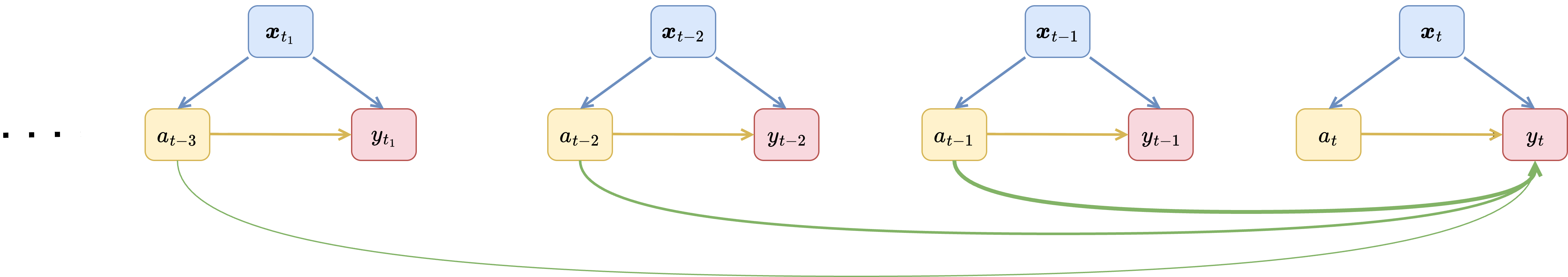}
        \caption{}
        \label{fig:causal1}
    \end{subfigure}

    \vspace{0.3cm}

    \begin{subfigure}{\linewidth}
        \centering
        \includegraphics[width=0.96\linewidth]{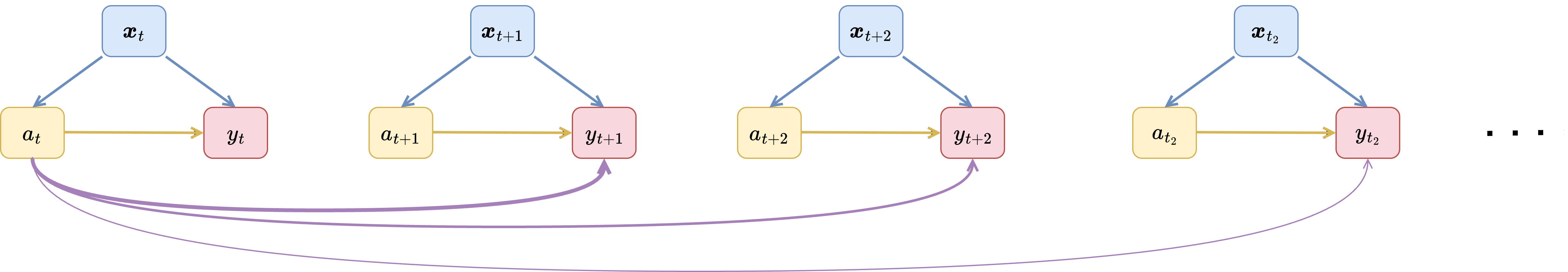}
        \caption{}
        \label{fig:causal2}
    \end{subfigure}

    \caption{The causal graphs illustrate \textit{interference} in online decision making. Individuals arrive sequentially with triples ${\bm{x}_t, a_t, y_t}$, where $\bm{x}_t$ are context features, $a_t$ is the chosen action, and $y_t$ the outcome. \textbf{Upper plot (a):} Green arrows show how past actions influence the outcome of the $t$-th individual.  \textbf{Lower plot (b):} Purple arrows show how the $t$-th individual affects subsequent outcomes. Line width reflects weight magnitude. }
    \label{fig:causal}
\end{figure}

Such an interference phenomenon among sequential individuals is common in many real-world applications. Using HPV vaccination as an example, social media platforms deliver personalized messages to boost vaccination rates \citep{hopfer2022,athey2023}. These messages create network effects: when one individual receives treatment (e.g., a vaccine advertisement), their subsequent online/offline interactions may influence others' vaccination decisions. Hotel pricing \citep[see e.g.,][]{cho2018} provides another example: while setting prices across multiple future dates, hotels must account for how current pricing trends (whether increasing or decreasing) impact future profitability. In these scenarios, the current individual's outcome depends further on past individuals' actions (see illustration in Figure~\ref{fig:causal}a), and the current individual's action would affect future outcomes (see illustration in Figure~\ref{fig:causal}b). 
The interference arising from the \textit{long-term impact of prior actions} has been recognized as a central challenge in sequential decision-making \citep[see e.g.,][]{agarwal2024,xu2024}. Traditional contextual bandit algorithms, which model the mean outcome solely as a function of the current action and context, overlook these dependencies. Consequently, they may yield \textit{short-sighted} policies that improve immediate rewards but ultimately lead to suboptimal performance or even greater regret when considering the population as a whole.

Despite extensive literature on causal inference and contextual bandits, research that directly addresses our specific focus, online sequential decision-making under interference, remains limited. Initial studies predominantly explore the \textit{offline} interference problem, where data is pre-collected, extensively reviewed by \cite{bajari2023}. These studies often consider relaxing the stable unit treatment value assumption (SUTVA), which is commonly imposed in causal inference (see \cite{rosenbaum2007} and \cite{forastiere2021} and the references therein). In these offline settings, connections and relationships between individuals can be \textit{fully observed} from existing databases, the ordering of individuals and their neighborhoods might be considered \textit{exchangeable}, assuming homogeneous interference effects \citep[see e.g.,][]{forastiere2021,qu2021,bargagli2025}. Consequently, techniques like partial interference and exposure mapping are extensively utilized to estimate causal effects in both randomized experiments \citep{sobel2006,hudgens2008,aronow2012,liu2014,aronow2017} and observational studies \citep{manski2013,forastiere2021,qu2021,lee2024,bargagli2025}. Here, partial interference assumes that interference occurs only within known networks. Exposure mapping formalizes how an individual’s potential outcome depends on a summary statistic of the treatments received by their neighbors, such as the count or proportion of treated neighbors. However, in \textit{online} settings, the information about \textit{subsequent individuals is unknown}, and the order of arrivals becomes crucial. As a result, all these aforementioned offline methods that require completely observed network of individuals cannot be applied to online scenarios with adaptively collected data. 

On the side of online decision making, most contextual bandit works treat each decision \textit{independently} with no interference assumption, and focus on either minimizing regret \citep[see e.g.,][]{li2011,agarwal2024} or deriving parameter inference \citep[see e.g.,][]{chen2021,shen2024}. Recently, several studies considered more general reward models for \textit{batched bandits} where the action of one individual can affect the rewards of others in the batch by extending the multi-agent cooperative game. Pioneering works \citep{bargiacchi2018,verstraeten2020,dubey2020} developed algorithms that help multiple individuals in the batch make coordinated decisions at the same time step. \cite{jia2024} investigated batched multi-armed bandits with known grid-structured interference patterns, employing cluster-level homogeneous action assignments. \cite{agarwal2024} generalized this framework to sparse network interference settings. Meanwhile, \cite{xu2024} considered within-cluster heterogeneous actions in batched contextual bandits. Unlike existing approaches that focus on batched bandits with group-level interference handling at the same time, we consider a distinct framework: we explicitly model interference over sequential decision points and derive individualized policies optimized for long-term outcomes. 

Our \underline{contributions} are as follows. {First}, to the best of our knowledge, we are the first to propose an optimal \textit{foresight} policy for online decision-making under interference, which we name \textbf{\underline{f}o\underline{r}esighted \underline{o}nline policy with i\underline{nt}erference (FRONT)}. As illustrated in Figure~\ref{fig:causal}, our work addresses interference over time, where the actions of earlier individuals affect subsequent ones. We propose the online additive outcome model with heterogeneous treatment effects and homogeneous interference effects. Simply including all neighbors' actions as features would lead to high-dimensional models even with an infinite dimension. To overcome this, we extend the exposure mapping function to capture interference in the online setting. Specifically, we define a very general weighted-average exposure mapping for the interference effect, allowing a broad applicability in real-world contexts \citep{zhao2015,hopfer2022}. Our proposed method, FRONT, explicitly incorporates the long-term effects of current actions into each decision-making step, utilizing the critical role of foresight in achieving optimal performance.

{Second}, we develop an online estimator together with a parameter inference procedure to quantify uncertainty under the proposed outcome model.
Our framework \textbf{simultaneously accounts for two distinct dependence structures}: \textit{the sequential dependence} arising from policy updates and adaptive data collected online, and \textit{the spatial dependence} induced by the network interference from growing neighboring individuals' actions. To break these two layers of dependence and ensure \textbf{valid inference}, we develop a two-level mechanism to carry out the exploration: implement the $\epsilon$-Greedy strategy with appropriate rate to introduce random actions and set a force-pull trigger to vary the interference effect. Leveraging this mechanism, we establish the tail bound and consistency for the online estimator. Remarkably, even under interference and limited sample settings, we further derive conditions on the weight network for interference that guarantee the asymptotic normality of the parameter estimates. We further illustrate these conditions with practical examples of weight networks.

{Third}, we consider two definitions of regret for our proposed policy on cumulative rewards and prove that \textit{both regrets are sublinear}. Regret is traditionally defined as the cumulative difference generated by decisions of the optimal policy and the proposed policy up to a certain horizon \citep{chen2021}. With interference, each decision may also contribute to regret in future time steps. We  \textbf{establish regret based on \textit{two} perspectives}: one focusing on the outcomes observed so far, the other accounting for the total regret caused by actions taken, including those that may manifest in the future. We further examine two cases: one with valid statistical inference, where parameter estimates achieve the $T^{-1/2}$ convergence rate (i.e., $\sqrt{T}$-consistency), and one without imposing additional conditions for such inference. Across both interference weight structures, we show that regret remains \textit{sublinear under both definitions}, highlighting the robustness of our method in long-term decision-making with interference.

The rest of this paper is organized as follows. In Section~\ref{sec:setup}, we formulate the problem setup, followed by the intuitive policies of the oracle, and propose our working model. In Section~\ref{sec:FRONT}, we present the FRONT method, which maximizes long-term cumulative rewards, along with its detailed implementation. Section~\ref{sec:theory} presents the assumptions, main theoretical results and statistical inference for the online estimators, followed by the regret bound of the proposed FRONT method. Numerical simulations and real data applications related to a hotel chain are provided in Section~\ref{sec:simulation} and Section~\ref{sec:real}, respectively, to validate our inference and demonstrate the strong performance of FRONT. In Section~\ref{sec:dis}, we present practical strategies for other working models, summarize this work, and discuss directions for future research. Proofs of all the main results and additional simulations are included in Appendix.

\section{Problem Formulation}
\label{sec:setup}
In this section, we first present the framework for the contextual bandit problem with interference in Section~\ref{subsec:framework}. Then, in Section~\ref{subsec:naive}, we describe the intuitive policies of the oracle and propose our working model.  

\subsection{Framework}
\label{subsec:framework} 

Suppose a sequence of individuals arrives in an order. Let $T$ denote the total number of individuals that can go to infinity. At each time $t$, where $t \in [T] = \left\{1, 2, \dots, T\right\}$, a new individual arrives, and we observe their contextual covariates $\bm{x}_t \in \mathbb{R}^{d_0}$. Here, $\bm{x}_t$ includes $1$ for the intercept, and $\bm{x}_t \stackrel{\text{i.i.d}}{\sim} \mathcal{P}_X$ for some unknown ${d_0}$-dimensional distribution. We consider the action set individuals can take as $\left\{0,1\right\}$. After selecting an action $a_t \in \left\{0,1\right\}$, we observe the reward $y_t$. Denote all the observed information up to time $t$ as $\mathcal{H}_{t} = \left\{\bm{x}_1, a_1, y_1, \bm{x}_2, a_2, y_2, \cdots, \bm{x}_t, a_t, y_t\right\}.$  In the online setting, the $t$-th individual may be influenced by those who arrived earlier, as they were observed prior to the decision-making time $t$. 
We thus define the neighborhood of the $t$-th individual as a subset of the individuals from the first up to the $(t-1)$-th. 

Including all the actions of prior individuals or neighbors as features leads to high and even infinite dimensionality in the online setting when $T$ goes to infinity. To address this issue, we consider that the effects are mediated through an exposure mapping function, consistent with established settings in the existing literature \citep{van2014,aronow2017,forastiere2021}. A simple example is the proportion of neighbors taking action $i$, i.e., $\sum_{s=1}^{t-1}\mathbb{I}(a_s=i)/(t-1)$, where $\mathbb{I}(\cdot)$ is the indicator function. More generally, to allow arbitrary influences from past actions, we define the \textit{interference action} as
\begin{equation}
\label{equ:kappadef}
     \kappa_t = \sum_{s=1}^{t-1} w_{ts} a_s,
\end{equation}
where $\{w_{ts}\}_{s=1}^{t-1}$ are weights that capture the influence of previous neighbors on the $t$-th individual. In particular, $w_{ts}=0$ whenever the $s$-th individual exerts no influence on the $t$-th individual. For instance, if we consider interference as the average action of the most recent $N$ individuals, then $w_{ts} = 1/N$ for $t-N \le s \le t-1$, and $w_{ts}=0$ otherwise. Moreover, following the existing literature \citep[see e.g.,][]{forastiere2021,xu2024}, we assume that the weight array $\{w_{ts}\}_{t=1,\,s<t}^{\infty}$ is known. Examples of weight arrays satisfying the theoretical conditions for valid statistical inference are given in Corollary~\ref{theor:gt}.

The interference effect from previous subjects operates through $\kappa_t$, which influences the current outcome $y_t$. Accordingly, we model $y_t$ as a function of $\bm{x}_t$, $\kappa_t$, and $a_t$ as 
\begin{equation*}
y_t \equiv \mu(\bm{x}_t,  \kappa_t, a_t)+e_t,
\end{equation*}
where $\mu(\bm{x}_t,  \kappa_t, a_t) \equiv \mathbb{E}(y_t|\bm{x}_t,  \kappa_t, a_t)$ is the conditional mean outcome, also known as the Q-function \citep{murphy2003,sutton2018}. The term $e_t$ represents $\sigma$-subgaussian noise, and conditioned on $a_t$ is independent of contextual covariates $\bm{x}_t$, all previous information $\mathcal{H}_{t-1}$, and thus the interference action $ \kappa_t$. We further specify the conditional variance as $\mathbb{E}(e_t \mid a_t = i) = \sigma_i^2 < \sigma^2$ for $i \in {0,1}$. Let $\mathcal{P}_{e_i}$ denote the conditional distribution of $e_t$ given $a_t = i$. It is worth noting, to our knowledge, that there is \textit{no} existing work that models $y_t$ using $ \kappa_t$ over time $t$.  

We aim to identify the optimal sequence of actions that maximizes the cumulative expected outcome
\begin{equation}
\label{obj_main}
\arg\max_{a_1,a_2,\dots} \sum_{t=1}^\infty \mu(\bm{x}_t, \kappa_t, a_t),
\end{equation}
yielding the optimal actions $\left\{a_t^\ast\right\}_{t=1}^\infty$, where the time horizon is set to be infinity. 
Equation~\eqref{obj_main} can be solved by backward inductive reasoning \citep[see e.g.,][]{chakraborty2014} if a predetermined endpoint $T$ is given. Yet, the induced optimal policy is \textit{intractable}, since it not only relies on the full specification of the mean outcome model but also may require knowledge of future contextual information, which is impractical in online settings. We provide a more detailed discussion in Section~\ref{sec:dis}. In the following section, we formalize the intuitive existing approaches within our online learning framework, and derive the optimal policy that maximizes the long-term cumulative benefits in Equation~\eqref{obj_main} under the proposed working model.

\subsection{Intuitive Policies and Working Model}
\label{subsec:naive}

We begin by examining two intuitive policies that could be applied to the online decision-making problem. In classical contextual bandit frameworks, interference effects are typically neglected \citep{chen2021,shen2024}, resulting in a restrictive modeling of the mean outcome as $f(\bm{x}_t, a_t) := \mathbb{E}(y_t |\bm{x}_t, a_t)$ only depending on the observed feature and the action taken at the current time step. In this case, we consider a policy from a suboptimal policy class that evaluates the expected outcomes of available actions without accounting for their potential impact on subsequent individuals. At each time step $t$, we estimate the parameters using all available historical data $\mathcal{H}_{t-1}$ \citep{deshpande2018,zhang2020,chen2021}.
We denote this approach as the \emph{na\"ive} policy, formally expressed as $\mathbb{I}\{\tilde{f}_{t-1}(\bm{x}_t, 1)-\tilde{f}_{t-1}(\bm{x}_t, 0)\ge 0 \}$, where $\tilde{f}_{t-1}(\bm{x}_t, a_t)$ is the estimated Q-function with plugged-in parameter estimates. 

Next, we consider an approach that accounts for interference effects in the mean outcome model, while still restricting attention to the suboptimal policy class and overlooking the interference effect for the subsequent arriving individuals. Such a policy focuses on maximizing only the immediate outcome at time $t$, yielding what we name the \emph{myopic} policy. The policy is expressed as $\mathbb{I}\left\{\widehat{\mu}_{t-1}(\bm{x}_t, \kappa_t,1) - \widehat{\mu}_{t-1}(\bm{x}_t, \kappa_t,0) \ge 0\right\}$, where $\widehat{\mu}_{t-1}(\bm{x}_t, \kappa_t,a_t)$ denotes the conditional mean outcome estimated correctly using historical information $\mathcal{H}_{t-1}$.

In settings with interference, the action $a_t$ at time $t$ influences not only the immediate outcome of the $t$-th individual but also the outcomes of all subsequent individuals whose interference weight from $t$ is nonzero. Thus, deriving the optimal long-term policy requires accounting for the cumulative effects of $a_t$ on future outcomes. However, obtaining a general closed-form solution is infeasible for two reasons. First, it would require backward induction that heavily depends on the specific structure of the conditional mean outcome model under interference with a predetermined endpoint. Second, more importantly, the solution may also require the subsequent contextual information to be obtained in advance, which is impossible in the online setting. 

To address these challenges, we propose a new working model for the online decision making with interference, which is motivated from real applications, enables a closed-form optimal policy, and advances the existing theoretical analysis of bandits. This makes the approach both analytically tractable and practically implementable in online settings. Specifically, following the recent literature on online inference \citep{deshpande2018,zhang2020,chen2021,shen2024,xu2024}, we begin with an online additive outcome model for the conditional mean outcome, and further extend it to incorporate interference effects. To capture potential nonlinear relationships among contextual variables, we apply a basis transformation to the raw context vector \( \bm{x} = (x_1, x_2, \ldots, x_{d_0}) \in \mathbb{R}^{d_0} \), and accordingly define a feature map $\phi: \mathbb{R}^{d_0} \rightarrow \mathbb{R}^{d_1}$ \citep{shi2020breaking,papini2021leveraging,dubey2022scalable}. 
Our proposed model is directly motivated by the homogeneous interference patterns observed in real-world applications. For instance, in the HPV vaccination example in Section~\ref{sec:intro}, the effect of receiving vaccine-related advertisements is independent of individual characteristics such as sex or race. Similarly, in the hotel pricing example, historical price adjustments exert homogeneous effects on future profitability, regardless of seasonal variations, occupancy rates, etc.
Motivated by these settings, we then propose \textbf{the online additive outcome model with heterogeneous treatment effects and homogeneous interference effects}, formally specified as follows:
\begin{equation}
\label{equ:conditionalmean}
   \mu(\bm{x}_t,  \kappa_t, a_t) = \mathbb{E}(y_t \vert \bm{x}_t,  \kappa_t, a_t) = (1-a
   _t)\phi(\bm{x}_t)^{\top}\bm{\beta}_0+a_t\phi(\bm{x}_t)^{\top}\bm{\beta}_1+ \kappa_t \gamma,
\end{equation}
where model parameters $\bm{\beta}_0$, $\bm{\beta}_1 \in \mathbb{R}^{d_1}$ are treatment effect parameters associated with choosing action 0 or 1, and $\gamma \in \mathbb{R}$ measures the homogeneous interference effect arising from neighbors' actions $\kappa_t = \sum_{s=1}^{t-1} w_{ts} a_s$. Discussions on how to extend our working model in Equation~\eqref{equ:conditionalmean} to more general cases can be found in Section~\ref{sec:other}.

\section{The Proposed Method: FRONT}
\label{sec:FRONT}

In this section, we derive the optimal policy and formally present the FRONT method, which maximizes long-term cumulative rewards. We further detail its implementation by extending the $\epsilon$-Greedy strategy, which employs a sequence of exploratory and exploitative strategies to manage the intricacies of interference, ensuring robust parameter inference and regret minimization. 

To begin with, we establish the optimal policy with interference under the proposed working model. Specifically, under the online additive outcome model with heterogeneous treatment effects and homogeneous interference effects in Equation~\eqref{equ:conditionalmean}, the optimal policy that accounts for interference corresponds to the solution of Equation~\eqref{obj_main} that maximizes the cumulative expected outcome, as stated in the following proposition.

\begin{proposition}[{\emph{Optimal policy with interference}}]
\label{prop:optimal}
     Under the conditional mean outcome model in Equation~\eqref{equ:conditionalmean}, the optimal policy $\pi^\ast(\cdot)$ at time $t$ is given by 
    \begin{equation}
    \label{equ:optimal_pi}
        a_t^\ast := \pi^\ast(\bm{x}_t) = \mathbb{I}\left\{\phi(\bm{x}_t)^{\top} (\bm{\beta}_1 - \bm{\beta}_0) + \left[\sum_{s=t+1}^{\infty} w_{st}\right] \gamma \ge 0 \right\}.
    \end{equation}
\end{proposition}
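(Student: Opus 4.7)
The plan is to show that, under the additive model in Equation~\eqref{equ:conditionalmean}, the infinite-horizon objective in Equation~\eqref{obj_main} decouples across time, so that the optimal action at each $t$ can be read off pointwise. I would start by plugging the model into the objective and splitting it into an action-free baseline $\sum_{t=1}^{\infty}\phi(\bm{x}_t)^\top\bm{\beta}_0$ and an action-dependent part
\[
\sum_{t=1}^{\infty} a_t\,\phi(\bm{x}_t)^\top(\bm{\beta}_1-\bm{\beta}_0) \;+\; \gamma\sum_{t=1}^{\infty}\sum_{s=1}^{t-1} w_{ts}\,a_s .
\]
The first sum does not depend on any action and can be dropped from the maximization.

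The key algebraic step is to swap the order of summation in the interference term. Writing $\sum_{t=1}^{\infty}\sum_{s=1}^{t-1} w_{ts}a_s = \sum_{s=1}^{\infty} a_s\sum_{t=s+1}^{\infty} w_{ts}$ and relabeling the outer index $s\to t$, the objective (up to the action-free baseline) becomes
\[
\sum_{t=1}^{\infty} a_t \left[ \phi(\bm{x}_t)^\top(\bm{\beta}_1-\bm{\beta}_0) \;+\; \gamma\sum_{s=t+1}^{\infty} w_{st} \right].
\]
Because the summand at time $t$ is now a function of $a_t$ alone (the only remaining coupling through the interference has been absorbed into the bracketed coefficient of $a_t$), the maximization over $(a_1,a_2,\dots)\in\{0,1\}^{\infty}$ separates into independent pointwise maximizations. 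Since $a_t\in\{0,1\}$, the maximum at time $t$ is attained by choosing $a_t=1$ exactly when the bracket is nonnegative and $a_t=0$ otherwise, which is precisely Equation~\eqref{equ:optimal_pi}.

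The main obstacle, and the only thing I would be careful about, is the justification of interchanging the two infinite sums and the decoupled maximization. I would invoke either Fubini/Tonelli on the nonnegative expansion (after splitting positive and negative parts of $\gamma w_{ts}a_s$) or, more in line with the paper's formulation, treat Equation~\eqref{obj_main} as the limit of finite-horizon objectives $\sum_{t=1}^{T}\mu(\bm{x}_t,\kappa_t,a_t)$ for which the rearrangement is an identity and the pointwise maximizer of each finite problem already coincides with the rule in Equation~\eqref{equ:optimal_pi}, provided the weights $\{w_{ts}\}$ are such that $\sum_{s=t+1}^{\infty} w_{st}$ exists (guaranteed under the weight structures considered later, e.g., in Corollary~\ref{theor:gt}). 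I would also remark that although the sum appears to require knowledge of $\sum_{s>t}w_{st}$, these weights are assumed known, and crucially no future contexts $\bm{x}_s$ for $s>t$ enter the decision at time $t$, so the policy is implementable in the online setting.
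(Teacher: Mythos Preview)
Your proposal is correct and follows essentially the same route as the paper's proof in Appendix~\ref{app:optimal}: expand the cumulative mean under the additive model, swap the order of the double sum to isolate the coefficient of each $a_t$, and then maximize pointwise. The paper performs the rearrangement in one line without discussing Fubini or the finite-horizon approximation, so your added care on that point goes slightly beyond what the paper presents but is entirely compatible with it.
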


We denote $\zeta_t = \sum_{s=t+1}^{\infty} w_{st}$, and define the term $\zeta_t \gamma = \left[\sum_{s=t+1}^{\infty} w_{st}\right] \gamma$ in Equation~\eqref{equ:optimal_pi} as the \textbf{interference effect on subsequent outcome (ISO)} for the $t$-th individual. 
ISO represents the \textit{spillover} component of the $t$-th individual’s action that propagates to future outcomes, excluding its direct contribution to the current reward. As shown in Proposition~\ref{prop:optimal}, the optimal action at time $t$ depends on ISO. Unlike the myopic policy and classic contextual bandits, Equation~\eqref{equ:optimal_pi} explicitly incorporates ISO as an additional term in the decision rule, indicating the interference effect $\zeta_t\gamma$. Consequently, Equation~\eqref{equ:optimal_pi} can be interpreted as the difference in the long-term cumulative reward between choosing action 0 and action 1 at time $t$. ISO plays a key role in distinguishing two definitions of regret and in deriving the corresponding regret bounds presented in Section~\ref{sec:regret}. The proof of Proposition~\ref{prop:optimal} can be found in Appendix~\ref{app:optimal}.

We next detail the estimation of the optimal policy with interference based on Proposition~\ref{prop:optimal}. In online learning, we fit the proposed model in Equation~\eqref{equ:conditionalmean} and adopt the derived optimal policy with plug-in estimates to make decisions, balancing the trade-off between exploration and exploitation. Denote $\bm{z}_t = ((1-a_t)\phi(\bm{x}_t)^{\top},  a_t\phi(\bm{x}_t)^{\top}, \kappa_t)^{\top} \in \mathbb{R}^d$ and $\bm{\theta} = (\bm{\beta}_0^{\top}, \bm{\beta}_1^{\top}, \gamma)^{\top} \in \mathbb{R}^d$, where $d = 2d_1+1$. Then Equation~\eqref{equ:conditionalmean} can be simplified to $\mu(\bm{z}_t,a_t)=\bm{z}_t^{\top}\bm{\theta}$. We assume an initial warm-up phase with $T_0$ samples, during which individuals are assigned specific actions to ensure efficient exploration at the outset. At each time $t=T_0+1,T_0+2,\dots$, we estimate the parameters $\bm{\theta}$ based on $\mathcal{H}_{t}$ by
\begin{equation}
\label{equ:para}
    \widehat{\bm{\theta}}_{t} =(\widehat{\bm{\beta}}_{0,t}^{\top}, \widehat{\bm{\beta}}_{1,t}^{\top}, \widehat{\gamma}_{t})^{\top}= \left(\frac{1}{t} \sum_{s=1}^{t} \bm{z}_s \bm{z}_s^{\top}\right)^{-1} \left( \frac{1}{t} \sum_{s=1}^t \bm{z}_s y_s \right), 
\end{equation}
if the first term ${t}^{-1} \sum_{s=1}^t \bm{z}_s \bm{z}_s^{\top}$ is invertible. In this paper, we extend the $\epsilon$-Greedy method \citep{chambaz2017,chen2021} with interference to avoid getting trapped in a single action and being led in the wrong direction. To be specific, at each step after the warm-up samples, we first update the online estimators and then apply the $\epsilon$-Greedy to make random choices with probability $\epsilon_t$. The action $a_t$ is then generated by $\text{Bernoulli}(\widehat{\pi}_t)$, where
\begin{equation}
\label{equ:pit}
\widehat{\pi}_t (\bm{x}_t) = (1-\epsilon_t) \mathbb{I}\left\{\phi(\bm{x}_t)^{\top} (\widehat{\bm{\beta}}_{1,t-1} - \widehat{\bm{\beta}}_{0,t-1}) + \zeta_t \widehat{\gamma}_{t-1} \geq 0 \right\} + \frac{\epsilon_t}{2}.
\end{equation}
We name our method as \textbf{\underline{f}o\underline{r}esighted \underline{o}nline policy with i\underline{nt}erference (FRONT)}, with the detailed pseudocode provided in Algorithm~\ref{alg}. 

To enhance exploration efficiency and enable online statistical inference, particularly for the interference-related parameter $\gamma$, we further design the force pulling step. Previous methods apply force pulls at fixed time points \citep[see e.g.,][]{bastani2020,hu2022dtr}, where a single force pull often does not substantially perturb $ \kappa_t$. Instead, we conduct consecutive pulls whenever the condition in Step (7) in Algorithm~\ref{alg} is triggered to meet the clipping assumption (i.e., Assumption~\ref{ass:mineigen}, which serves as one theoretical ground for online statistical inference). The clipping parameter $C$, which appears in Assumption~\ref{ass:mineigen}, will be formally introduced in Section~\ref{sec:consistency}. Specifically, in Step (7), we perform $K$ consecutive force pulls from $a_t$ to $a_{t+K}$. If $ \kappa_{t-1} \leq \kappa_0$, we set the force pulls action to 1; otherwise, we set them to be 0, in order to increase the variability of $ \kappa_t$. The threshold $\kappa_0$ is chosen based on the weight array $\left\{w_{ts}\right\}_{t=1,s<t}^\infty$. We recommend choosing $\kappa_0$ such that force pulls of both actions 0 and 1 induce noticeable fluctuations in $\kappa_t$. 
Furthermore, during the warm-up period in Step (2), we increase the variability of \(  \kappa_t \) to ensure a robust initial estimator. We divide the warm-up duration \( T_0 \) into \( 2L \) intervals, alternating actions: even-numbered intervals take action 0, while odd-numbered intervals take action 1. This periodic assignment expands the variability of observed values for \(  \kappa_t \), reducing the need for future exploration. 

By choosing the suitable \( C \), \( \kappa_0 \), \( K \), \( T_0 \), \( L \), and \( \epsilon_t \), we prevent frequent triggers and reduce consecutive force pulls. In practice, to ensure sufficient exploration and maintain high variability of $\kappa_t$ during the warm-up period, we recommend using a relatively large $T_0$ combined with a suitable $L$, taking into account the structure of the weight array $\left\{w_{ts}\right\}_{t=1,s<t}^\infty$. A small value of $C$ is also preferred to avoid frequent triggers.
Given a known $\left\{w_{ts}\right\}_{t=1,s<t}^\infty$, the feasible range of $\kappa_t$ can be derived, and we suggest setting $\kappa_0$ to the midpoint of this range to balance the force pulls across both actions. The number of force pulls per trigger, $K$, should also be determined based on $\left\{w_{ts}\right\}_{t=1,s<t}^\infty$ as well to ensure adequate variability in $\kappa_t$. Notably, a small $K$ may lead to limited variation in $\kappa_t$, reducing exploration efficiency, whereas a large $K$ may introduce redundant forced pulls and increase cumulative regret. 
The theoretical order of the exploration probability $\epsilon_t$ that $t\epsilon_t^2 \to \infty$ is formally established in Section~\ref{sec:theory} with practical recommendations. 
The convergence of $\kappa_t$ under different weight arrays is provided in Section \ref{sec:infer}. We further detail theoretical recommendation on the force-pull set based on \( K \), \( T_0 \), and \( L \) in Section \ref{sec:infer_regret},
with sensitivity analyses of varying hyperparameters provided in Appendix~\ref{app:simulation}.

\begin{algorithm}[!t]
\caption{FRONT under $\epsilon$-Greedy}
\label{alg}
\begin{algorithmic}[0]
\setstretch{1.2}
\State \textbf{Input:} 
clipping parameter $C$, force pulls threshold $\kappa_0$, number of force pulls at one trigger $K$, warm-up period $T_0$, warm-up interval parameter $L$, exploration probability $\epsilon_t$;
\For{\(t = 1, 2, \ldots, T_0\)}
    \State (1) Sample $d_0$-dimensional context $\bm{x}_t \in \mathcal{P}_X$;
    \State (2) Set $a_t = 1$ when $t \in \bigcup_{i=1}^L ((i-1)T_0/L,\,(2i-1)T_0/{2L}]$, and $a_t = 0$ when $t \in \bigcup_{i=1}^L ((2i-1)T_0/{2L}, \, iT_0/L]$;
    \State (3) Update $ \kappa_t$ by Equation~\eqref{equ:kappadef};
\EndFor
\For{$t=T_0+1, T_0+2, \ldots$} 
\State (4) Sample $d_0$-dimensional context $\bm{x}_t \in \mathcal{P}_X$;
\State (5) Update $\widehat{\bm{\theta}}_{t-1}$ by Equation~\eqref{equ:para};
\State (6) Update $\widehat{\pi}_t(\cdot)$ by Equation~\eqref{equ:pit} and $a_t$ by following Bernoulli $(\widehat{\pi}_t)$;
\If{$\lambda_{\min}\left\{{t}^{-1} \sum_{s=1}^t \bm{z}_s^{\top} \bm{z}_s \right\} \le C\epsilon_t$}
\If{$ \kappa_{t}\le \kappa_0$}
\State (7a) Perform force pulls by setting $a_{t}=a_{t+1}=\ldots= a_{t+K}=1$;
\Else
\State (7b) Perform force pulls by setting $a_{t}=a_{t+1}=\ldots =a_{t+K}=0$;
\EndIf
\EndIf
\EndFor
\end{algorithmic}
\end{algorithm}

\section{Theory}
\label{sec:theory}

In this section, we establish the statistical properties of the proposed method. We first derive a tail bound for the online estimator and provide conditions on the exploration rate that guarantee consistency of the parameter estimates in Section~\ref{sec:consistency}. We then analyze the evolution of the interference action $\kappa_t$ in Section~\ref{sec:kappat} and establish asymptotic normality of the estimators in Section~\ref{sec:normal}. Finally, we present the regret rates of the proposed FRONT algorithm, both with and without additional conditions for statistical inference, in Section~\ref{sec:regret}. All proofs are collected in Appendix~\ref{app:thms}.

\subsection{Consistency of Estimated Parameters}
\label{sec:consistency}

We first show the consistency of the estimated parameters for our proposed online
additive outcome model with interference. Our theoretical analysis starts from the following assumptions, which enable us to establish the tail bound of the online estimator.

\begin{assumption}[{\emph{Boundness}}]
\label{ass:bound}
For all $\bm{x} \sim \mathcal{P}_X$, there exist positive constants $L_x$ and $\lambda$ such that $ \|\phi(\bm{x})\|_{\infty} \le L_x $ and $\lambda_{\min}\!\left(\mathbb{E}[\phi(\bm{x}) \phi(\bm{x})^{\top}]\right) > \lambda$. Moreover, there exists a positive constant $L_z \ge L_x$ such that $\sum_{s=1}^{t-1} \big|w_{ts}\big| \le L_z$ for all $t$, then $\|\bm{z}_t\|_{\infty} \le L_z$ for any $t$.
\end{assumption} 
\begin{assumption}[{\emph{Clipping}}]
\label{ass:mineigen}
For any time step $t \ge 1$, there exists a constant $C$ such that
\begin{equation*}
    \lambda_{\min}\left\{\frac{1}{t} \sum_{s=1}^t \bm{z}_s^{\top} \bm{z}_s \right\} > C\epsilon_t.
\end{equation*}
\end{assumption}

Assumption~\ref{ass:bound}, commonly used in contextual bandits \citep{zhang2020, chen2021, shen2024}, restricts the contextual covariates, ensuring that the mean of the martingale differences converges to zero. Assumption~\ref{ass:mineigen} is a technical condition necessary for the consistency and asymptotic normality of the least squares estimators \citep{deshpande2018,zhang2020,hadad2021,shen2024}. In practice, it can be achieved via the warm-up period and force-pull techniques in Step (2) and Step (7) of Algorithm~\ref{alg}. Let $C\epsilon_t$ as the clipping rate. We derive the consistency of the online estimator. 

\begin{theorem}[{\emph{Tail bound for the online estimator}}]
\label{theor:tail}
In the online decision-making using FRONT, suppose Assumptions~\ref{ass:bound} and \ref{ass:mineigen} are satisfied, for $\forall h > 0$, we have 
\begin{equation}
    \Pr\left\{\left\Vert  \widehat{\bm{\theta}}_{t} - \bm{\theta} \right\Vert_1 \le h \right\}\ge 1- 4d_1 \exp \left\{-\frac{t\epsilon_t^2 C^2 h^2}{2d^2\sigma^2 L_z^2}\right\} - 4\exp \left\{-\frac{t\epsilon_t^2 C^2 h^2}{8d^2\sigma^2 L_z^2}\right\}.
\end{equation}
\end{theorem}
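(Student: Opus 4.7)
The plan is to reduce the tail bound to a martingale concentration argument applied to the least-squares residual. Substituting the model $y_s = \bm{z}_s^{\top}\bm{\theta} + e_s$ into Equation~\eqref{equ:para} gives the clean decomposition
\begin{equation*}
\widehat{\bm{\theta}}_t - \bm{\theta} = \bm{\Sigma}_t^{-1}\bm{m}_t,\qquad \bm{\Sigma}_t=\frac{1}{t}\sum_{s=1}^t\bm{z}_s\bm{z}_s^{\top},\qquad \bm{m}_t=\frac{1}{t}\sum_{s=1}^t\bm{z}_s e_s.
\end{equation*}
Assumption~\ref{ass:mineigen} controls the deterministic factor via $\|\bm{\Sigma}_t^{-1}\|_{\mathrm{op}}\le 1/(C\epsilon_t)$, and the elementary chain $\|v\|_1\le\sqrt{d}\,\|v\|_2\le d\,\|v\|_\infty$ yields
\begin{equation*}
\|\widehat{\bm{\theta}}_t-\bm{\theta}\|_1\le \frac{d}{C\epsilon_t}\,\|\bm{m}_t\|_\infty,
\end{equation*}
so it is enough to show that $\|\bm{m}_t\|_\infty$ exceeds a threshold of order $hC\epsilon_t/d$ only with the probability stated.

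For that stochastic factor I would treat $\{\bm{z}_s e_s\}_{s\ge 1}$ as a vector-valued martingale difference sequence with respect to the filtration $\mathcal{F}_s=\sigma(\mathcal{H}_{s-1},\bm{x}_s,a_s)$. By the noise assumption $\mathbb{E}[e_s\mid\mathcal{F}_s]=0$ and $e_s$ is conditionally $\sigma$-subgaussian, while by Assumption~\ref{ass:bound} the covariate $\bm{z}_s$ is $\mathcal{F}_s$-measurable with $\|\bm{z}_s\|_\infty\le L_z$. Hence each scalar $z_{s,j}e_s$ is $L_z\sigma$-subgaussian conditional on $\mathcal{F}_s$, and Azuma--Hoeffding for subgaussian martingale differences yields, coordinate by coordinate,
\begin{equation*}
\Pr\!\left\{\big|(\bm{m}_t)_j\big|\ge \delta\right\}\le 2\exp\!\left\{-\frac{t\delta^2}{2\sigma^2 L_z^2}\right\}.
\end{equation*}

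To recover the precise form in the statement I would partition the $\ell_1$ budget $h$ according to the block structure of $\bm{z}_s$: a slice for the $2d_1$ coordinates carrying $\phi(\bm{x}_s)$ and a separate, smaller slice (effectively $h/2$) for the scalar $\gamma$-coordinate carrying $\kappa_s$. The former, after collecting two-sided tails and a coordinate-level union bound, contributes the $4d_1\exp\{-tC^2\epsilon_t^2h^2/(2d^2\sigma^2 L_z^2)\}$ summand; the latter, with its halved threshold fed into the same Azuma tail, produces the $4\exp\{-tC^2\epsilon_t^2h^2/(8d^2\sigma^2 L_z^2)\}$ summand. A final union bound across the two blocks assembles the claimed inequality.

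The main conceptual obstacle is justifying the martingale-difference structure in FRONT's fully adaptive regime: both the action $a_s$ (through the estimated policy $\widehat{\pi}_s$ in Equation~\eqref{equ:pit} together with the warm-up and force-pull rules in Algorithm~\ref{alg}) and the interference action $\kappa_s=\sum_{r<s}w_{sr}a_r$ depend on the entire past, so one must be careful that they remain $\mathcal{F}_s$-measurable and that pairing them with a freshly sampled, conditionally mean-zero subgaussian $e_s$ preserves the martingale property throughout. The remaining piece --- separating the heterogeneous-treatment block from the homogeneous-interference scalar so that the constants $4d_1,4$ and the denominator factors $2d^2,8d^2$ appear exactly as stated, rather than being collapsed into the cruder $2d\exp(\cdot)$ bound --- is delicate bookkeeping but essentially mechanical once the subgaussian martingale concentration is in place.
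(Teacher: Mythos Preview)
Your deterministic reduction $\widehat{\bm{\theta}}_t-\bm{\theta}=\bm{\Sigma}_t^{-1}\bm{m}_t$, $\|\bm{\Sigma}_t^{-1}\|_{\mathrm{op}}\le 1/(C\epsilon_t)$, and the norm chain leading to $\|\widehat{\bm{\theta}}_t-\bm{\theta}\|_1\le (d/C\epsilon_t)\|\bm{m}_t\|_\infty$ matches the paper exactly; the paper phrases the same step as bounding $\|\cdot\|_2$ coordinatewise at level $tC\epsilon_t h/\sqrt{d}$ and then passing to $\|\cdot\|_1$ via Cauchy--Schwarz, which is identical in effect.

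Where your argument genuinely diverges is in the concentration step. You take the filtration $\mathcal{F}_s=\sigma(\mathcal{H}_{s-1},\bm{x}_s,a_s)$ and observe that $z_{s,j}e_s$ is an $L_z\sigma$-subgaussian martingale difference, so Azuma--Hoeffding applies directly to each of the $d$ coordinates. That is valid, and in fact yields the uniform per-coordinate bound $2\exp\{-t\epsilon_t^2C^2h^2/(2d^2\sigma^2L_z^2)\}$, hence a total of $(4d_1+2)\exp\{-\cdot/(2d^2\cdot)\}$, which is \emph{sharper} than the statement. The paper instead invokes an independence-based lemma (its Lemma~\ref{lemma:subgaussian}) that requires $e_s$ to be independent of the past; because the noise distribution depends on $a_s$, the paper splits the interference coordinate as $\kappa_s e_s=\kappa_s\mathbb{I}\{a_s=0\}e_s^{(0)}+\kappa_s\mathbb{I}\{a_s=1\}e_s^{(1)}$ and union-bounds at the halved threshold, which is where the $4\exp\{-\cdot/(8d^2\cdot)\}$ term actually comes from. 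For the first $2d_1$ coordinates no split is needed because $(1-a_s)\phi(\bm{x}_s)_j$ and $a_s\phi(\bm{x}_s)_j$ already isolate a single arm.

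So the only muddled part of your proposal is the last paragraph: the ``partition of the $\ell_1$ budget with an $h/2$ slice for the $\gamma$-coordinate'' is neither consistent with your own $\|\bm{m}_t\|_\infty$ route (where every coordinate gets the same threshold) nor what the paper does. The factor $1/8$ in the paper is an artifact of the arm-splitting, not of any block-wise allocation of $h$. Your martingale argument renders that split unnecessary, so you should either accept the slightly stronger bound you actually prove, or, if you want to reproduce the stated constants verbatim, mimic the paper's arm decomposition on the $\kappa_s$-coordinate rather than invent a budget partition.
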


Theorem~\ref{theor:tail} demonstrates that the consistency of the online estimator can be established if $t\epsilon_t^2 \to \infty$. The proofs can be found in Appendix~\ref{app:tailbound}. Corollary~\ref{coro:consistency} can be derived straightforwardly as follows.

\begin{corollary} [{\emph{Consistency of the online estimator}}]
\label{coro:consistency}
If conditions in Theorem~\ref{theor:tail} are satisfied and $t\epsilon_t^2 \to \infty$ as $t \to \infty$, then the online estimator is consistent, i.e., $\widehat{\bm{\theta}}_{t} \to \bm{\theta}$ as $t \to \infty$.
\end{corollary}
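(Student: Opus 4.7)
The plan is to deduce Corollary~\ref{coro:consistency} directly from the tail bound of Theorem~\ref{theor:tail}. Because the statement only asserts $\widehat{\bm{\theta}}_t \to \bm{\theta}$ without specifying the mode, convergence in probability is the natural target, and it follows almost immediately from the tail bound.

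First, I fix an arbitrary $h>0$ and invoke Theorem~\ref{theor:tail}, which gives
\begin{equation*}
\Pr\!\left\{\bigl\|\widehat{\bm{\theta}}_t - \bm{\theta}\bigr\|_1 > h\right\} \le 4d_1 \exp\!\left(-\frac{t\epsilon_t^2\, C^2 h^2}{2 d^2 \sigma^2 L_z^2}\right) + 4 \exp\!\left(-\frac{t\epsilon_t^2\, C^2 h^2}{8 d^2 \sigma^2 L_z^2}\right).
\end{equation*}
The quantities $d$, $d_1$, $C$, $\sigma$, $L_z$ are $t$-independent constants, so the entire dependence on $t$ enters through the factor $t\epsilon_t^2$ inside each exponent. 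Under the hypothesis $t\epsilon_t^2 \to \infty$, both exponents diverge to $-\infty$, and the right-hand side vanishes as $t\to\infty$. Since $h$ was arbitrary, this establishes $\widehat{\bm{\theta}}_t \to \bm{\theta}$ in probability, which is exactly the claim.

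If one wished to strengthen this to almost sure convergence, I would apply the first Borel--Cantelli lemma: whenever $\epsilon_t$ decays slowly enough that $t\epsilon_t^2 \ge t^{\delta}$ eventually for some $\delta>0$ (e.g.\ $\epsilon_t \asymp t^{-\alpha}$ with $\alpha<1/2$), the tail bounds are summable in $t$ for each fixed $h$, giving $\Pr\{\|\widehat{\bm{\theta}}_t - \bm{\theta}\|_1 > h \text{ i.o.}\} = 0$. The main obstacle is essentially none --- all the real analysis has been absorbed into Theorem~\ref{theor:tail}. The only sanity check is that the constants in the exponent are genuinely $t$-free, which is guaranteed by Assumption~\ref{ass:bound} (fixing $L_z$, $\sigma$, $d$, $d_1$) and by the clipping rate being exactly $C\epsilon_t$ in Assumption~\ref{ass:mineigen}, so that the $\epsilon_t^{-2}$ penalty from inverting the sample Gram matrix and the martingale variance scaling combine into the single factor $t\epsilon_t^2$ that the hypothesis forces to diverge.
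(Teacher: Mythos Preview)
Your proposal is correct and matches the paper's approach exactly: the paper states that Corollary~\ref{coro:consistency} ``can be derived straightforwardly'' from Theorem~\ref{theor:tail} and gives no further argument, so your one-line deduction---fix $h>0$, let $t\epsilon_t^2\to\infty$ drive both exponential terms to zero---is precisely what is intended. The Borel--Cantelli remark is a harmless extra.
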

After ensuring that the exploration rate satisfies $t\epsilon_t^2 \to \infty$, we establish the consistency of the online estimator and obtain point estimates. We further explore the trend of the interference effect.

\subsection{Statistical Inference} 
\label{sec:infer}

In this section, we investigate the behavior of the interference action $\kappa_t$, which serves as the cornerstone of our statistical inference results and is essential for establishing the asymptotic normality of the online estimators. We first present sufficient conditions that guarantee the convergence of the optimal interference action $\kappa_t^\ast$ under the working model when the true parameter $\bm{\theta}$ is given (Theorem~\ref{theor:kappaast}). We then turn to the practical setting, establishing additional conditions that guarantee the convergence of the empirical interference action $\kappa_t$ under the estimated working model and our proposed method FRONT, thereby characterizing its limiting behavior as a combination of the optimal interference limit and the exploration rate limit (Theorem~\ref{theor:kappat}), with practical examples satisfying these conditions provided in Corollary~\ref{theor:gt}.
Building on these results, we further derive the asymptotic normality of the estimated parameters (Theorem~\ref{theor:infer}).

\subsubsection{Convergence of $\kappa_t$}
\label{sec:kappat}
To investigate the behavior of the interference action $\kappa_t$, we first establish the conditions for the convergence of the optimal interference action $\kappa_t^\ast$ induced by the optimal policy $\pi^\ast(\bm{x})$ as defined in Proposition~\ref{prop:optimal}, under the working model with true parameters as follows.

\begin{theorem}[{\emph{Convergence of the Optimal Interference Action}}]
\label{theor:kappaast}
Suppose that   \\$
\sum_{s=1}^{t-1} w_{ts}^2 \,\text{Var}(\pi^\ast(\bm{x}_s)) \;\to\; 0 
 \text{ as } t \to \infty, $
and  
$
\sum_{s=1}^{t-1} w_{ts}\,\mathbb{E}(\pi^\ast(\bm{x}_s)) \;\to\; \kappa_\infty^\ast < \infty$, where 
\[
\kappa_\infty^\ast
= \lim_{t \to \infty} \sum_{s=1}^{t-1} w_{ts}\,
\Pr\!\Bigg(
    \phi(\bm{x})^{\top} (\bm{\beta}_1-\bm{\beta}_0)
    + \Bigg[\sum_{k=s+1}^\infty w_{ks}\Bigg]\gamma
    \ge 0 \Bigg).
\]
Then
\[
\kappa_t^\ast \;\stackrel{p}{\longrightarrow}\; \kappa_\infty^\ast,
\]
where $\kappa_t^\ast = \sum_{s=1}^{t-1} w_{ts} a_s^\ast$ denotes the optimal interference action at time $t$.
\end{theorem}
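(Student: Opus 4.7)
The plan is to view $\kappa_t^\ast = \sum_{s=1}^{t-1} w_{ts}\, a_s^\ast$ as a weighted sum of independent Bernoulli variables and apply a second-moment (Chebyshev) argument. The key preliminary observation is that, under the oracle optimal policy $\pi^\ast(\bm{x}_s) = \mathbb{I}\{\phi(\bm{x}_s)^{\top}(\bm{\beta}_1 - \bm{\beta}_0) + \zeta_s \gamma \ge 0\}$ from Proposition~\ref{prop:optimal}, each $a_s^\ast$ depends only on the i.i.d.\ context $\bm{x}_s$ and on deterministic quantities, namely the true parameters $\bm{\beta}_0, \bm{\beta}_1, \gamma$ and the fixed scalar $\zeta_s = \sum_{k=s+1}^\infty w_{ks}$. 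Hence $\{a_s^\ast\}_{s \ge 1}$ is a sequence of mutually independent (but generally not identically distributed) Bernoulli variables with means $\mathbb{E}[\pi^\ast(\bm{x}_s)] = \Pr(\phi(\bm{x})^{\top}(\bm{\beta}_1 - \bm{\beta}_0) + \zeta_s \gamma \ge 0)$.

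The first step is to compute the first two moments of $\kappa_t^\ast$. By independence,
\[
\mathbb{E}[\kappa_t^\ast] = \sum_{s=1}^{t-1} w_{ts}\,\mathbb{E}[\pi^\ast(\bm{x}_s)], \qquad \mathrm{Var}(\kappa_t^\ast) = \sum_{s=1}^{t-1} w_{ts}^2\,\mathrm{Var}(\pi^\ast(\bm{x}_s)).
\]
The two hypotheses of the theorem then read precisely as $\mathbb{E}[\kappa_t^\ast] \to \kappa_\infty^\ast$ and $\mathrm{Var}(\kappa_t^\ast) \to 0$ as $t \to \infty$.

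The second step is to apply Chebyshev's inequality: for any $\delta > 0$,
\[
\Pr\!\left(\bigl|\kappa_t^\ast - \mathbb{E}[\kappa_t^\ast]\bigr| > \delta\right) \;\le\; \frac{\mathrm{Var}(\kappa_t^\ast)}{\delta^2} \;\longrightarrow\; 0,
\]
so $\kappa_t^\ast - \mathbb{E}[\kappa_t^\ast] \stackrel{p}{\longrightarrow} 0$. Combining this with the deterministic limit $\mathbb{E}[\kappa_t^\ast] \to \kappa_\infty^\ast$ through the triangle inequality yields the desired $\kappa_t^\ast \stackrel{p}{\longrightarrow} \kappa_\infty^\ast$.

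The argument amounts to a weak law of large numbers for weighted sums of independent bounded variables, so there is no serious technical obstacle. The most delicate point worth flagging is the independence assertion: one has to verify that $\zeta_s$ is a genuine constant, depending only on the prescribed weight array, rather than a random quantity. This holds here precisely because we are analyzing the \emph{oracle} $\kappa_t^\ast$ with the true $\bm{\theta}$ plugged in. In the empirical counterpart $\kappa_t$ studied in Theorem~\ref{theor:kappat}, this independence is broken by plug-in estimators $\widehat{\bm{\theta}}_{t-1}$ and the $\epsilon$-Greedy randomization, which is exactly why that result requires a substantially more intricate analysis rather than the one-line Chebyshev argument sufficing here.
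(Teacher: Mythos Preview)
Your proposal is correct and follows essentially the same second-moment/Chebyshev argument as the paper: decompose $\kappa_t^\ast - \kappa_\infty^\ast$ into a zero-mean stochastic part with vanishing variance $\sum_{s} w_{ts}^2\,\mathrm{Var}(a_s^\ast)$ and a deterministic bias $\sum_{s} w_{ts}\,\mathbb{E}(a_s^\ast) - \kappa_\infty^\ast \to 0$, then conclude via Chebyshev. Your explicit justification of independence (that $\zeta_s$ is a deterministic function of the weight array, so $a_s^\ast$ depends only on the i.i.d.\ $\bm{x}_s$) is a point the paper's proof uses implicitly but does not spell out.
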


Theorem~\ref{theor:kappaast} establishes that, in the optimal setting—where the ground truth parameter $\bm{\theta}$ is given—if the weighted expectation of past actions converges to a constant while the corresponding variance vanishes, then $\kappa_t^\ast$ converges in probability. Moreover, the theorem provides sufficient conditions on the weight array $\left\{w_{ts}\right\}_{t=1,,s < t}^\infty$ that guarantee such convergence. The detailed proof is presented in Appendix~\ref{app:ISO}.

Building on this theoretical result, we next derive the additional conditions under which $\kappa_t$ converges in the practical case, under our proposed FRONT method.

\begin{theorem}[{\emph{Convergence of $\kappa_t$}}]
\label{theor:kappat}
Suppose Assumptions~\ref{ass:bound} and~\ref{ass:mineigen} hold, together with the conditions in Corollary~\ref{coro:consistency} and Theorem~\ref{theor:kappaast}. 
Let $\mathcal{F}_t$ denote the set of force pulls executed up to time $t$. 
Further assume that, as $t \to \infty$: (i) $\epsilon_t \to \epsilon_\infty$; (ii) $\sum_{s=1}^{t-1} w_{ts}^2 \to 0$; (iii) $w_{ts} \to 0$ for each fixed $s$; and (iv) $\sum_{s \in \mathcal{F}_t}^{t-1} w_{ts}a_s \to 0$. Then
\[
\kappa_t \stackrel{p}{\longrightarrow} 
\kappa_\infty 
= \left(1-\epsilon_\infty\right)\kappa_\infty^\ast + \frac{\epsilon_\infty}{2}.
\]
If $\epsilon_\infty = 0$, condition (ii) can be replaced with the weaker requirement $\sum_{s=1}^{t-1} w_{ts}^2 \,\mathbb{E}(\pi^\ast(\bm{x_s}))\to 0$ as $t \to \infty$
\end{theorem}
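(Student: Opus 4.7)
The plan is to decompose $\kappa_t=\sum_{s=1}^{t-1}w_{ts}a_s$ into four pieces that are each analyzed separately: (a) the force-pull contribution, (b) a martingale innovation around the conditional $\epsilon$-greedy probability $\widehat{\pi}_s(\bm{x}_s)$, (c) a bias term comparing the estimated greedy rule $\mathbb{I}\{\widehat{g}_s(\bm{x}_s)\ge 0\}$ to the oracle $\pi^\ast(\bm{x}_s)$, and (d) the oracle $\epsilon$-greedy expectation, which is controlled directly via Theorem~\ref{theor:kappaast}. Setting $\widehat{g}_s(\bm{x})=\phi(\bm{x})^\top(\widehat{\bm{\beta}}_{1,s-1}-\widehat{\bm{\beta}}_{0,s-1})+\zeta_s\widehat{\gamma}_{s-1}$, so that $\mathbb{E}(a_s\mid \mathcal{H}_{s-1},\bm{x}_s)=\widehat{\pi}_s(\bm{x}_s)=(1-\epsilon_s)\mathbb{I}\{\widehat{g}_s(\bm{x}_s)\ge 0\}+\epsilon_s/2$, I would write
\begin{equation*}
\kappa_t \;=\; \underbrace{\sum_{s\in\mathcal{F}_t}w_{ts}a_s}_{A_t} \;+\; \underbrace{\sum_{s\notin\mathcal{F}_t}w_{ts}\bigl(a_s-\widehat{\pi}_s(\bm{x}_s)\bigr)}_{M_t} \;+\; \underbrace{\sum_{s\notin\mathcal{F}_t}w_{ts}(1-\epsilon_s)\bigl[\mathbb{I}\{\widehat{g}_s(\bm{x}_s)\ge 0\}-\pi^\ast(\bm{x}_s)\bigr]}_{B_t} \;+\; R_t,
\end{equation*}
with $R_t=\sum_{s\notin\mathcal{F}_t}w_{ts}\bigl[(1-\epsilon_s)\pi^\ast(\bm{x}_s)+\epsilon_s/2\bigr]$. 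Assumption (iv) gives $A_t\to 0$ immediately.

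Next I would handle $M_t$ and $B_t$. The summands of $M_t$ form a martingale difference array with respect to $\{(\mathcal{H}_{s-1},\bm{x}_s)\}$ with conditional variance $\widehat{\pi}_s(1-\widehat{\pi}_s)\le 1/4$, so $\mathbb{E}(M_t^2)\le \tfrac{1}{4}\sum_s w_{ts}^2\to 0$ under (ii). In the $\epsilon_\infty=0$ case I would instead use the sharper pointwise identity $\widehat{\pi}_s(1-\widehat{\pi}_s)=(\epsilon_s/2)(1-\epsilon_s/2)=O(\epsilon_s)$, combined with $\sum_s w_{ts}^2\le L_z^2$ from Assumption~\ref{ass:bound} and a cutoff at some fixed $S$ (so $\sum_{s\le S}w_{ts}^2\to 0$ by condition (iii)) to conclude that $\sum_s w_{ts}^2\epsilon_s\to 0$ whenever $\epsilon_s\to 0$, giving $M_t\to 0$ without requiring (ii). For $B_t$ I would apply the standard sign-flip bound
\begin{equation*}
\Pr\bigl(\mathbb{I}\{\widehat{g}_s(\bm{x}_s)\ge 0\}\neq \pi^\ast(\bm{x}_s)\bigr)\;\le\;\Pr\bigl(|g^\ast(\bm{x}_s)|\le \delta\bigr)+\Pr\bigl(|\widehat{g}_s(\bm{x}_s)-g^\ast(\bm{x}_s)|>\delta\bigr),
\end{equation*}
letting $s\to\infty$ first to kill the second probability via Corollary~\ref{coro:consistency} (using $\|\phi(\bm{x}_s)\|_\infty\le L_x$ and the boundedness of $\zeta_s$ to translate parameter consistency into uniform control of $\widehat{g}_s-g^\ast$), and then $\delta\downarrow 0$ using a margin/no-atom condition on the distribution of $g^\ast(\bm{x})$ near the origin. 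Passing from this pointwise bound to $\mathbb{E}|B_t|\to 0$ is then a cutoff at $s\le S$ absorbed by (iii).

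Finally, I would write $R_t=(1-\epsilon_\infty)\sum_{s\notin\mathcal{F}_t}w_{ts}\pi^\ast(\bm{x}_s)+\tfrac{\epsilon_\infty}{2}\sum_{s\notin\mathcal{F}_t}w_{ts}+\mathrm{err}_t$, where $\mathrm{err}_t$ collects residuals proportional to $(\epsilon_s-\epsilon_\infty)$ and is $o(1)$ by the same cutoff argument since $\epsilon_s\to \epsilon_\infty$ and $\sum_s|w_{ts}|\le L_z$. The first piece converges in probability to $(1-\epsilon_\infty)\kappa_\infty^\ast$ by direct application of Theorem~\ref{theor:kappaast}, noting that its variance condition $\sum_s w_{ts}^2\,\mathrm{Var}(\pi^\ast(\bm{x}_s))\to 0$ is implied by (ii) in general and by the weakened hypothesis $\sum_s w_{ts}^2\,\mathbb{E}(\pi^\ast(\bm{x}_s))\to 0$ in the $\epsilon_\infty=0$ case via $\mathrm{Var}(\pi^\ast)\le \mathbb{E}(\pi^\ast)$. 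The second piece tends to $\epsilon_\infty/2$ under the normalization $\sum_s w_{ts}\to 1$ implicit in the weight specifications of Corollary~\ref{theor:gt}. Summing $A_t+M_t+B_t+R_t$ then yields the claimed limit. The main obstacle is the bias term $B_t$: converting the pointwise consistency of $\widehat{\bm{\theta}}_{s-1}$ into a weighted, integrated bound in the adaptive setting requires simultaneously invoking the margin condition on $g^\ast$ and condition (iii) to truncate the ``pre-consistency'' phase, while the martingale bookkeeping for $M_t$ must be arranged so that the sequential dependence (through $\mathcal{H}_{s-1}$) and the spatial dependence (through the weights $w_{ts}$) do not interact in an uncontrolled way.
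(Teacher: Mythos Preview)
Your approach is correct and takes a genuinely different route from the paper's. The paper works directly with the second moment $\mathbb{E}(\kappa_t-\kappa_\infty)^2$: it expands $\kappa_t^2$ into diagonal and off-diagonal pieces, and the bulk of the effort goes into controlling the cross terms $\sum_{i<j}w_{ti}w_{tj}\,\mathbb{E}(a_ia_j)$ by iterated conditioning---writing $\mathbb{E}(a_ia_j\mid\mathcal{H}_{j-1})=a_i\bigl[(1-\epsilon_j)q(\zeta_j,\widehat{\bm\beta}_{j-1},\widehat{\gamma}_{j-1})+\epsilon_j/2\bigr]$ and then chasing this back toward the product of marginals via repeated Toeplitz-type lemmas. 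Your additive decomposition $\kappa_t=A_t+M_t+B_t+R_t$ sidesteps that bookkeeping: the martingale-difference structure of $a_s-\widehat{\pi}_s(\bm{x}_s)$ makes the off-diagonal terms in $\mathbb{E}(M_t^2)$ vanish automatically, reducing the variance control to the diagonal $\sum_s w_{ts}^2$. For the bias piece $B_t$ you use a pointwise sign-flip bound, whereas the paper integrates over $\bm{x}$ first and invokes the Continuous Mapping Theorem on the map $\theta\mapsto q(\zeta_s,\theta)=\Pr(\phi(\bm{x})^\top\beta+\zeta_s\gamma\ge 0)$; both arguments require the same implicit no-atom regularity at the decision boundary, and both rely on condition (iii) together with the uniform bound $\sum_s|w_{ts}|\le L_z$ to convert ``$\to 0$ as $s\to\infty$'' into ``weighted sum $\to 0$.''

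Your observation that, when $\epsilon_\infty=0$, the identity $\widehat{\pi}_s(1-\widehat{\pi}_s)=(\epsilon_s/2)(1-\epsilon_s/2)=O(\epsilon_s)$ kills $M_t$ without any form of condition (ii) is sharp; the weakened hypothesis $\sum_s w_{ts}^2\,\mathbb{E}(\pi^\ast(\bm{x}_s))\to 0$ is then only needed to feed Theorem~\ref{theor:kappaast} inside your $R_t$ term, exactly as in the paper. Two small care points. First, both the paper and your sketch tacitly rely on $\sum_s w_{ts}\to 1$ for the $\epsilon_\infty/2$ contribution---never stated, but satisfied in the running examples of Corollary~\ref{theor:gt}. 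Second, condition (iv) bounds $\sum_{s\in\mathcal F_t}w_{ts}a_s$, not $\sum_{s\in\mathcal F_t}|w_{ts}|$; so when you restrict $B_t$ and $R_t$ to $s\notin\mathcal F_t$ you should either argue separately that the force-pull indices contribute negligibly to those pieces, or, as the paper does, simply declare that force pulls do not affect the limit and analyze the non-force-pull dynamics throughout.
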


Theorem~\ref{theor:kappat} provides sufficient conditions for the convergence of $\kappa_t$. These conditions can be interpreted as follows.  
(i) The exploration rate should converge to a constant, typically set to zero in practice to avoid excessive regret.  
(ii) For each specific individual, the sum of the squared interference weights assigned to previous individuals should converge to zero.  
(iii) For each specific individual, the weight assigned to subsequent individual should vanish as time increases.  
(iv) Force pulls are assumed not to affect the convergence of $\kappa_t$, which we provide empirical verification in simulations in Section~\ref{sec:simulation}.
See Appendix~\ref{app:kappa} for proofs.

Here we provide a specification of the weight array $\left\{w_{ts}\right\}_{t=1,,s<t}^\infty$ that meets conditions (ii) and (iii) of Theorem~\ref{theor:kappat}. In this setting, equal weights are assigned to the most recent $g(t)$ individuals, so that the $t$-th individual is influenced by the actions of individuals from $t-g(t)$ to $t-1$. The neighborhood size $g(t)$ increases with $t$ and satisfies $g(t)\to \infty$ as $t \to \infty$. Consequently, the weights take the form  
\[
w_{ts} = \frac{\mathbb{I}(t-g(t) \le s \le t-1)}{g(t)}.
\]
To illustrate, consider the hotel pricing example. Each new price offered to a customer depends on past prices. As time progresses, the hotel accumulates more pricing data from earlier customers, thereby expanding the neighborhood of past actions that influence subsequent decisions. The following theorem shows that both linear and square-root growth of the interference scale satisfy the assumptions imposed on the array $\{w_{ts}\}_{t=1,\,s<t}^\infty$ in Theorems~\ref{theor:kappaast} and~\ref{theor:kappat}. 

\begin{corollary}[{\emph{Growing interference scale with equal weights}}]
\label{theor:gt}
In the equal weights setting, if the interference scale satisfies
(i) $g(t) = \lfloor \rho t \rfloor$, or  
(ii) $g(t) = \lfloor \rho \sqrt{t} \rfloor$,  
then, under the same assumptions as in Theorem~\ref{theor:kappat}, all the conditions in Theorem~\ref{theor:kappaast} together with conditions (ii) and (iii) of Theorem~\ref{theor:kappat} are satisfied. If, in addition, conditions (i) and (iv) of Theorem~\ref{theor:kappat} also hold, then $\kappa_t$ converges.  
Here $0 < \rho < 1$, and $\lfloor \cdot \rfloor$ denotes the the greatest integer less than or equal to its argument.
\end{corollary}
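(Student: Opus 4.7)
\textbf{Proof proposal for Corollary~\ref{theor:gt}.} The plan is to substitute the equal-weight form $w_{ts} = \mathbb{I}(t-g(t) \le s \le t-1)/g(t)$ into each condition of Theorems~\ref{theor:kappaast} and~\ref{theor:kappat} that must be verified, and then check them by direct calculation. The work splits cleanly into (a) conditions that only require the qualitative facts $g(t)\to\infty$ and $t-g(t)\to\infty$, which are immediate for both the linear and square-root scalings, and (b) the convergence $\sum_{s=1}^{t-1} w_{ts}\,\mathbb{E}(\pi^\ast(\bm{x}_s)) \to \kappa_\infty^\ast$, which is the substantive step and requires an asymptotic analysis of the forward-weight sum $\zeta_s = \sum_{k=s+1}^\infty w_{ks}$.

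First I would dispatch the easy conditions. Under equal weights, $\sum_{s=1}^{t-1} w_{ts}^2 = 1/g(t)$, so both $g(t)=\lfloor\rho t\rfloor\to\infty$ and $g(t)=\lfloor\rho\sqrt{t}\rfloor\to\infty$ give $\sum w_{ts}^2 \to 0$, which is condition (ii) of Theorem~\ref{theor:kappat}; combining with $\mathrm{Var}(\pi^\ast(\bm{x}_s)) \le 1/4$ then yields the first hypothesis of Theorem~\ref{theor:kappaast}. For condition (iii), note $t-g(t) \ge (1-\rho)t - 1 \to \infty$ in case (i) and $t - g(t) \ge t - \rho\sqrt{t} \to \infty$ in case (ii), so for any fixed $s$ we eventually have $s < t-g(t)$ and thus $w_{ts}=0$.

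The main obstacle is analyzing $\zeta_s$ and deducing convergence of $p_s := \Pr(\phi(\bm{x})^\top(\bm{\beta}_1-\bm{\beta}_0) + \zeta_s\gamma \ge 0)$. The indices $k > s$ with $w_{ks} \ne 0$ are exactly $\{k : k - g(k) \le s\}$. In case (i), sandwiching $\lfloor\rho k\rfloor$ between $\rho k - 1$ and $\rho k$ shows this set has the form $\{s+1,\ldots,\lfloor s/(1-\rho)\rfloor + O(1)\}$, and a Riemann-sum comparison with $\int_s^{s/(1-\rho)} (\rho u)^{-1}\,du$ yields
\[
\zeta_s \;=\; \sum_{k=s+1}^{\lfloor s/(1-\rho)\rfloor + O(1)} \frac{1}{\lfloor\rho k\rfloor} \;\longrightarrow\; \frac{1}{\rho}\,\ln\!\frac{1}{1-\rho}.
\]
In case (ii), solving the quadratic $k - \rho\sqrt{k} \le s$ gives $k \le s + \rho\sqrt{s} + O(1)$, and comparing with $\int_s^{s+\rho\sqrt{s}} (\rho\sqrt{u})^{-1}\,du = (2/\rho)\bigl(\sqrt{s+\rho\sqrt{s}} - \sqrt{s}\bigr)$ together with the Taylor expansion $\sqrt{s+\rho\sqrt{s}} - \sqrt{s} \to \rho/2$ yields $\zeta_s \to 1$. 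The one nontrivial piece of bookkeeping is controlling the rounding introduced by the floors in $g(t)$ and $g(k)$, which I would absorb into the $O(1)$ error terms and show they contribute only a vanishing correction to the integral comparison.

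Once $\zeta_s \to \zeta_\infty$ is in hand, continuity of the distribution of $\phi(\bm{x})^\top(\bm{\beta}_1-\bm{\beta}_0)$ gives $p_s \to p_\infty := \Pr(\phi(\bm{x})^\top(\bm{\beta}_1-\bm{\beta}_0) + \zeta_\infty\gamma \ge 0)$, and since $t-g(t)\to\infty$ a standard running-average argument shows $(1/g(t))\sum_{s=t-g(t)}^{t-1} p_s \to p_\infty$, establishing the second hypothesis of Theorem~\ref{theor:kappaast} with $\kappa_\infty^\ast = p_\infty$. Assembling the four verified conditions and invoking Theorem~\ref{theor:kappat} under the extra hypotheses (i) and (iv) then delivers $\kappa_t \stackrel{p}{\longrightarrow} (1-\epsilon_\infty)\kappa_\infty^\ast + \epsilon_\infty/2$, completing the proof of the corollary.
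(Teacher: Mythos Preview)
Your proposal is correct and follows essentially the same route as the paper's own proof: the paper dispatches the variance and fixed-$s$ conditions via $g(t)\to\infty$ and $t-g(t)\to\infty$, then establishes $\zeta_s\to\zeta_\infty$ by sandwiching the sum $\sum_{k\in\mathcal{A}_s}1/g(k)$ between two integrals and applying the squeeze theorem, obtaining the identical limits $\tfrac{1}{\rho}\ln\tfrac{1}{1-\rho}$ and $1$ in cases (i) and (ii) respectively. Your explicit mention of needing continuity of the law of $\phi(\bm{x})^\top(\bm{\beta}_1-\bm{\beta}_0)$ to pass $\zeta_s\to\zeta_\infty$ through to $p_s\to p_\infty$ is in fact slightly more careful than the paper, which glosses over this step.
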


Case (i) corresponds to a linearly growing neighborhood, and case (ii) corresponds to a square-root growth rate. Both scenarios demonstrate that the neighborhood can expand while assigning progressively smaller equal weights over time, guarantee the convergence of $\kappa_t$. The detailed proofs are provided in Appendix~\ref{app:gt}.

\subsubsection{Asymptotic Normality}
\label{sec:normal}
As established in Theorem~\ref{theor:kappat}, the interference action $\kappa_t$ converges to a constant $\kappa_{\infty}$. We now turn to the statistical inference of the estimated parameters online. 
Towards valid inference of the parameter estimators, the convergence of the conditional variance requires both the stability of the interference action $\kappa_t=\sum_{s=1}^{t-1} w_{ts} a_s$ and the convergence of the parameter of the interference effect on subsequent outcome $\zeta_t = \sum_{s=t+1}^{\infty} w_{st}$. The asymptotic normality of the estimators then follows from the Martingale Central Limit Theorem \citep{hall2014martingale}, with detailed proofs provided in Appendix~\ref{app:infer}. Furthermore, the convergence of $\zeta_t$ holds in both cases discussed in Corollary~\ref{theor:gt}, with the corresponding proofs given in Appendix~\ref{app:gt}.

\begin{theorem}[{\emph{Asymptotic normality for the online estimator}}]
\label{theor:infer} 
Suppose that the conditions in Theorem~\ref{theor:kappat} are satisfied, and assume that $\zeta_\infty = \lim_{t \to \infty}\zeta_t$ exists with $\kappa_{\infty} \neq 0$. Then
\[
\sqrt{t}\, (\widehat{\bm{\theta}}_{t}-\bm{\theta}) =\sqrt{t}
\begin{pmatrix}
        \widehat{\bm{\beta}}_{0,t}-\bm{\beta}_0 \\
        \widehat{\bm{\beta}}_{1,t}-\bm{\beta}_1 \\
        \widehat{\gamma}_t-\gamma 
    \end{pmatrix}
\;\stackrel{D}{\longrightarrow}\;
\mathcal{N}_d(\bm{0},\, S), 
\]
where
\[
S= \left(G_0+G_1\right)^{-1} \left(\sigma_0^2 G_0+\sigma_1^2G_1 \right)\left(G_0+G_1\right)^{-1},
\]
and
\[
G_0 \;=\; 
    \begin{pmatrix}
        \Sigma_{11}^0 & \bm{0} & \Sigma_{12}^0 \\
        \bm{0} & \bm{0} & \bm{0} \\
        \Sigma_{21}^0 & \bm{0} & \Sigma_{22}^0
    \end{pmatrix}, 
    \qquad
G_1 \;=\; 
    \begin{pmatrix}
        \bm{0} & \bm{0} & \bm{0} \\
        \bm{0} & \Sigma_{11}^1 & \Sigma_{12}^1 \\
        \bm{0} & \Sigma_{21}^1 & \Sigma_{22}^1
    \end{pmatrix}.
\]
The block matrices are given by
\begin{equation}
\label{equ:si}
\begin{aligned}
\Sigma_{11}^i &= \frac{\epsilon_{\infty}}{2} \int \phi(\bm{x})\phi(\bm{x})^{\top} \,d\mathcal{P}_X 
    + (1-\epsilon_{\infty}) \int_{\mathcal{X}_i} \phi(\bm{x})\phi(\bm{x})^{\top} \,d\mathcal{P}_X, \\
\Sigma_{12}^i &= (\Sigma_{21}^i)^\top = \kappa_{\infty} \left[ \frac{\epsilon_{\infty}}{2} \int \phi(\bm{x})^{\top}\,d\mathcal{P}_X
    + (1-\epsilon_{\infty}) \int_{\mathcal{X}_i} \phi(\bm{x})^{\top} \,d\mathcal{P}_X \right], \\
\Sigma_{22}^i &= \kappa_{\infty}^2 \left[ \frac{\epsilon_{\infty}}{2} 
    + \left(1-\frac{\epsilon_{\infty}}{2}\right) \Pr(\bm{x}\in\mathcal{X}_i) \right],
\end{aligned}
\end{equation}
with $\mathcal{X}_1 = \Big\{\bm{x} : \phi(\bm{x})^{\top}(\bm{\beta}_1-\bm{\beta}_0) + \zeta_\infty \gamma \ge 0 \Big\}$ and $\mathcal{X}_0 = \Big\{\bm{x} : \phi(\bm{x})^{\top}(\bm{\beta}_1-\bm{\beta}_0) + \zeta_\infty \gamma < 0 \Big\}$.
A consistent estimator for $S$ is given by 
\begin{equation}
\label{equ:siest}
    \left( \frac{1}{t} \sum_{s=1}^t \bm{z}_s \bm{z}_s^{\top} \right)^{-1} \left(\frac{1}{t}\sum_{s=1}^t \bm{z}_s \bm{z}_s^{\top} \widehat{e}_s^2 \right) \left( \frac{1}{t} \sum_{s=1}^t \bm{z}_s \bm{z}_s^{\top} \right)^{-1},
\end{equation}
where $\widehat{e}_s = y_s - \bm{z}_s^{\top} \widehat{\bm{\theta}}_{t}$.
\end{theorem}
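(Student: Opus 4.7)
My plan is to use the standard OLS sandwich decomposition
\begin{equation*}
\sqrt{t}\,(\widehat{\bm{\theta}}_t-\bm{\theta})
= \left(\frac{1}{t}\sum_{s=1}^{t} \bm{z}_s\bm{z}_s^{\top}\right)^{-1}
  \left(\frac{1}{\sqrt{t}}\sum_{s=1}^{t} \bm{z}_s e_s\right),
\end{equation*}
treating the Gram factor by a martingale law of large numbers and the score factor by a martingale central limit theorem, and combining them via Slutsky. The three ingredients that make everything work together are parameter consistency $\widehat{\bm{\theta}}_{s-1}\to\bm{\theta}$ from Corollary~\ref{coro:consistency}, the stabilization $\kappa_s\xrightarrow{p}\kappa_\infty$ from Theorem~\ref{theor:kappat}, and the assumed limit $\zeta_s\to\zeta_\infty$. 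Jointly these pin down the limiting conditional law of $(a_s,\kappa_s)$ given $\bm{x}_s$: FRONT samples $a_s\sim\mathrm{Bernoulli}(\widehat{\pi}_s(\bm{x}_s))$ with $\widehat{\pi}_s(\bm{x}_s)\to (1-\epsilon_\infty)\mathbb{I}\{\bm{x}_s\in\mathcal{X}_1\}+\epsilon_\infty/2$, and $\kappa_s$ may be replaced by $\kappa_\infty$ up to a negligible error.

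For the Gram factor, I first observe that binary $a_s$ forces $a_s(1-a_s)=0$, so the off-diagonal blocks of $\bm{z}_s\bm{z}_s^{\top}$ linking the $\bm{\beta}_0$ and $\bm{\beta}_1$ coordinates vanish identically; this is the source of the zero blocks in $G_0$ and $G_1$. The remaining blocks split as $(1-a_s)$-weighted terms contributing to $G_0$ and $a_s$-weighted terms contributing to $G_1$. Taking conditional expectations of $\bm{z}_s\bm{z}_s^{\top}$ given $\mathcal{F}_{s-1}=\sigma(\mathcal{H}_{s-1},\bm{x}_s)$ and plugging in $\kappa_s\to\kappa_\infty$ and the limiting $\widehat{\pi}_s(\bm{x}_s)$ yields blockwise limits matching Equation~\eqref{equ:si}; the residual fluctuations $\bm{z}_s\bm{z}_s^{\top}-\mathbb{E}[\bm{z}_s\bm{z}_s^{\top}\mid\mathcal{F}_{s-1}]$ form bounded martingale differences by Assumption~\ref{ass:bound}, so a martingale LLN gives $t^{-1}\sum_s\bm{z}_s\bm{z}_s^{\top}\xrightarrow{p} G_0+G_1$, with invertibility of the limit guaranteed by Assumption~\ref{ass:mineigen}. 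For the score factor, let $\mathcal{G}_s=\sigma(\mathcal{H}_{s-1},\bm{x}_s,a_s)$; since $e_s$ is mean-zero, $\sigma$-subgaussian, and conditionally independent of $\mathcal{G}_s$ given $a_s$, the sequence $\{\bm{z}_s e_s\}$ is a $\mathcal{G}_s$-martingale difference sequence with conditional covariance $\sigma_{a_s}^2\bm{z}_s\bm{z}_s^{\top}$. Repeating the block-splitting argument yields $t^{-1}\sum_s\sigma_{a_s}^2\bm{z}_s\bm{z}_s^{\top}\xrightarrow{p}\sigma_0^2 G_0+\sigma_1^2 G_1$, and the conditional Lindeberg condition is immediate from boundedness of $\bm{z}_s$ and the subgaussian tails of $e_s$. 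The martingale CLT \citep{hall2014martingale} then delivers $t^{-1/2}\sum_s\bm{z}_s e_s\xrightarrow{D}\mathcal{N}_d(\bm{0},\sigma_0^2 G_0+\sigma_1^2 G_1)$, and Slutsky produces the stated normal limit. Consistency of the sandwich estimator in \eqref{equ:siest} is then obtained by expanding $\widehat{e}_s=e_s+\bm{z}_s^{\top}(\bm{\theta}-\widehat{\bm{\theta}}_t)$, with the cross and quadratic remainder terms vanishing by the boundedness in Assumption~\ref{ass:bound} and the just-established consistency of $\widehat{\bm{\theta}}_t$.

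The main obstacle will be making the convergence of $\widehat{\pi}_s(\bm{x})$ strong enough to pass through the integrals defining the $\Sigma_{jk}^i$ blocks in Equation~\eqref{equ:si}. The indicator $\mathbb{I}\{\phi(\bm{x})^{\top}(\widehat{\bm{\beta}}_{1,s-1}-\widehat{\bm{\beta}}_{0,s-1})+\zeta_s\widehat{\gamma}_{s-1}\ge 0\}$ is discontinuous in the plugged-in estimator, so pointwise convergence of $\widehat{\pi}_s(\bm{x})$ fails on the decision boundary $\{\bm{x}:\phi(\bm{x})^{\top}(\bm{\beta}_1-\bm{\beta}_0)+\zeta_\infty\gamma=0\}$. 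Handling this cleanly requires either a mild margin or non-atomicity condition on $\mathcal{P}_X$ at the boundary, or a dominated-convergence argument that uses the tail rate in Theorem~\ref{theor:tail} to show the probability mass of $\bm{x}$ within an $o(1)$-neighborhood of the boundary is asymptotically negligible, and then an upgrade from conditional to unconditional convergence via uniform integrability. All of the other steps reduce to bounded martingale arguments that should go through routinely under Assumptions~\ref{ass:bound} and~\ref{ass:mineigen} and the stability statements in Theorem~\ref{theor:kappat}.
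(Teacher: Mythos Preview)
Your proposal is correct and follows essentially the same route as the paper: the sandwich decomposition, a martingale CLT (via Cram\'er--Wold) for the score term with Lindeberg verified through boundedness of $\bm{z}_s$ and subgaussian tails of $e_s$, a martingale LLN for the Gram matrix via conditional means plus bounded martingale differences, Slutsky to combine, and then the expansion $\widehat{e}_s=e_s+\bm{z}_s^{\top}(\bm{\theta}-\widehat{\bm{\theta}}_t)$ for consistency of the sandwich estimator. The boundary-discontinuity obstacle you single out is handled in the paper simply by treating the integrated indicator $p_i(\widehat{\bm{\beta}}_{s-1},\widehat{\gamma}_{s-1},\kappa_s,\zeta_s,\epsilon_s)$ as a continuous function of its arguments and invoking the Continuous Mapping Theorem directly---implicitly using that the decision boundary $\{\bm{x}:\phi(\bm{x})^{\top}(\bm{\beta}_1-\bm{\beta}_0)+\zeta_\infty\gamma=0\}$ has $\mathcal{P}_X$-measure zero---so your caution there is warranted but does not require anything beyond what the paper assumes.
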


As illustrated in Equation~\eqref{equ:si}, $G_0$ corresponds to the component associated with action~0, while $G_1$ corresponds to that associated with action~1. Moreover, the asymptotic variance of online estimator is related to $\kappa_\infty$, $\zeta_\infty$ and $\epsilon_\infty$. 
Theorem~\ref{theor:kappat} establishes that $\kappa_t$ converges under suitable conditions. However, if $\kappa_t$ fails to stabilize and continues to fluctuate, the conditions required for applying the Martingale Central Limit Theorem break down. In such cases, the dependence structure induced by interference becomes substantially more complex, invalidating the standard asymptotic arguments. Deriving valid inference in this setting would likely require new technical tools to handle the layered dependencies arising from both sequential decision-making and evolving interference. Developing such methods lies beyond the scope of this paper and remains an important direction for future research.

\subsection{Regret Analysis}
\label{sec:regret}

In this section, we discuss two definitions of cumulative regret, as illustrated in Figure~\ref{fig:regret}, both of which are reasonable and depend on different perspectives of understanding the online decision making under interference. We then derive the regret bounds corresponding to each definition.

\begin{figure}
    \centering
    \includegraphics[width=1\linewidth]{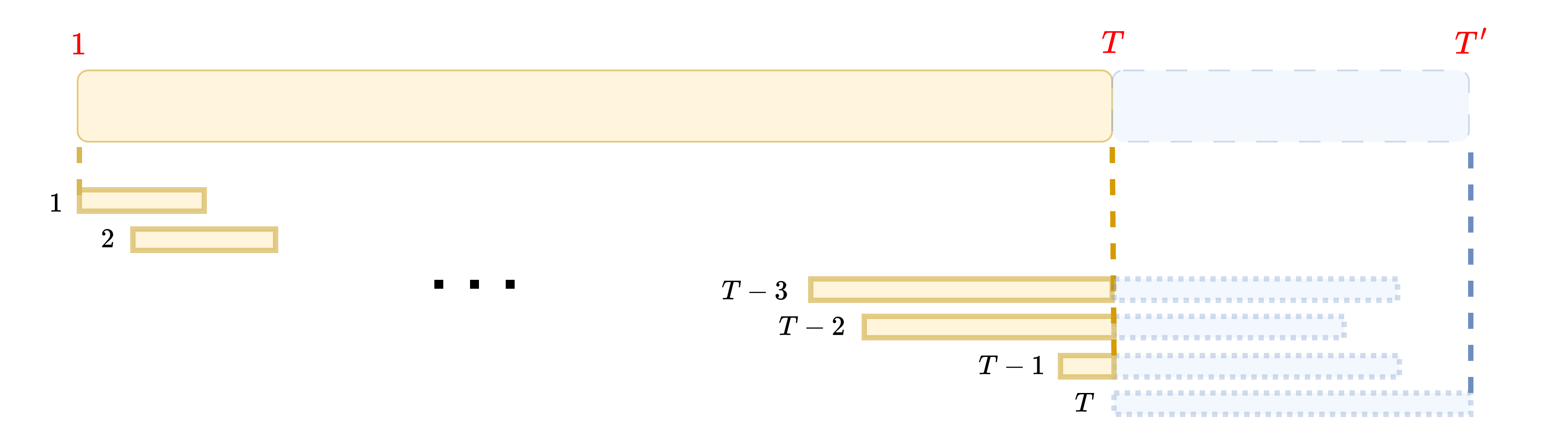}
    \caption{\textbf{ISO over time.} Each small rectangle represents the ISO generated by an individual’s action. Yellow rectangles indicate interference realized up to time $T$, while blue rectangles represent ongoing interference extending beyond $T$. By $T^\prime$, all interference caused before $T$ has manifested. Note that $T^\prime$ may be infinite.}
    \label{fig:regret}
\end{figure}

In online decision-making with interference, the regret caused by the $t$-th individual's action consists of two components: the regret from their own reward $y_t$ and the regret due to interference on future outcomes. ISO, defined as $\left[\sum_{s=t+1}^{\infty} w_{st}\right]\gamma$ in Proposition~\ref{prop:optimal}, represents the additional term in our optimal policy compared to classical contextual bandits that ignore interference. It captures the long-term influence of the current $t$-th action on subsequent individuals. Figure~\ref{fig:regret} illustrates ISO generated until $T$, with the last dotted section representing the regret on future rewards. Our two definitions of regret depend on the perspective—immediate or consequential—determined by whether the regret manifests by time $T$ or is caused before time $T$ but may not have appeared yet. 

We first define the cumulative regret $R_1(T)$ based on the first idea: whether the regret manifests by time $T$, as 
\begin{equation}
\label{equ:regret1}
    R_1(T) = \sum_{t=1}^T \mathbb{E} \left\{\mu(\bm{x}_t, \kappa_t^\ast, a_t^{\ast}) - \mu(\bm{x}_t,\kappa_t,a_t) \right\}.
\end{equation} 
This regret represents the difference in the conditional mean outcome between the optimal policy and FRONT, accumulated up to time $T$. As illustrated in Figure~\ref{fig:regret}, it corresponds to the yellow region. $R_1(T)$ can be expressed as 
\begin{align*}
    R_1(T) = \sum_{t=1}^{T} \mathbb{E} \left|\phi(\bm{x}_t)^{\top} (\bm{\beta}_1 - \bm{\beta}_0) + \left[\sum_{s=t+1}^T w_{st} \right] \gamma \right| \, \mathbb{I} \left\{ a_t \neq \mathbb{I} \left(\phi(\bm{x}_t)^{\top} (\bm{\beta}_1 - \bm{\beta}_0) + \zeta_t \gamma \ge 0\right) \right\}.
\end{align*}
The term $\sum_{s=t+1}^T w_{st}$ is truncated at time $T$, and bounded given the existence of $\zeta_\infty$. 

We then consider the cumulative regret $R_2(T)$ based on another idea—caused before time $T$, represented by the entire rectangle, including the dotted blue section in Figure~\ref{fig:regret}, as
\begin{equation}
\label{equ:regret2}
    R_2(T) = \sum_{t=1}^T \mathbb{E} \left\{\nu(\bm{x}_t, a_t^{\ast}) - \nu(\bm{x}_t,a_t) \right\},
\end{equation}
where \( \nu(\bm{x}_t, a_t) \) represents the total contribution to the long-term cumulative outcome from the \( t \)-th individual, i.e.,
$\nu(\bm{x}_t,a_t) = \bm{x}_t^{\top}\bm{\beta}_0 + a_t\bm{x}_t^{\top}(\bm{\beta}_1-\bm{\beta}_0)+ a_t \zeta_t \gamma.$ 
$R_2(T)$ is defined as the difference in the long-term outcome caused up to action $a_T$, between the optimal policy and FRONT. Then we can write $R_2(T)$ as
\begin{equation*}
    R_2(T) = \sum_{t=1}^{T} \mathbb{E} \left|\phi(\bm{x}_t)^{\top} (\bm{\beta}_1 - \bm{\beta}_0) + \zeta_t \gamma \right| \, \mathbb{I} \left\{ a_t \neq \mathbb{I} \left(\phi(\bm{x}_t)^{\top} (\bm{\beta}_1 - \bm{\beta}_0) + \zeta_t \gamma \ge 0 \right) \right\}.
\end{equation*}
 $R_1(T)$ and $R_2(T)$ can be analyzed in the same way as the regret in classical contextual bandits \citep{chen2021}, by decomposing it into two parts: exploration and estimation. The regret due to exploration can be directly bounded by $\mathcal{O}(\sum_{t=1}^T \epsilon_t)$. The estimation-induced regret can be derived based on the indicator function $\mathbb{I} \left\{ a_t \neq \mathbb{I} \left(\phi(\bm{x}_t)^{\top} (\bm{\beta}_1 - \bm{\beta}_0) + \zeta_t \gamma \ge 0\right) \right\}$ shared in both $R_1(T)$ and $R_2(T)$. To this end, we require a commonly assumed technical condition as follows.

\begin{assumption}[{\emph{Margin}}]
\label{ass:margin}
    For any $\bm{x} \sim \mathcal{P}_X$ and any $t \ge 1$, there exists a positive constant $M$, such that
\begin{equation*}
\Pr\left(0<\left|\phi(\bm{x})^{\top} (\bm{\beta}_1 - \bm{\beta}_0) + \zeta_t \gamma \right| < l\right) \le Ml, \quad \forall l>0.
\end{equation*}
\end{assumption}

Assumption~\ref{ass:margin} is a margin condition commonly proposed in the contextual bandit literature \citep{goldenshluger2013,chambaz2017,bastani2020,chen2021,shen2024,xu2024} to establish the regret bound. We make a minor modification to adapt it to our setting, as the boundaries between different policies are defined by $\phi(\bm{x})^{\top} (\bm{\beta}_1 - \bm{\beta}_0) + \zeta_t \gamma = 0$. Assumption~\ref{ass:margin} restricts the probability of encountering covariates close to these boundaries, thereby reducing the variance caused by incorrect decisions.  

Under Assumption~\ref{ass:margin} together with the convergence rate of estimated parameters, we establish bounds for $R_1(T)$ and $R_2(T)$ under two scenarios:  
(i) when the conditions of Theorem~\ref{theor:infer} hold, ensuring valid statistical inference and $\sqrt{T}$-consistency of the estimators (Section~\ref{sec:infer_regret}); and
(ii) when the inference is not available (Section~\ref{sec:noinfer_regret}).

\subsubsection{With Inference}
\label{sec:infer_regret}
Under the conditions of Theorem~\ref{theor:infer}—such as those satisfied by the two specifications of $g(t)$ in Corollary~\ref{theor:gt}—the parameter estimators are asymptotically normal and therefore $\sqrt{T}$-consistent. Building on this result, we establish the corresponding regret bound in the following theorem, with detailed proofs provided in Appendix~\ref{app:regret}.

\begin{theorem}[Regret bound]
\label{theor:regret}
Suppose the conditions in Theorem~\ref{theor:infer} and Assumption~\ref{ass:margin} are satisfied. When applying FRONT, the regrets defined by Equation~\eqref{equ:regret1} and Equation~\eqref{equ:regret2} can be bounded respectively as follows:
\begin{align*}
    R_1(T)  = \mathcal{O}_p \left(\sum_{t=1}^T \epsilon_t + T^{\frac{3}{4}}+|\mathcal{F}_t| \right), \quad 
    R_2(T)  = \mathcal{O}_p \left(\sum_{t=1}^T \epsilon_t + |\mathcal{F}_t| \right),
\end{align*}
where $|\mathcal{F}_T|$ denotes the cardinality of the force-pull set $\mathcal{F}_T$.
\end{theorem}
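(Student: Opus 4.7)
The plan is to decompose $R_j(T)$ for $j\in\{1,2\}$ by partitioning the horizon $[T]$ according to the type of action executed at each step: warm-up (Step (2) of Algorithm~\ref{alg}), force-pull (Steps (7a)/(7b)), pure exploration (the $\epsilon_t/2$ branch of $\widehat{\pi}_t$), and pure exploitation (the plug-in branch). The warm-up contributes at most $T_0=\mathcal{O}(1)$ steps and the force-pulls contribute $|\mathcal{F}_T|$ steps; since $|\phi(\bm{x}_t)^{\top}(\bm{\beta}_1-\bm{\beta}_0)+\zeta_t\gamma|$ and $|\phi(\bm{x}_t)^{\top}(\bm{\beta}_1-\bm{\beta}_0)+c_t^{(T)}\gamma|$ are uniformly bounded by Assumption~\ref{ass:bound} (writing $c_t^{(T)}:=\sum_{s=t+1}^{T}w_{st}$), these two categories together contribute $\mathcal{O}(|\mathcal{F}_T|)$ to each of $R_1(T)$ and $R_2(T)$. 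A pure-exploration step is triggered with probability $\epsilon_t$ and disagrees with the optimal with probability $\epsilon_t/2$, so this category contributes $\mathcal{O}(\sum_{t=1}^T\epsilon_t)$.

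The exploitation contribution is the delicate piece. On any exploit step where $a_t\neq a_t^{\ast}$, the signs of the estimated and optimal criteria disagree, so by the triangle inequality
\[
\bigl|\phi(\bm{x}_t)^{\top}(\bm{\beta}_1-\bm{\beta}_0)+\zeta_t\gamma\bigr|
\;\le\; C_1\,\bigl\|\widehat{\bm{\theta}}_{t-1}-\bm{\theta}\bigr\|_1,
\]
for a constant $C_1$ depending only on $L_x$ and $\sup_t|\zeta_t|$. I would then apply the margin bound in Assumption~\ref{ass:margin} with the random threshold $l=C_1\|\widehat{\bm{\theta}}_{t-1}-\bm{\theta}\|_1$, using $\mathbb{E}[|Z|\,\mathbb{I}\{0<|Z|\le l\}]\le l\Pr(|Z|\le l)\le Ml^2$ together with the $\sqrt{t}$-consistency of Theorem~\ref{theor:infer} (and the exponential tail from Theorem~\ref{theor:tail} to control the second moment of $\|\widehat{\bm{\theta}}_{t-1}-\bm{\theta}\|_1$) to bound the per-step exploitation regret of $R_2$ by $\mathcal{O}_p(t^{-1})$. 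Summing yields $\mathcal{O}_p(\log T)$. Because the conditions of Theorem~\ref{theor:infer} enforce $t\epsilon_t^2\to\infty$, we have $\epsilon_t\gg t^{-1/2}$ and so $\sum_{t=1}^T\epsilon_t\gg\sqrt{T}\gg\log T$, which absorbs the $\log T$ and gives $R_2(T)=\mathcal{O}_p(\sum_{t=1}^T\epsilon_t+|\mathcal{F}_T|)$.

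For $R_1(T)$ the coefficient of the disagreement indicator is instead $\phi(\bm{x}_t)^{\top}(\bm{\beta}_1-\bm{\beta}_0)+c_t^{(T)}\gamma$. Writing $c_t^{(T)}-\zeta_t=-\sum_{s=T+1}^{\infty}w_{st}$ and applying the triangle inequality splits the regret into an $R_2$-style term and a truncation term:
\[
R_1(T)\;\le\;R_2(T)\;+\;|\gamma|\sum_{t=1}^{T}\bigl|c_t^{(T)}-\zeta_t\bigr|\,\Pr(a_t\neq a_t^{\ast}).
\]
I would bound $\Pr(a_t\neq a_t^{\ast})\le \epsilon_t/2+\mathcal{O}_p(t^{-1/2})$ (margin applied directly to the random threshold) and then Cauchy--Schwarz the truncation term,
\[
\sum_t\bigl|c_t^{(T)}-\zeta_t\bigr|\Pr(a_t\neq a_t^{\ast})
\;\le\;
\Bigl(\sum_t\bigl(c_t^{(T)}-\zeta_t\bigr)^2\Bigr)^{1/2}\Bigl(\sum_t\Pr(a_t\neq a_t^{\ast})^2\Bigr)^{1/2}.
\]
The first factor is $\mathcal{O}(T^{1/2})$ since $|c_t^{(T)}-\zeta_t|\le 2L_z$ by Assumption~\ref{ass:bound}, and the second is $\mathcal{O}_p(T^{1/4})$ since $\Pr^2\le\Pr$ and $\sum_t\Pr(a_t\neq a_t^{\ast})=\mathcal{O}_p(\sum_t\epsilon_t+\sqrt{T})$. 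Their product is $\mathcal{O}_p(T^{3/4})$, producing the claimed bound $R_1(T)=\mathcal{O}_p(\sum_{t=1}^T\epsilon_t+T^{3/4}+|\mathcal{F}_T|)$.

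The hard part will be the $T^{3/4}$ term in $R_1$. The truncation gap $|c_t^{(T)}-\zeta_t|$ is $\Theta(1)$ precisely at time steps $t$ near $T$, where the interference into the future $s>T$ has not yet manifested; these are the steps where the estimator error $\mathcal{O}_p(t^{-1/2})$ is sharpest but still nontrivial, so a naive bound of the truncation term by $\sum_t|c_t^{(T)}-\zeta_t|\cdot\Pr(a_t\neq a_t^{\ast})$ would lose the right rate. The Cauchy--Schwarz split is what balances the $T^{1/2}$ budget of the truncation error against the $T^{1/4}$ budget of the mistake probabilities. A secondary technical issue is translating the in-probability $\mathcal{O}_p$ rate from Theorem~\ref{theor:infer} into summable moment bounds; this is exactly what the sub-exponential tail of Theorem~\ref{theor:tail} delivers.
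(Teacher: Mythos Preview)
Your decomposition and the $R_2(T)$ argument are sound and essentially match the paper's (which defers to \cite{chen2021} for $R_2$). The triangle-inequality split of $R_1$ into $R_2$ plus a truncation term is also exactly what the paper does. The gap is in your Cauchy--Schwarz treatment of the truncation term.

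You claim the second factor $\bigl(\sum_t\Pr(a_t\neq a_t^{\ast})^2\bigr)^{1/2}$ is $\mathcal{O}_p(T^{1/4})$ via $\Pr^2\le\Pr$ and $\sum_t\Pr=\mathcal{O}_p(\sum_t\epsilon_t+\sqrt{T})$. But under the standing condition $t\epsilon_t^2\to\infty$ you have $\sum_t\epsilon_t\gg\sqrt{T}$, so the bound you actually get is $\bigl(\sum_t\epsilon_t\bigr)^{1/2}$, not $T^{1/4}$. The product then becomes $T^{1/2}\bigl(\sum_t\epsilon_t\bigr)^{1/2}$, which for $\epsilon_t\asymp t^{-\alpha}$ is $T^{1-\alpha/2}$; for any $0<\alpha<1/2$ this strictly exceeds $\max(\sum_t\epsilon_t,T^{3/4})=\max(T^{1-\alpha},T^{3/4})$, so the stated bound fails as written.

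The paper avoids this by \emph{not} using Cauchy--Schwarz at all: it bounds $|c_t^{(T)}-\zeta_t|\le C_0$ crudely, reducing the truncation term to $C_0\sum_t\Pr(a_t\neq a_t^\ast)$, and then separates exploration from estimation \emph{before} bounding. The exploration part is $\mathcal{O}(\sum_t\epsilon_t)$; for the estimation part $\tau_2=\sum_t\Pr(\text{plug-in disagrees with optimal})$, the paper uses a \emph{fixed} margin threshold $T^{-1/4}$: points with $|\phi(\bm{x})^\top\bm{\beta}+\zeta_t\gamma|<T^{-1/4}$ contribute at most $MT^{-1/4}$ each by Assumption~\ref{ass:margin} (total $MT^{3/4}$), while farther-away points are controlled by $\mathbb{I}\{\cdot\}\le T^{1/4}|\text{error}|$ and $\sqrt{t}$-consistency, yielding $T^{1/4}\cdot\mathcal{O}_p(\sqrt{T})=\mathcal{O}_p(T^{3/4})$. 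Your argument is easily repaired by either (i) splitting off the $\epsilon_t$ part of $\Pr(a_t\neq a_t^\ast)$ before Cauchy--Schwarz, or (ii) dropping Cauchy--Schwarz and bounding $|c_t^{(T)}-\zeta_t|\le C_0$ as the paper does; in fact, applying the margin directly with the random threshold $l=C_1\|\widehat{\bm{\theta}}_{t-1}-\bm{\theta}\|_1$ (as you already do for $R_2$) would give $\tau_2=\mathcal{O}_p(\sqrt{T})$, which is sharper than the paper's $T^{3/4}$ and is absorbed by $\sum_t\epsilon_t$.
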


Under both definitions, the regret exhibits sublinear growth with an appropriate choice of $\epsilon_t$. To guarantee sufficient exploration, Corollary~\ref{coro:consistency} requires $t\epsilon_t^2 \to \infty$ as $t \to \infty$, which further implies $\mathcal{O}\,(\sum_{t=1}^T \epsilon_t ) \ge c \sqrt{t}$ for some positive constant $c$ \citep{chen2021}. In practice, we may choose $\epsilon_t = p \log(t)/\sqrt{t}$, $\;p \log \log(t)/\sqrt{t}$, or $\;p t^{-\alpha}$ with $0\le\alpha<\ 1/2$ to ensure $t\epsilon_t^2 \to \infty$.  
In the simulations reported in Section~\ref{sec:simulation}, we also observe that $|\mathcal{F}_t| = o_p(\sqrt{t})$ by appropriately tuning the parameters $T_0$, $K$, and $L$ in Algorithm~\ref{alg}.

Now suppose the exploration rate satisfies $\epsilon_t = \mathcal{O}(t^{-\alpha})$ with $0 \le \alpha < 1/2$. When $0 \le \alpha \le 1/4$, the term $\sum_{t=1}^T \epsilon_t$ dominates $T^{3/4}$, and thus both $R_1(T)$ and $R_2(T)$ share the same regret bound. When $1/4 < \alpha < 1/2$, the term $T^{3/4}$ dominates, so $R_1(T)$ admits a larger regret bound than $R_2(T)$.  

Both the na\"ive policy and myopic policy ignore ISO, potentially leading to suboptimal decisions at each time step $t$ and resulting in linear regret, as shown in the lower panel of Figure~\ref{fig:reward} (Section~\ref{sec:simulation}). The cumulative average regret $R_1(t)/t$ shown in this figure demonstrates that both policies exhibit regret converging to a non-zero constant, showing a $\mathcal{O}_p(T)$ regret rate.

\subsubsection{Without Inference}
\label{sec:noinfer_regret}
However, the conditions of Theorem~\ref{theor:infer} may not hold in practice, and thus the $\sqrt{T}$-consistency of the parameter estimators may fail. For example, if each individual $t$ is influenced only by its immediate predecessor, i.e., $\kappa_t = a_{t-1}$, then $\kappa_t$ coincides with the $(t-1)$-th action, which takes value $0$ or $1$ with probability given by Equation~\eqref{equ:pit}. In this case, $\kappa_t$ does not converge, so the existence of $\kappa_\infty$ fails, and the asymptotic normality of $\widehat{\bm{\theta}}_{t}$, $i=0,1$, cannot be established. Nevertheless, regret bounds can still be derived using the tail bound in Theorem~\ref{theor:tail}, as stated in the following theorem, with detailed proofs provided in Appendix~\ref{app:regret_N}.

\begin{theorem}[Regret bound without inference]
\label{theor:regret_N}
Suppose the conditions in Corollary~\ref{coro:consistency} and Assumption~\ref{ass:margin} are satisfied. When applying FRONT, the regrets defined by Equation~\eqref{equ:regret1} and Equation~\eqref{equ:regret2} can be bounded respectively as follows:
\begin{align*}
    R_1(T) = \mathcal{O}_p\left(\sum_{t=1}^T \epsilon_t + T^{\frac{3}{4}} \epsilon_T^{-\tfrac{1}{2}}+|\mathcal{F}_T|\right), \quad
    R_2(T) = \mathcal{O}_p\left(\sum_{t=1}^T \epsilon_t + T^{\tfrac{1}{4}} \epsilon_T^{-\tfrac{3}{2}} + |\mathcal{F}_T|\right).
\end{align*}
\end{theorem}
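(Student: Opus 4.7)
The plan is to parallel the with-inference argument in Theorem~\ref{theor:regret} but replace the $\sqrt{T}$-consistency input with the sub-Gaussian tail bound of Theorem~\ref{theor:tail}, then extract moment bounds and combine them with the margin condition via Cauchy--Schwarz. First, I would decompose the regret exactly as in Section~\ref{sec:infer_regret}: a mistake $a_t \neq a_t^{\ast}$ is produced by exactly one of three disjoint mechanisms, namely an $\epsilon$-Greedy exploration step (probability $\epsilon_t/2$ per step), a force pull (at most $|\mathcal{F}_T|$ steps), or a sign flip of the plug-in index $\widehat{\delta}_{t-1}(\bm{x}_t) = \phi(\bm{x}_t)^{\top}(\widehat{\bm{\beta}}_{1,t-1}-\widehat{\bm{\beta}}_{0,t-1}) + \zeta_t \widehat{\gamma}_{t-1}$ relative to the oracle index $\delta_t(\bm{x}_t) = \phi(\bm{x}_t)^{\top}(\bm{\beta}_1-\bm{\beta}_0) + \zeta_t\gamma$. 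The first two mechanisms contribute $\mathcal{O}(\sum_t \epsilon_t)$ and $\mathcal{O}(|\mathcal{F}_T|)$ to both $R_1$ and $R_2$, so the entire analysis reduces to controlling the estimation-induced mistakes.

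Next, I would feed the tail inequality of Theorem~\ref{theor:tail} through an integration-by-tail computation to obtain the moment estimates $\mathbb{E}\,\|\widehat{\bm{\theta}}_{t-1}-\bm{\theta}\|_1 \lesssim (t\epsilon_t^2)^{-1/2}$ and $\mathbb{E}\,\|\widehat{\bm{\theta}}_{t-1}-\bm{\theta}\|_1^2 \lesssim (t\epsilon_t^2)^{-1}$ (up to log factors absorbed into $\mathcal{O}_p$). Using Assumption~\ref{ass:bound} together with the boundedness of $\zeta_t$ (inherited from the convergence hypotheses on the weight array, as in Theorem~\ref{theor:kappat} and Corollary~\ref{theor:gt}), a Lipschitz bound $|\widehat{\delta}_{t-1}-\delta_t| \le C\|\widehat{\bm{\theta}}_{t-1}-\bm{\theta}\|_1$ holds for a deterministic $C$. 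Hence an estimation mistake forces $|\delta_t(\bm{x}_t)| \le C\|\widehat{\bm{\theta}}_{t-1}-\bm{\theta}\|_1$, and conditioning on $\widehat{\bm{\theta}}_{t-1}$ and invoking Assumption~\ref{ass:margin} yields $\Pr(\text{est.~mistake}) \le MC\,\mathbb{E}\,\|\widehat{\bm{\theta}}_{t-1}-\bm{\theta}\|_1$.

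The two regret bounds then separate. For $R_2$, the per-step integrand $|\delta_t|\,\mathbb{I}(\text{est.~mistake})$ is itself controlled by $C\|\widehat{\bm{\theta}}_{t-1}-\bm{\theta}\|_1\,\mathbb{I}(\text{est.~mistake})$; Cauchy--Schwarz gives $\mathbb{E}[\|\widehat{\bm{\theta}}_{t-1}-\bm{\theta}\|_1\,\mathbb{I}(\text{est.~mistake})] \le (\mathbb{E}\,\|\widehat{\bm{\theta}}_{t-1}-\bm{\theta}\|_1^2)^{1/2}(\Pr(\text{est.~mistake}))^{1/2}$, and inserting the probability bound from the previous paragraph together with $\mathbb{E}\,\|\cdot\|_1 \le (\mathbb{E}\,\|\cdot\|_1^2)^{1/2}$ produces a per-step bound of order $(\mathbb{E}\,\|\widehat{\bm{\theta}}_{t-1}-\bm{\theta}\|_1^2)^{3/4} \lesssim (t\epsilon_t^2)^{-3/4}$; summing and using monotonicity of $\epsilon_t$ delivers $R_2 = \mathcal{O}_p(\sum_t \epsilon_t + T^{1/4}\epsilon_T^{-3/2} + |\mathcal{F}_T|)$. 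For $R_1$, the per-step magnitude $|\phi(\bm{x}_t)^{\top}(\bm{\beta}_1-\bm{\beta}_0)+[\sum_{s=t+1}^T w_{st}]\gamma|$ is bounded by a deterministic constant (Assumption~\ref{ass:bound} and the uniform bound on $\zeta_t$), so a single Cauchy--Schwarz step gives the weaker per-step order $(\Pr(\text{est.~mistake}))^{1/2} \lesssim (\mathbb{E}\,\|\widehat{\bm{\theta}}_{t-1}-\bm{\theta}\|_1)^{1/2} \lesssim (t\epsilon_t^2)^{-1/4}$, whose sum is $\mathcal{O}(T^{3/4}\epsilon_T^{-1/2})$.

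The main obstacle will be verifying that the clipping Assumption~\ref{ass:mineigen} remains in force uniformly enough to allow the tail bound to be integrated into the moment bounds $\mathbb{E}\,\|\widehat{\bm{\theta}}_{t-1}-\bm{\theta}\|_1^k$ for $k=1,2$, since the set $\mathcal{F}_T$ and the adaptive history $\mathcal{H}_{t-1}$ both enter the minimum-eigenvalue clipping nontrivially. A related delicate point is that $\mathcal{F}_T$ is random and intertwined with $\widehat{\bm{\theta}}_{t-1}$, so its contribution must be peeled off in the decomposition as an $\mathcal{O}_p(|\mathcal{F}_T|)$ term rather than absorbed into an unconditional expectation. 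Once these two points are handled, the rest reduces to the summation of deterministic decay rates $t^{-a}$ carried out exactly as in Theorem~\ref{theor:regret}.
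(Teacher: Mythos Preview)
Your overall decomposition (exploration, force pulls, estimation-induced sign flips) matches the paper's, and your approach to the estimation piece is correct but takes a genuinely different route. The paper never integrates the tail bound of Theorem~\ref{theor:tail} into moment estimates; instead it reads off $\|\widehat{\bm{\theta}}_t-\bm{\theta}\|_1=\mathcal{O}_p\bigl((t\epsilon_t^2)^{-1/2}\bigr)$ directly and then applies a threshold-splitting (peeling) argument: the mistake event is intersected with $\{|\delta_t(\bm{x})|<T^{-\alpha'}\}$ and its complement, the margin condition controls the near-boundary slice, a Markov-type ratio bound $\mathbb{I}\{|\widehat\delta-\delta|>|\delta|\}\le |\widehat\delta-\delta|/|\delta|$ controls the far slice, and $\alpha'$ is optimized at the end. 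For $R_2$ this produces two pieces of order $T^{1/2-\alpha'}\epsilon_T^{-1}$ and $T^{\alpha'}\epsilon_T^{-2}$, balanced at $\alpha'=\tfrac14-\tfrac{\alpha}{2}$ when $\epsilon_t\asymp t^{-\alpha}$. Your moments-plus-Cauchy--Schwarz route avoids that optimization and is arguably cleaner; the paper's route, by staying entirely within $\mathcal{O}_p$ statements, never has to justify the step you correctly flag as delicate---upgrading the tail inequality to unconditional bounds on $\mathbb{E}\|\widehat{\bm{\theta}}_{t-1}-\bm{\theta}\|_1^k$ while the clipping event and $\mathcal{F}_T$ are random and history-dependent.

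One small oddity in your $R_1$ argument: once the per-step magnitude is bounded by a deterministic constant $C$, you already have $\mathbb{E}\bigl[C\cdot\mathbb{I}(\text{est.\ mistake})\bigr]=C\cdot\Pr(\text{est.\ mistake})\lesssim (t\epsilon_t^2)^{-1/2}$ by conditioning and the margin condition, which sums to $T^{1/2}\epsilon_T^{-1}$---strictly smaller than the stated $T^{3/4}\epsilon_T^{-1/2}$ whenever $t\epsilon_t^2\to\infty$. Your extra Cauchy--Schwarz step weakening this to $(\Pr)^{1/2}$ is valid but superfluous; you are artificially loosening to match the target. The paper obtains the $T^{3/4}\epsilon_T^{-1/2}$ term via the same threshold-splitting device applied to the bare indicator $\mathbb{I}(\text{sign flip})$, so its bound here is likewise not sharp. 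Both arguments establish the theorem as stated.
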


Relative to Theorem~\ref{theor:regret}, the extra term $T^{1/4} \epsilon_T^{-3/2}$ in $R_2(T)$ arises from the estimation error, which may not be dominated by the exploration regret $\sum_{t=1}^T \epsilon_t$ as in Theorem~\ref{theor:regret}, but instead depends on the exploration rate $\epsilon_t$. Moreover, in Theorem~\ref{theor:regret}, $R_1(T)$ includes an additional term of order $T^{3/4}$ compared with $R_2(T)$, whereas here the corresponding additional term becomes $T^{3/4} \epsilon_T^{-1/2}$. This difference arises because, without valid statistical inference, the online estimator in Theorem~\ref{theor:regret_N} fails to achieve $\sqrt{T}$-consistency, yielding only a slower convergence rate for the estimated parameters.

Suppose the exploration rate satisfies $\epsilon_t = \mathcal{O}(t^{-\alpha})$ with $0 \le \alpha < 1/2$. Then $\sum_{t=1}^T \epsilon_t = \mathcal{O}(T^{1-\alpha})$, while $T^{1/4} \epsilon_T^{-3/2} = \mathcal{O}(T^{1/4+3\alpha/2})$, revealing a trade-off between exploration and estimation error, governed by the choice of $\epsilon_t$.
When $0 \le \alpha \le 3/10$, the term $T^{1-\alpha}$ dominates $T^{1/4+3\alpha/2}$, so $R_2(T)$ is bounded by $\mathcal{O}\,(\sum_{t=1}^T \epsilon_t)$ with high probability.
When $3/10 < \alpha < 1/2$, the term $T^{1/4+3\alpha/2}$ dominates, and $R_2(T)$ is characterized by $\mathcal{O}_p \, (T^{1/3} \epsilon_T^{-3/2})$.  
Therefore, the optimal regret rate for $R_2(T)$ is $\mathcal{O}_p(T^{7/10})$, achieved when the exploration rate is set as $\epsilon_t = \mathcal{O}(T^{-3/10})$. Likewise, choosing $\epsilon_t = \mathcal{O}(T^{-1/6})$ yields an optimal regret rate of $\mathcal{O}_p(T^{5/6})$ for $R_1(T)$.

We now compare the optimal regret bound with results from the existing literature \citep[see, e.g.,][]{chen2021,xu2024}. Since our definition of $R_2(T)$ is consistent with the regret notion in classical contextual bandits, we focus on $R_2(T)$ here. The optimal regret bound is $\mathcal{O}_p(\sum_{t=1}^T \epsilon_t)$ under the condition $t \epsilon_t^2 \to \infty$, which reduces to $\widetilde{\mathcal{O}}_p(\sqrt{T})$ when the exploration rate is set as $\epsilon_t = p t^{-1/2} \log t$. 
In classical contextual bandits, regret arises from exploration and estimation error. The latter stabilizes quickly, yielding the well-known $\widetilde{\mathcal{O}}_p(\sqrt{T})$ regret.
In contrast, under interference, when the asymptotic normality of the estimated parameters cannot be established, the convergence rate depends on the clipping rate, resulting in slower decay of the estimation error.
Finally, we emphasize that the regret bounds derived here are upper bounds and therefore conservative, as they apply to the most general cases. Tighter bounds may be achievable if sharper convergence rates for the online estimators can be obtained, which we identify as an important direction for future research.

\section{Simulation Studies}
\label{sec:simulation}

In this section, we evaluate our proposed FRONT policy in comparison to the na\"ive policy and the myopic policy introduced in Section~\ref{subsec:naive}, and validate the statistical inference results for FRONT.

\subsection{Simulation Setup} 

The data are generated as follows. We consider two contextual features and set $d_0 = 3$ to include the intercept. The context is defined as $\bm{x} = (1, x_1, x_2)^{\top}$, where $x_1$ follows a truncated normal distribution with mean zero, variance one, and support $[-10, 10]$, and $x_2 \stackrel{\text{i.i.d.}}{\sim} \text{Uniform}(0, 2)$. 
We then apply a polynomial basis transformation to $\bm{x} \in \mathbb{R}^{d_0}$. 
Specifically, we construct a feature map $\phi(\bm{x})$ that includes all monomials up to total degree 2 \citep{dubey2022scalable}, given by
\[
\phi(\bm{x}) = \left(1, \, x_1, \, x_2, \, x_1^2, \, x_2^2, \, x_1 x_2 \right)^\top 
\in \mathbb{R}^{d_1}, \quad d_1 = 6.
\]
Then, $\bm{z}$ is thirteen-dimensional, including the interference action, which yields $d = 13$. 
The true parameters are set as 
\[
\bm{\theta} = (0.3,\,-0.1,\,0.3,\,0.5,\,-0.2,\,0.7,\,0.2,\,0.7,\,0.1,\,-0.3,\,0.5,\,0.3,\,0.6)^{\top}, 
\]
which yields
\[
\bm{\beta}_1 - \bm{\beta}_0 = (-0.1,\,0.8,\,-0.2,\,-0.8,\,0.7,\,-0.4)^{\top}, 
\quad \gamma = 0.6.
\]
Hence, Assumption~\ref{ass:bound} is satisfied. 
Additionally, the noise term is modeled as $e_t \mid a_t = i \stackrel{\text{i.i.d.}}{\sim} \mathcal{N}(0, 0.1^2)$ for $i \in \{0,1\}$.

For the weight array, we consider the special cases introduced in Corollary~\ref{theor:gt}:  
(1) $g(t) = \lfloor 0.2t \rfloor$,  
(2) $g(t) = \lfloor 5\sqrt{t} \rfloor$,  
and a larger-scale case  
(3) $g(t) = \lfloor 20t^{0.2} \rfloor$.  
Settings (1) and (2) have been verified to satisfy the conditions for the existence of $\zeta_\infty$ in Theorem~\ref{theor:infer}, as shown in Appendix~\ref{app:gt}, whereas case (3) will be verified numerically in Section~\ref{app:zetagplot}.  
For these three scenarios, Theorem~\ref{theor:infer} ensures the asymptotic normality of the online estimator, which we present in Section~\ref{sec:eva}, together with the policy effectiveness in Section~\ref{sec:policy}.

We also consider the case with a fixed-scale interference window under equal weights, which corresponds to one of the settings discussed in Section~\ref{sec:noinfer_regret}. Specifically, we take  
\[
\kappa_t = \frac{1}{N}\sum_{s=t-N}^{t-1} a_s, 
\]
with $N \in \{5,20,50\}$. 
We then demonstrate FRONT outperforms other methods under this setting, and also establish the consistency of the online estimator in Appendix~\ref{app:fix}.

\subsection{Policy Performance}
\label{sec:policy}

To ensure a fair comparison, all policies are implemented under identical configurations, including the same warm-up period, force pulls strategy, and $\epsilon$-Greedy exploration with identical parameters $\epsilon_t$. Specifically, the exploration rate is set as $\epsilon_t = {\log(t)}/{10\sqrt{t}}$ with a clipping parameter $C = 0.01$. The horizon is $T = 10{,}000$, with a warm-up period of $T_0 = 50$ and $L = 8$. We set $\kappa_0 = 0.5$, as we consider the average of the previous $g(t)$ individuals’ actions. The force-pull parameter is set to $K = 50$, and we run 500 replications following Algorithm~\ref{alg}. Under these settings, none of the three policies triggered force pulls in the growing interference scale case. The performance metric is the cumulative average reward, defined as the total reward accumulated up to a given time step divided by the number of steps.

For na\"ive policy, the mis-specification of the mean outcome model and the application of a short-sighted decision-making approach result in greater regret. As for the myopic policy, although it correctly specifies the model and estimates the parameters accurately, it tends to favor action 0 in many cases when $\phi(\bm{x})^\top (\bm{\beta}_1-\bm{\beta}_0)$ is negative. However, with $\gamma > 0$, choosing arm 1 introduces a positive influence on the cumulative reward due to ISO. Moreover, ISO is sufficiently large to alter the sign of the indicator term in FRONT. In this scenario, FRONT often makes different decisions, highlighting the critical role of incorporating ISO into decision-making. 

 \begin{figure}[!t]
    \centering
    \includegraphics[width=1.0\linewidth]{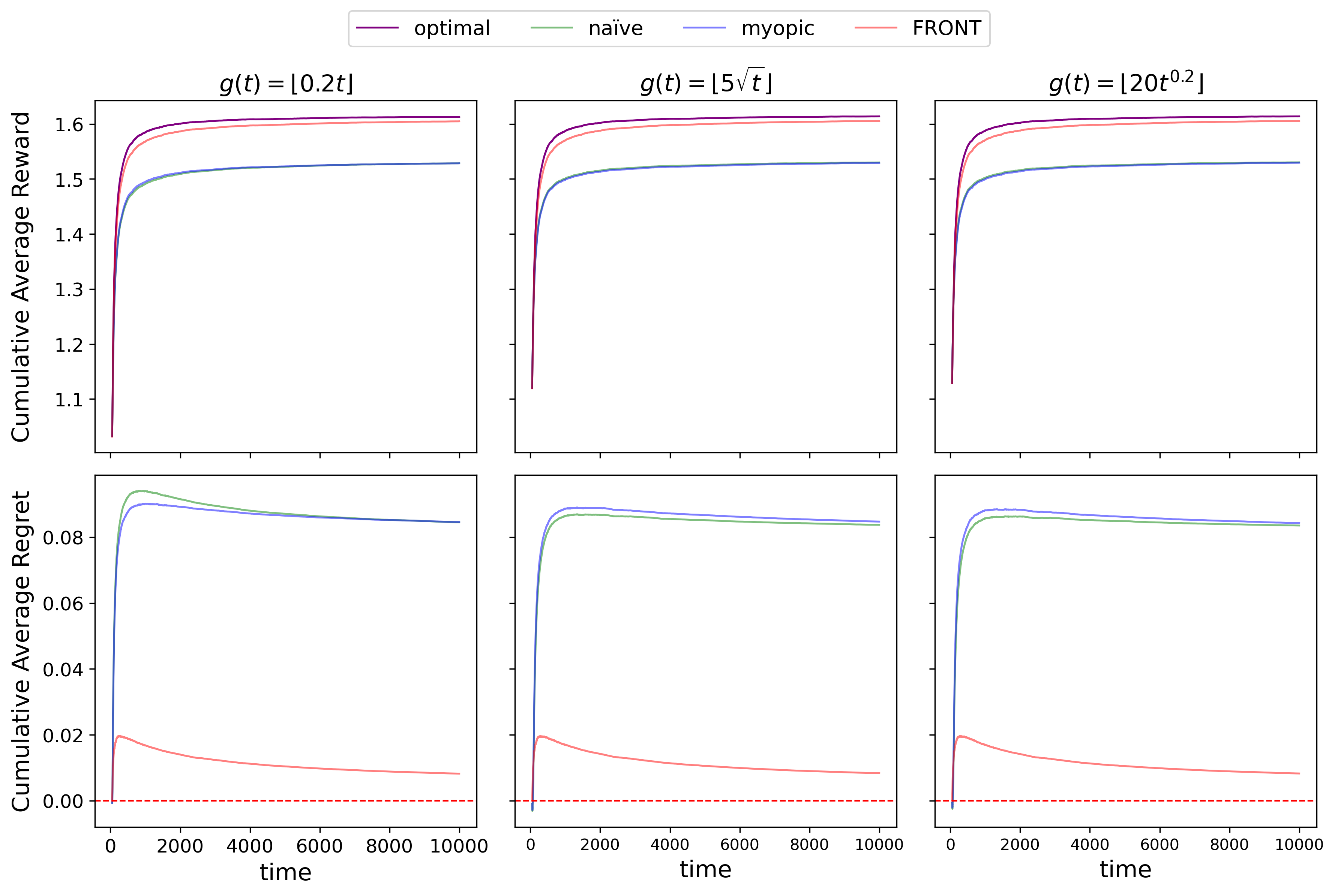}
    \caption{\textbf{Upper panel}: the cumulative average reward over time with growing interference scale $g(t)$. \textbf{Lower panel}: the cumulative average regret, quantifying the gap between the optimal policy and the other three policies (shown in the upper panel), with regret defined by $R_1(T)$ with Equation~\eqref{equ:regret1}.}
    \label{fig:reward}
\end{figure}

As shown in Figure~\ref{fig:reward}, which depicts the cumulative average reward trajectories under the growing interference scale setting, FRONT consistently achieves the lowest regret and steadily converges toward the optimal reward over time. The reward under FRONT converges to that of the optimal policy across all three scenarios of $g(t)$, demonstrating the universality of FRONT.  Additionally, we observe that the trajectories of the na\"ive policy and the myopic policy are close. This occurs because $\mathbb{I}\{\phi(\bm{x}_t)^{\top}(\tilde{\bm{\beta}}_{1,t-1} - \tilde{\bm{\beta}}_{0,t-1}) \ge 0\}$ and $\mathbb{I}\{\phi(\bm{x}_t)^{\top}(\widehat{\bm{\beta}}_{1,t-1} - \widehat{\bm{\beta}}_{0,t-1}) \ge 0\}$ lead to the same action although $\tilde{\bm{\beta}}_{0,t-1}$, $\tilde{\bm{\beta}}_{1,t-1}$ and $\widehat{\bm{\beta}}_{1,t-1}$, $\widehat{\bm{\beta}}_{1,t-1}$ are different estimations.

\subsection{Evaluation of Statistical Properties}
\label{sec:eva}

Next, we conduct experiments to validate the statistical inference results for FRONT. Under the same setting, the reported results for FRONT include the following metrics: the ratio of the average standard error (SE) to the Monte Carlo standard deviation (MCSD) across 500 replications, the average parameters estimation bias computed 500 experiments, and the coverage probability of the 95\% two-sided Wald confidence interval. The confidence intervals are constructed using the parameter estimates and their corresponding estimated standard errors from each experiment, based on Equation~\eqref{equ:siest} in Theorem~\ref{theor:infer}. The results are summarized in Figures~\ref{fig:infer1}, \ref{fig:infer2}, and \ref{fig:infer3}, for three scenarios: (1) $g(t) = \lfloor 0.2t \rfloor$; (2) $g(t) = \lfloor 5\sqrt{t} \rfloor$; (3) $g(t) = \lfloor 20t^{0.2} \rfloor$, respectively.

Figures \ref{fig:infer1}, \ref{fig:infer2}, and \ref{fig:infer3} show that FRONT performs well in all three cases: the ratio between SE and MCSD converges to 1, the average bias approaches 0, and the coverage probabilities for \(\bm{\theta}_0\) and \(\bm{\theta}_1\) are close to the nominal level of 95\%. Although we use force pulls to introduce variability in \(\kappa_t\), it still exhibits less variation compared to other covariates, which impacts the accuracy of the estimation. Moreover, a smaller interference scale $g_t$ yields better performance in parameter inference because greater variation in \(\kappa_t\) increases the variability of samples. Details on the behavior of $\kappa_t$ and the sensitivity analysis are given in Appendix~\ref{app:kappaplot} and Appendix~\ref{app:sensi}, where we present that $\kappa_t$ converges to a constant and that statistical inference holds under all examined settings.

\begin{figure}[!thp]
    \centering
    \vspace{-0.5cm}
    \includegraphics[width=1\linewidth]{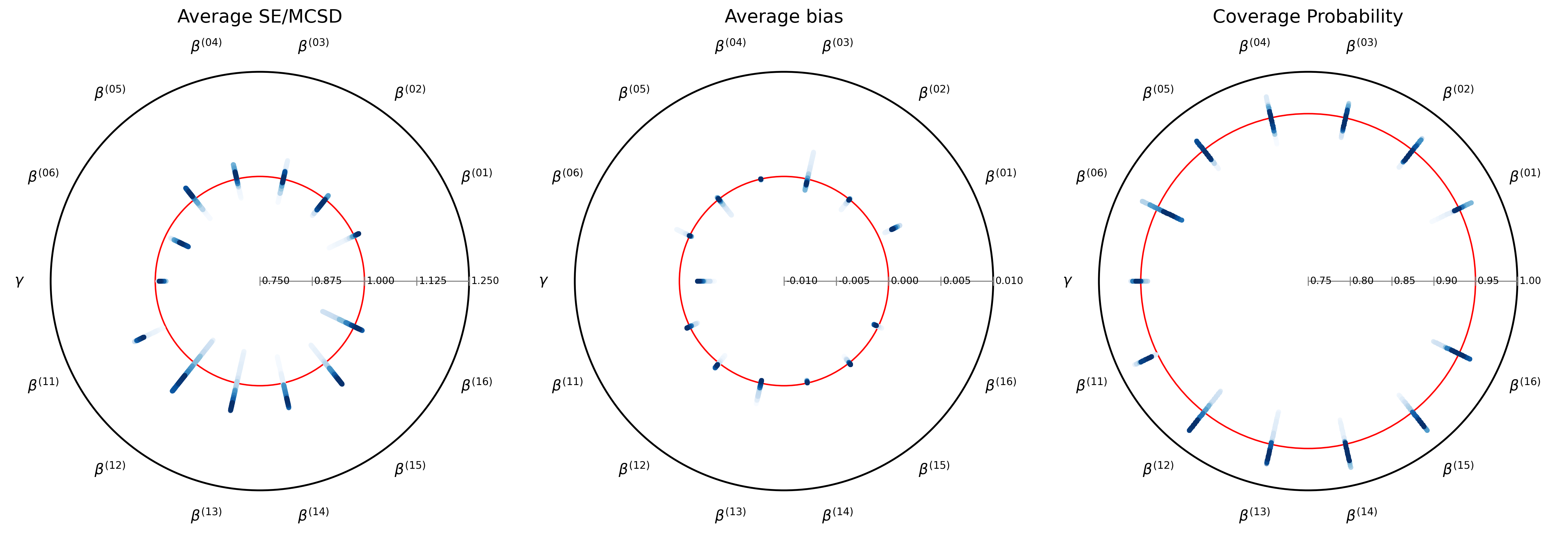}
    \caption{Consistency and asymptotic normality of the online estimator when $g(t) = \lfloor 0.2t \rfloor$ is shown in the figure. \textbf{Each ray represents a parameter, with dot colors transitioning from light to dark as the time steps increase.} 
    \textbf{Left panel:} Average SE/MCSD, with the red circle indicating the nominal level of 1. \textbf{Middle panel:} Average bias, with the red circle indicating the nominal level of 0. \textbf{Right panel:} Coverage probability, with the red circle indicating the nominal level of 95\%.}
    \label{fig:infer1}
\end{figure}

\begin{figure}[!thp]
    \centering
    \includegraphics[width=1\linewidth]{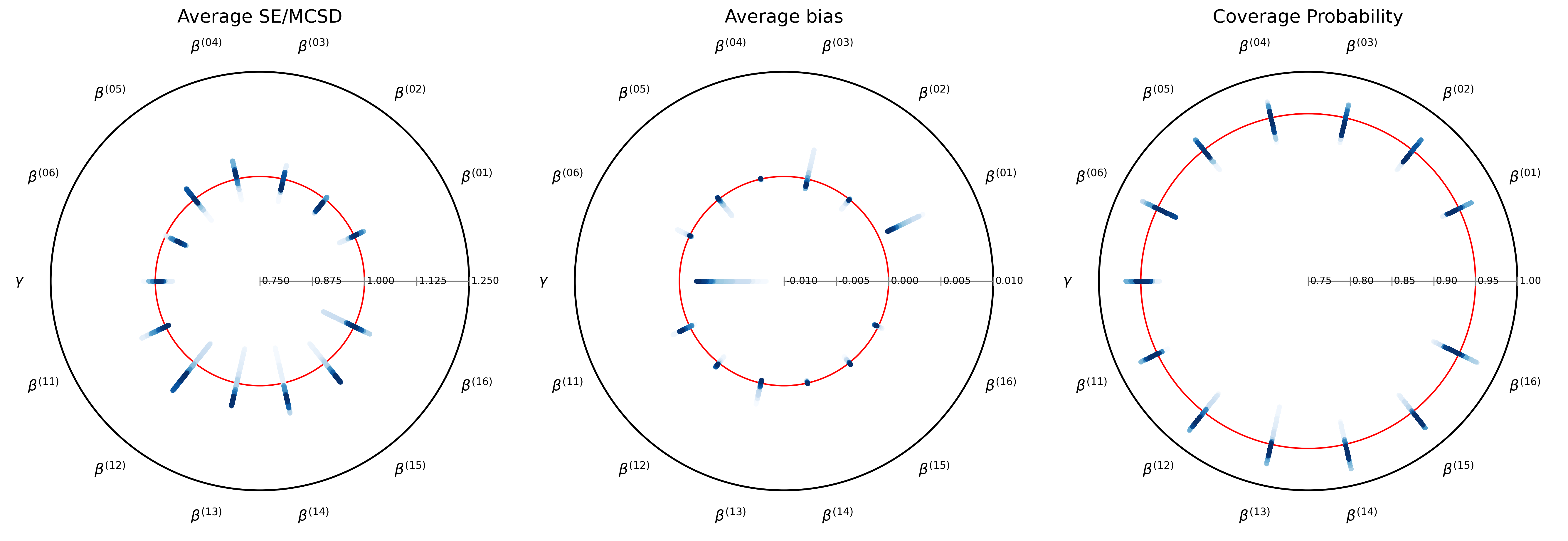}
    \caption{Consistency and asymptotic normality of the online estimator when $g(t)= \lfloor 5\sqrt{t} \rfloor$.}
    \label{fig:infer2}
\end{figure}

\begin{figure}[!thp]
    \centering
    \includegraphics[width=1\linewidth]{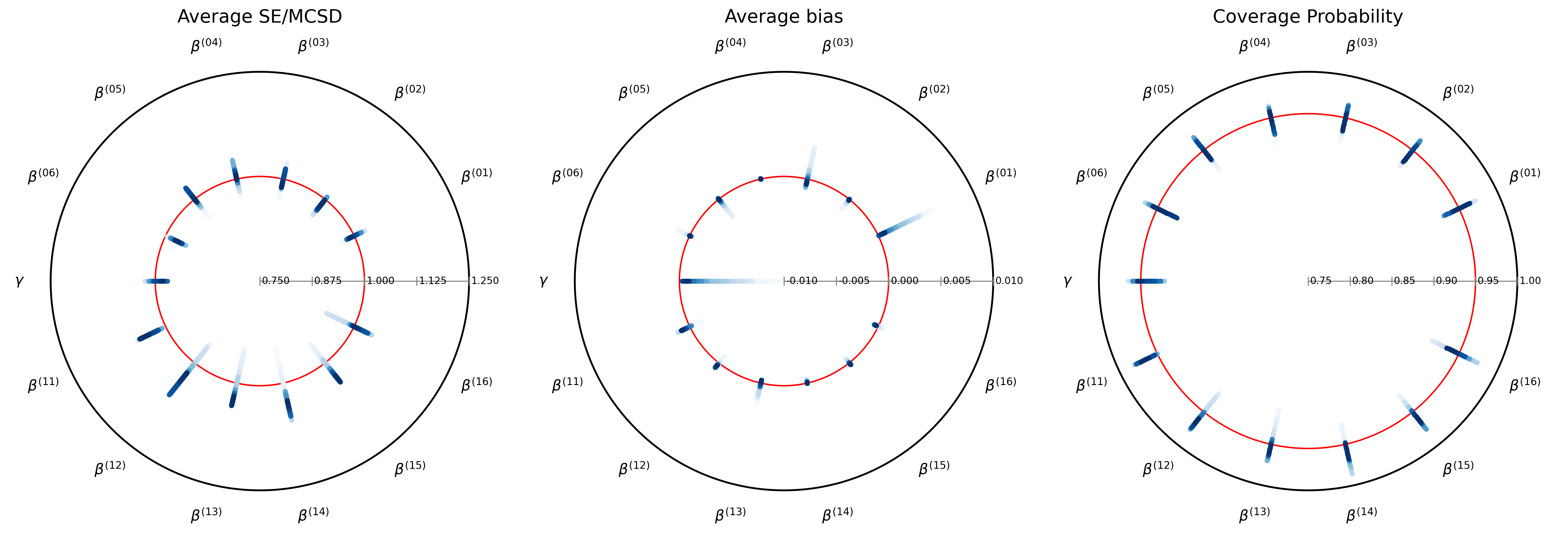}
    \caption{Consistency and asymptotic normality of the online estimator when $g(t) = \lfloor 20t^{0.2} \rfloor$.}
    \label{fig:infer3}
\end{figure}

\newpage 
\section{Real Data Analysis}
\label{sec:real}

In this section, we evaluate the performance of our proposed method, FRONT, using publicly available data from a large urban hotel chain (``Hotel 1") \citep{bodea2009}. This dataset can be used as a benchmark for assessing decision-making processes and choice-based revenue management algorithms. It includes records of room purchases and associated revenues with check-in dates between March 12, 2007, to April 15, 2007. 

We define the action space as binary, where $a_t=1$ represents an increase in the nightly rate decided by the hotel, and $a_t=0$ represents a decrease, compared to the average nightly rate for the specific room type. The lowest nightly rate is assumed as the cost for a given room type. The outcome $y_t$ is defined as the profit from each purchase, calculated by multiplying the difference between the nightly rate and the cost by the number of rooms and the length of stay. Additionally, we consider four context features: room type, advanced purchase dates, party size, and rate type (e.g., including other services, activities, or rewards). Hence we have $d_0=5$ including the intercept.

For this offline dataset, we retain only entries where the product was purchased by the customer and remove any records with invalid data, yielding a total of 1,961 observations. To mitigate the number of dummy variables and the imbalance they may cause, we treat room type and rate type as ordinal variables, ranking them by their corresponding average nightly rates. We further apply a logarithmic transformation to advance purchase dates for rescaling. As in the simulation study, we use a polynomial basis transformation $\phi(\bm{x})$ that includes all monomials up to total degree 2 for $\bm{x} \in \mathbb{R}^{d_0}$. This yields $\phi(\bm{x}) \in \mathbb{R}^{15}$, after incorporating the interference action, the dimension of $\bm{z}$ becomes $d = 31$.

Since the true model is unknown, we assume a linear conditional mean outcome model with an interference scale specified as $g(t) = \lfloor 5\sqrt{t} \rfloor$. This choice reflects the fact that each new price offered to a customer is influenced by previous prices. As time progresses, the hotel accumulates more pricing data from earlier customers, leading to an expanding neighborhood of past actions that affect subsequent decisions. Based on this model, we estimate the coefficients and simulate an online decision-making process. At each time $t$, a customer arrives, and we draw context features $\bm{x}_t$, calculate $\kappa_t$, and select an action $a_t$ based on the given context and updated policy. The profit is then observed from the distribution $\mathcal{N}(\mu(\bm{x}_t, \kappa_t, a_t), 10^2)$, where $\mu(\bm{x}_t, \kappa_t, a_t)$ represents the outcome model we fit earlier. The warm-up period is set to $T_0 = 200$ with $L = 8$, the force pulls parameter is $K = 50$, the clipping parameter is $C = 0.01$, and the exploration rate is $\epsilon_t = \log(t)/10\sqrt{t}$. Suppose the termination time $T$ is 20,000. Our goal is to maximize cumulative profit, and we apply three policies in this online setting: na\"ive, myopic, and FRONT to compare their performances. 

\begin{figure}[!t]
    \centering
    \vspace{-0.5cm}
    \includegraphics[width=0.6\linewidth]{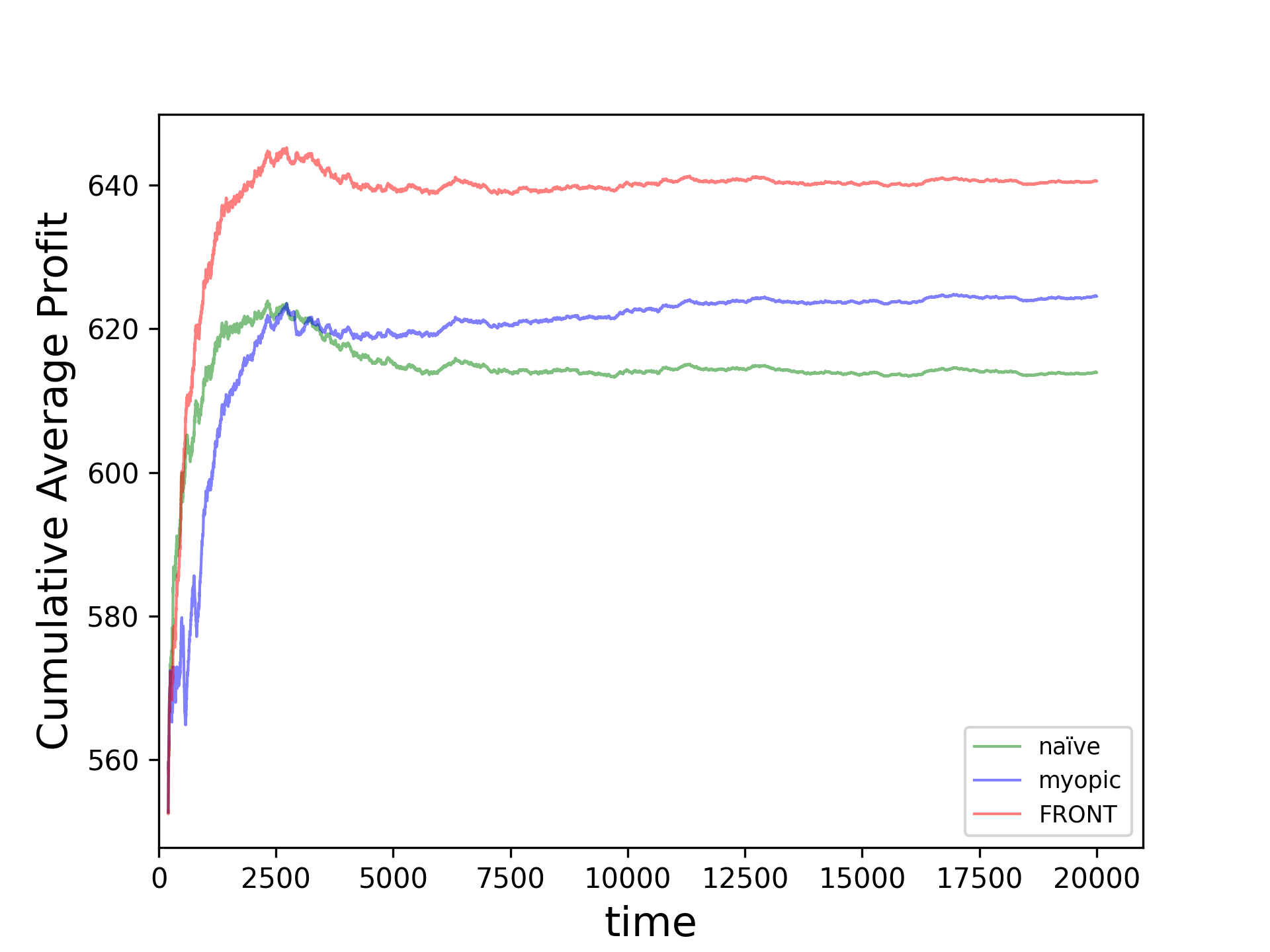}
    \caption{The cumulative average profit achieved by FRONT, na\"ive, and myopic policies on the hotel dataset.}
    \label{fig:hotel}
\end{figure}

Figure~\ref{fig:hotel} presents the cumulative average profit under the three policies. FRONT achieves substantially the highest cumulative profit, maintaining a clear margin over the other two policies. This highlights both the strength of FRONT and the importance of foresighted decision-making. We also observe that the myopic policy outperforms the na\"ive method, which is consistent with intuition. The initial intersection between the blue and green curves may be attributed to the force pulls triggered fewer than five times early on by the myopic policy; however, it soon surpasses the na\"ive policy. Under FRONT, no force pulls are triggered.

\section{Discussions}
\label{sec:dis}

\subsection{Practical Strategies and Other Working Models}
\label{sec:other}
We can select appropriate values for hyperparameters \( K \), \( T_0 \) and \( L \) to ensure sufficient exploration initially and choose suitable $C$ and $\kappa_0$ to avoid too many force pulls, thereby ensuring that $|\mathcal{F}_t| = o_p(\sqrt{T})$. Specifically, we recommend more exploration during the warm-up phase to establish a reliable estimator early, followed by a reduction in force pulls in later stages to sustain ongoing exploration. In Section~\ref{sec:FRONT}, we establish the method based on the online additive outcome model with heterogeneous treatment effects and homogeneous interference effects (described as Equation~\eqref{equ:conditionalmean}). We now extend our discussion to more general model formulations.

First, consider a model incorporating interaction between \(\kappa_t\) and \(a_t\): $\mu(\bm{x}_t, \kappa_t, a_t) = \mathbb{E}(y_t \vert \bm{x}_t, \kappa_t, a_t) = \phi(\bm{x}_t)^{\top}\bm{\beta}_0+a_t\phi(\bm{x}_t)^{\top}(\bm{\beta}_1-\bm{\beta}_0)+ \kappa_t\gamma_0 + a_t\kappa_t (\gamma_1-\gamma_0)$. The optimal action for the \(t\)-th user is a nested structure of indicator functions and they should be derived reversely. This formulation is detailed in Equations~\eqref{equ:othermodel1}, \eqref{equ:othermodel10}, and \eqref{equ:othermodel11} in Appendix~\ref{app:othermodel}. Consequently, determining the optimal policy becomes intractable, due to the unknown termination time $T$. 

Second, we consider a model with interaction between \(\kappa_t\) and \(\bm{x}_t\), i.e., $\mu(\bm{x}_t, \kappa_t, a_t) = \mathbb{E}(y_t \vert \bm{x}_t, \kappa_t, a_t) = \phi(\bm{x}_t)^{\top}\bm{\beta}_0+a_t\phi(\bm{x}_t)^{\top}(\bm{\beta}_1-\bm{\beta}_0)+\kappa_t\gamma_0+\kappa_t \phi(\bm{x}_t)^{\top} \bm{\gamma}$. In this case, the optimal policy depends on the contextual features of future individuals (see Equation~\eqref{equ:othermodel20} in the Appendix). We can develop the optimal strategy following the similar argument in Proposition \ref{prop:optimal}:
\begin{equation*}
    a_t^\ast = 
        \mathbb{I}\left\{\phi(\bm{x}_t)^{\top} (\bm{\beta}_1-\bm{\beta}_0) + \left[\sum_{s=t+1}^\infty w_{st}\right] \gamma_0 + \left[\sum_{s=t+1}^\infty w_{st}\phi(\bm{x}_s)^\top\right] \bm{\gamma} \ge 0 \right\},
\end{equation*}
where $\bm{x}_{t+1}, \bm{x}_{t+2}, \dots$ are unknown at time $t$. The term $\sum_{s=t+1}^\infty w_{st}\phi(\bm{x}s)^\top$ represents the weighted average of features from all subsequent individuals. Owing to the uncertainty in these contextual features, deriving the optimal action is generally challenging. Prediction becomes feasible only in special cases. For example, if $\kappa_t = \tfrac{1}{N}\sum_{s=t-N}^{t-1}a_s$, then
\begin{equation*}
    \sum_{s=t+1}^\infty w_{st}\phi(\bm{x}_s)^\top=\frac{1}{N}\sum_{s=t+1}^{t+N}\phi(\bm{x}_s)^\top
\end{equation*}
can be approximated by the sample mean $\sum_{s=1}^t \phi(\bm{x}_s)^\top /t$, or alternatively, by resampling from the observed feature vectors $\phi(\bm{x}_1)^\top, \dots, \phi(\bm{x}_t)^\top$ to generate pseudo-samples $\phi(\bm{x}_{t+1})^\top, \dots, \phi(\bm{x}_{t+N})^\top$, and then averaging the results.

\subsection{Conclusions and Future Work}
\label{sec:conclusion}
In this paper, we propose a novel policy FRONT, designed to optimize decision-making and maximize long-term benefits. To ensure practical applicability, we provide a comprehensive algorithm accompanied by exploration strategies. Furthermore, we establish the consistency and asymptotic normality of the online estimator under the general weight array $\left\{w_{ts} \right\}_{t=1,s<t}^\infty$. Finally, we analyze the regret under two different definitions and derive sublinear regret bounds for both scenarios.

There are several directions in which we can extend our work in the future. First, in contextual bandits, policy evaluation is crucial to determine when to stop updating the policy \citep[see e.g.,][]{chen2021,shen2024,xu2024}. We plan to define policy value that aligns with the two regret definitions and develop value estimation and inference when interference persists over time. Second, model misspecification presents a challenge in the interference setting, and we expect to address this issue in future research. Third, our current framework assumes a binary action set, and we aim to extend it to continuous action spaces. Finally, with respect to considering the interference effect on future outcomes, we have already incorporated the idea from reinforcement learning and will extend our proposal within the RL framework.

\bibliography{citations}
\bibliographystyle{agsm}

\newpage

\appendix
\newtheorem{lemma}{Lemma}   
 
\counterwithin{figure}{section}
\counterwithin{table}{section}
\counterwithin{equation}{section}

This appendix presents sensitivity analyses and additional simulation results in Section~\ref{app:simulation}, followed by the proofs of all the theorems for FRONT in Section~\ref{app:thms}. Auxiliary lemmas are provided in Section~\ref{app:lemmas}.

\section{Extended Simulation Results}
\label{app:simulation}
\subsection{Plot of $\kappa_t$}
\label{app:kappaplot}

As Figure \ref{fig:kappa} shows, \(\kappa_t\) approaches the optimal interference action $\kappa_{\infty}^\ast$ and will converge over time in simulations, which also verifies convergence of ISO. Note that the convergence in the left panel of Figure \ref{fig:kappa} is much more obvious due to its larger interference scale. When $g(t)$ is smaller, convergence takes longer time.

\begin{figure}[ht]
    \centering
    \includegraphics[width=1\linewidth]
    {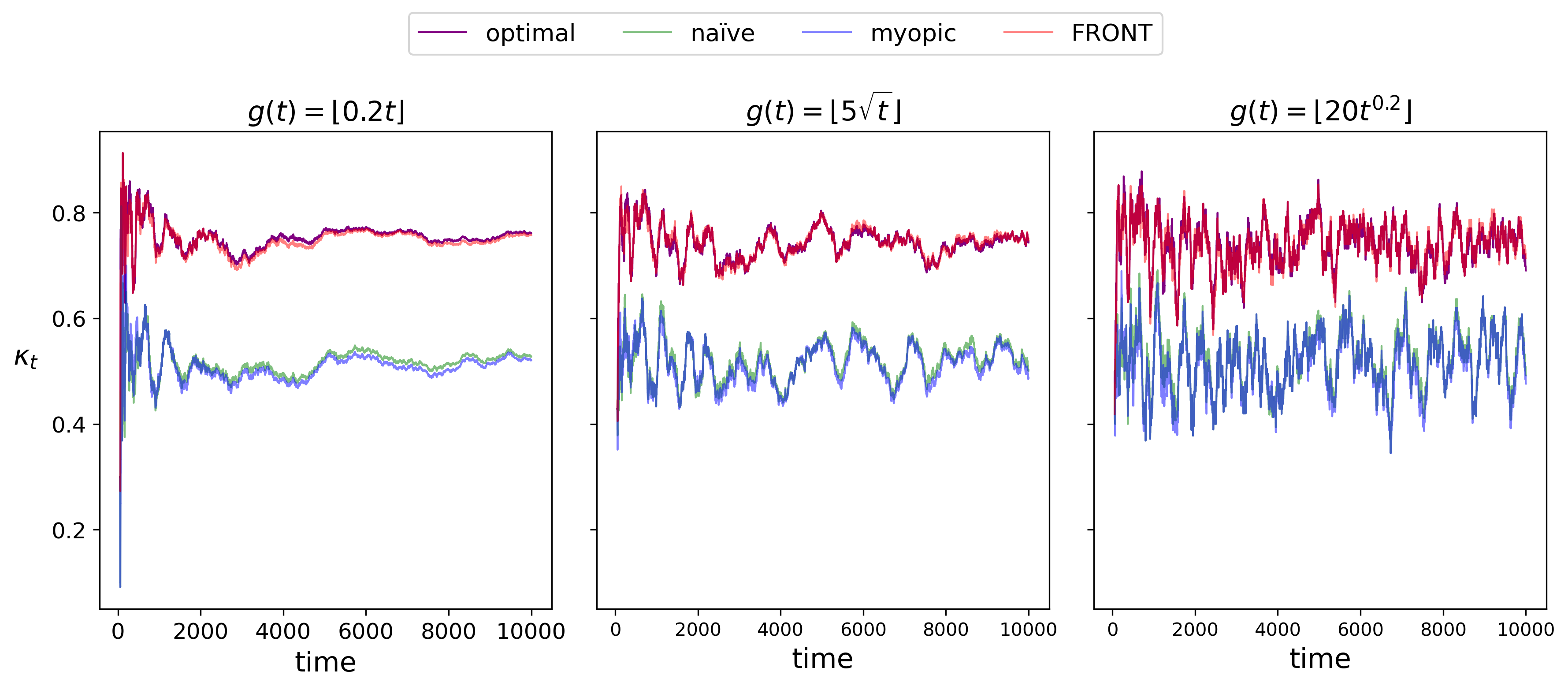}
    \caption{The convergence of $\kappa_t$ from one replication.}
    \label{fig:kappa}
\end{figure}

\subsection{Sensitivity Analyses}
\label{app:sensi}

We conduct a sensitivity analysis over different values of $T_0$, $L$, and $K$, with 500 simulation replications for $g(t) = \lfloor5\sqrt{t}\rfloor$. As shown in Figure~\ref{fig:sens1}, the SE-to-MCSD ratios, average biases, and coverage probabilities suggest that Algorithm~\ref{alg} is robust to these variations. Notably, no force pulls are triggered in any setting.

\begin{figure}[!t]
    \centering

    \begin{subfigure}{\linewidth}
        \centering
        \includegraphics[width=1\linewidth]{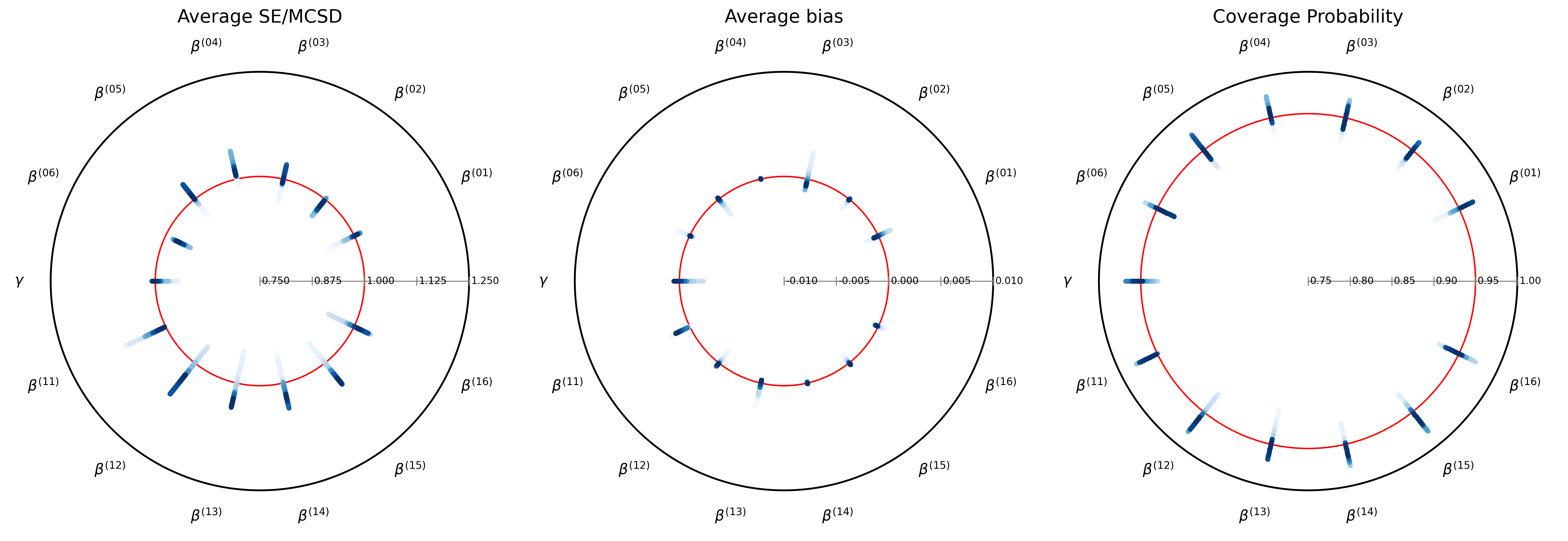}
        \caption{$T_0=50, K=80, L=4$}
    \end{subfigure}

    \begin{subfigure}{\linewidth}
        \centering
        \includegraphics[width=1\linewidth]{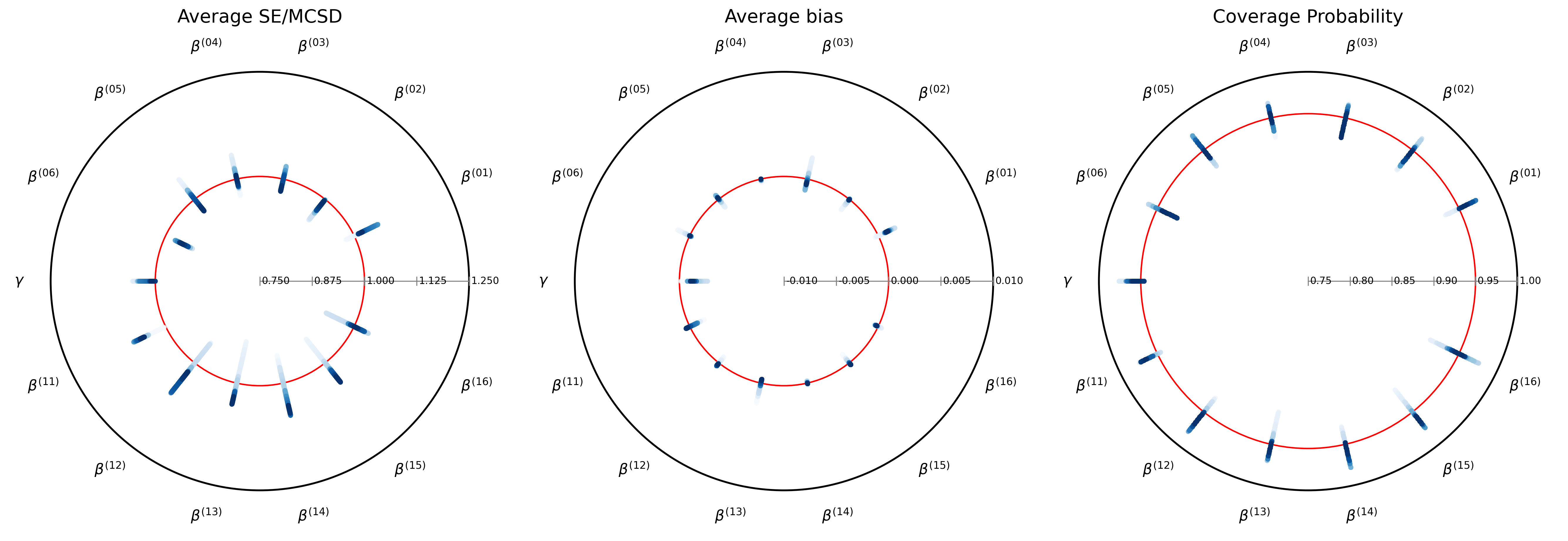}
        \caption{$T_0=100, K=50, L=6$}
    \end{subfigure}

    \begin{subfigure}{\linewidth}
        \centering
        \includegraphics[width=1\linewidth]{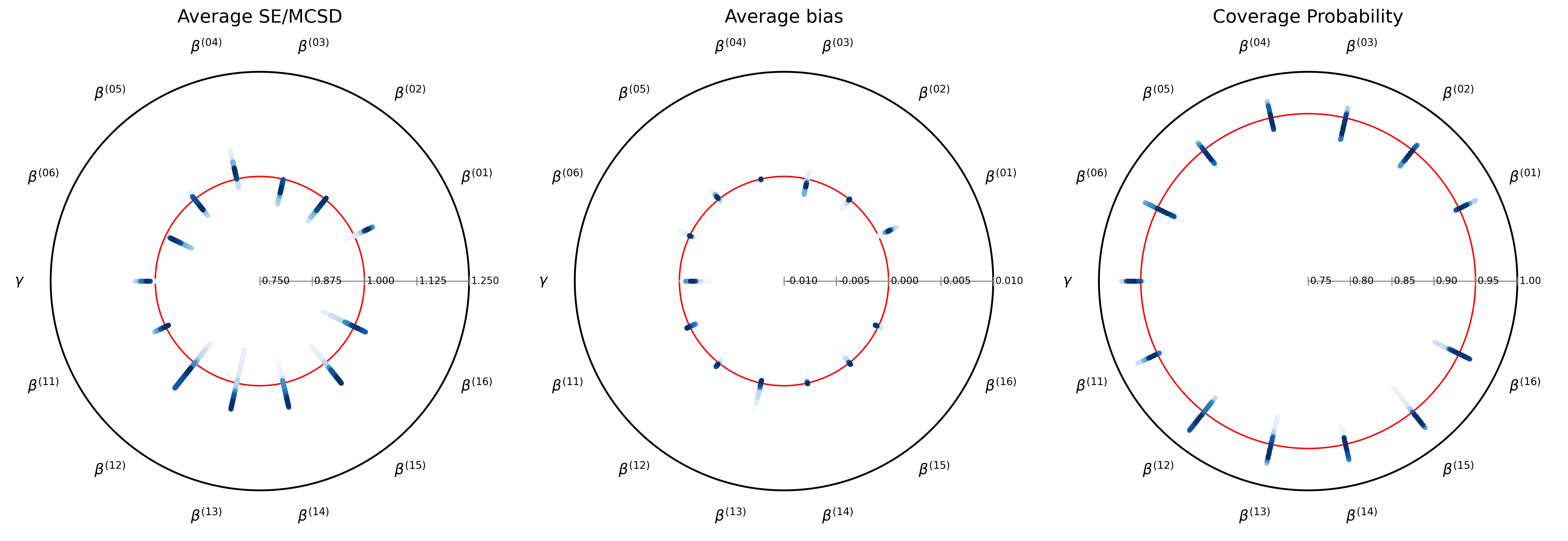}
        \caption{$T_0=200, K=30, L=8$}
    \end{subfigure}

    \caption{Consistency and asymptotic normality of the online estimator for $g(t) = \lfloor 5\sqrt{t} \rfloor$ under varying hyperparameters.}
    \label{fig:sens1}
\end{figure}

\subsection{Numerical Verification of the Existence of $\zeta_\infty$}
\label{app:zetagplot}
Beyond the special cases established in Corollary~\ref{theor:gt}, we provide numerical evidence supporting the existence of $\zeta_\infty$ under other equal-weight interference scale $g(t)$. Figure~\ref{fig:zeta} illustrates the trajectory of $\zeta_t$ for $100 \le t \le 10{,}000$ across four scenarios:
(i) $g(t) = \lfloor 20t^{0.2}\rfloor$ (a case also used in simulations),  
(ii) $g(t) = \lfloor 10t^{0.4}\rfloor$,  
(iii) $g(t) = \lfloor 5t^{0.6}\rfloor$, and  
(iv) $g(t) = \lfloor 2t^{0.8}\rfloor$.  
The stabilization of $\zeta_t$ over time provides evidence for the existence of its limit, $\zeta_\infty$.

\begin{figure}[ht]
    \centering
    \includegraphics[width=1\linewidth]{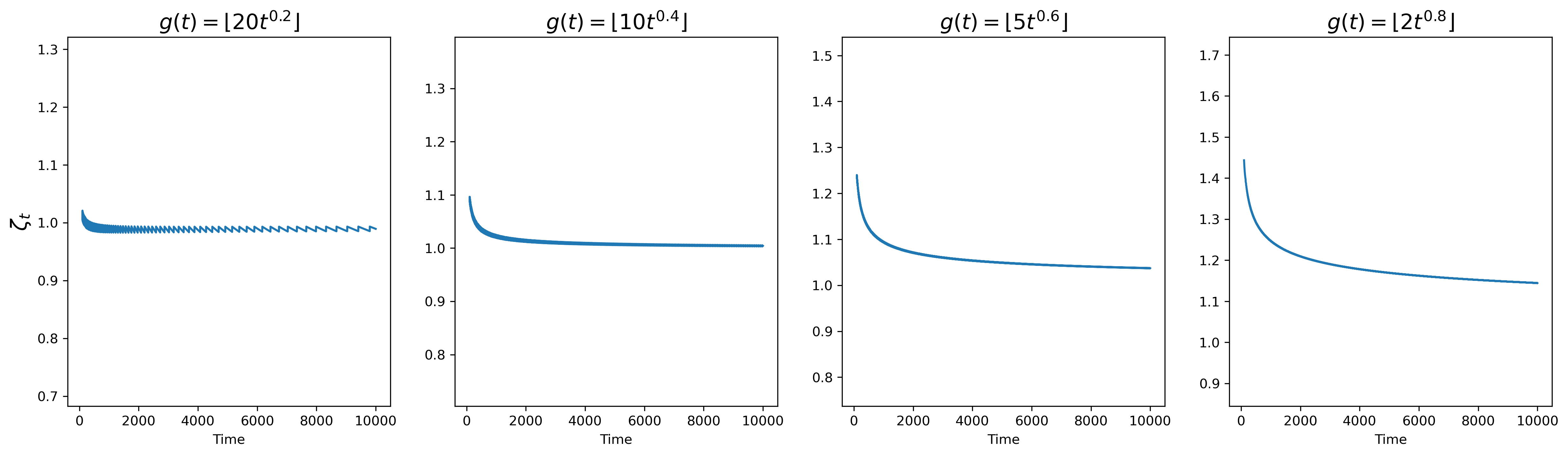}
    \caption{Convergence of $\zeta_t$ under different $g(t)$.}
    \label{fig:zeta}
\end{figure}

\subsection{Fixed-scale Interference Window}
\label{app:fix}
We apply the same hyperparameters and exploration rate for the fixed-scale interference window as those used in the simulation studies (Section~\ref{sec:simulation}). Force pulls are triggered fewer than five times.
Figure~\ref{fig:fixed_regret} shows results consistent with Figure~\ref{fig:reward}, further confirming the effectiveness of FRONT under fixed-scale interference windows.
\begin{figure}[ht]
    \centering
    \includegraphics[width=1\linewidth]{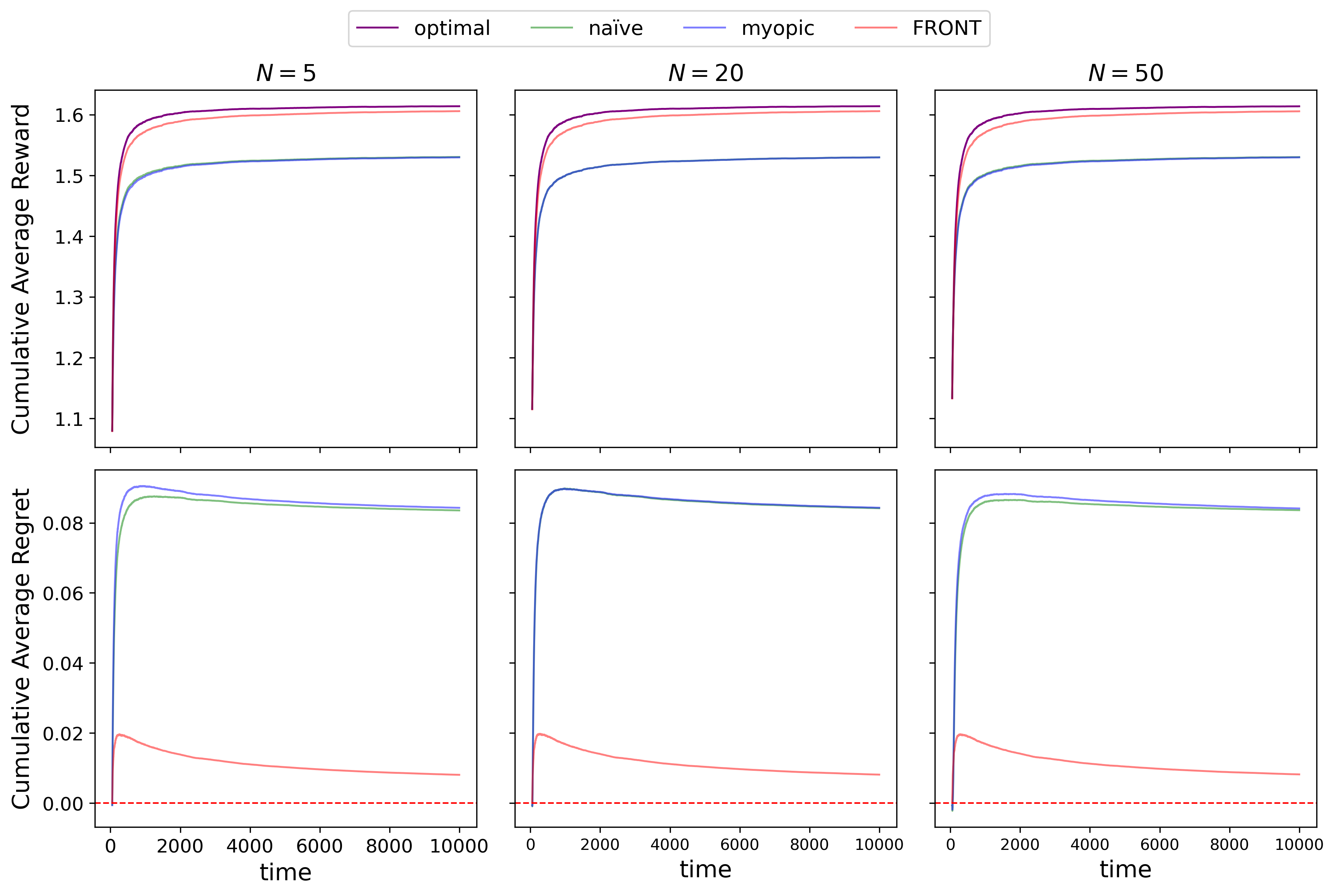}
    \caption{The performance of the online estimator with fixed-scale interference window when $N \in \{5,20,50\}$.}
    \label{fig:fixed_regret}
\end{figure}

When considering a fixed-scale interference window $N$, Corollary~\ref{coro:consistency} remains valid, whereas Theorem~\ref{theor:infer} no longer applies. Figure~\ref{fig:fixed_bias} reports the average bias for three choices of $N \in \left\{5,20,50\right\}$. As time increases, the average bias clearly converges to zero, confirming the consistency of the online estimator under our policy FRONT.

\begin{figure}[ht]
    \centering
    \includegraphics[width=1\linewidth]{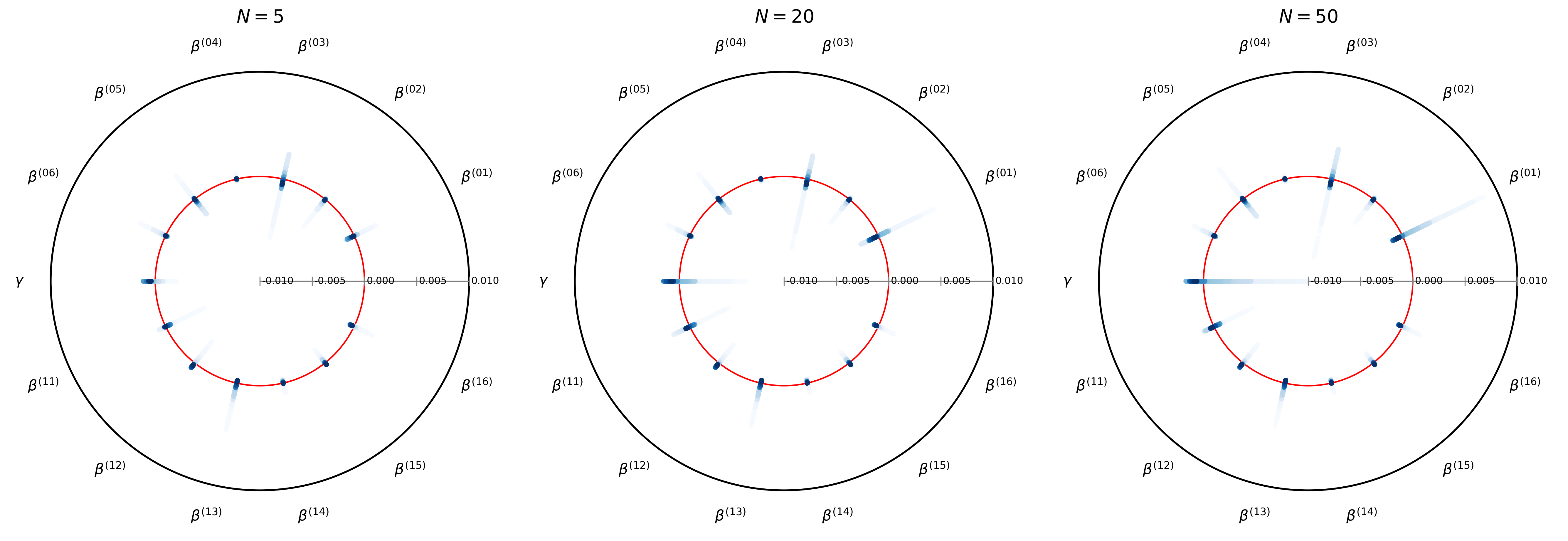}
    \caption{The average bias of the online estimator with fixed-scale interference window $N \in \{5,20,50\}$.}
    \label{fig:fixed_bias}
\end{figure}

\section{Proof of Main Results}
\label{app:thms}

\subsection{Proposition~\ref{prop:optimal} (Optimal policy with interference)}
\label{app:optimal}

\begin{proof}
The expected cumulative outcome is given by $\sum_{t=1}^\infty \mu(\bm{x}_t,  \kappa_t, a_t)$. Under our working model, it can be expanded as
$\sum_{t=1}^\infty \phi(\bm{x}_t)^{\top}\bm{\beta}_0+a_t\phi(\bm{x}_t)^{\top}(\bm{\beta}_1-\bm{\beta}_0)+ \kappa_t \gamma$.
Substituting $ \kappa_t$ with $\sum_{s=1}^{t-1} w_{ts} a_s$, we have
\begin{align*}
    \sum_{t=1}^\infty \mu(\bm{x}_t, \kappa_t, a_t) 
    &= \sum_{t=1}^\infty \phi(\bm{x}_t)^{\top}\bm{\beta}_0+a_t\phi(\bm{x}_t)^{\top}(\bm{\beta}_1-\bm{\beta}_0)+ \left[ \sum_{s=1}^{t-1} w_{ts} a_s \right] \gamma\\
    &= \sum_{t=1}^{\infty} \phi(\bm{x}_t)^{\top} \bm{\beta}_0 + a_t\left\{\phi(\bm{x}_t)^{\top} (\bm{\beta}_1 - \bm{\beta}_0) + \left[\sum_{s=t+1}^\infty w_{st} \right] \gamma \right\}.
\end{align*}
Because we want to maximize the cumulative outcome, then the optimal action should be
\begin{equation*}
    a_t^\ast = \mathbb{I}\left\{\phi(\bm{x}_t)^{\top} \bm{\beta}_0 + \phi(\bm{x}_t)^{\top} (\bm{\beta}_1 - \bm{\beta}_0) + \left[\sum_{s=t+1}^\infty w_{st} \right] \gamma \ge 0 \right\}.
\end{equation*}
The expansion of the cumulative reward is also used in the cumulative regret later, where we simply need to ignore the first term $\phi(\bm{x}_t)^{\top} \bm{\beta}_0$ in both definitions of regret.
\end{proof}

\subsection{Theorem~\ref{theor:tail} (Tail bound for the online estimator)}
\label{app:tailbound}
\begin{proof} 
Denote $\widehat{\Sigma}_t = \frac{1}{t} \sum_{s=1}^t \bm{z}_s \bm{z}_s^{\top}$. This theorem and its proof are adapted from \citet{bastani2020} and \citet{chen2021}, with minor modifications for our setting.
The relation between eigenvalue and $\ell_2$ norm of symmetric matrix gives
$\|\widehat{\Sigma}_t^{-1}\|_2 = \lambda_{\max}\!\left(\widehat{\Sigma}_t^{-1}\right)
= \left(\lambda_{\min}(\widehat{\Sigma}_t)\right)^{-1}. $ 
To derive the tail bound of $\widehat{\bm{\theta}}_t-\bm{\theta} = \widehat{\Sigma}^{-1}_t \left( \frac{1}{t} \sum_{s=1}^t \bm{z}_s e_s \right)$, 
we apply Assumption~\ref{ass:mineigen} to bound the minimum eigenvalue,
$\lambda_{\min}\left\{\frac{1}{t} \sum_{s=1}^t \bm{z}_s^{\top} \bm{z}_s \right\} = \lambda_{\min}(\widehat{\Sigma}_t) > C\epsilon_t$.
Therefore, the $\ell_2$ norm for $\widehat{\bm{\theta}}_{t} - \bm{\theta}$ satisfies
\begin{align*}
\|\widehat{\bm{\theta}}_{t} - \bm{\theta}\|_2
= \left\|
\widehat{\Sigma}_t^{-1}
\left( \frac{1}{t}\sum_{s=1}^{t} \bm{z}_s e_s \right)
\right\|_2  \le \frac{1}{t} \|\widehat{\Sigma}_t^{-1}\|_2
\left\|\sum_{s=1}^{t} \bm{z}_s e_s \right\|_2  \le \frac{1}{tC \epsilon_t}
\left\|\sum_{s=1}^{t} \bm{z}_s e_s \right\|_2.
\end{align*}
Denote the $j$th element of $\bm{z}_t$ as $Z_{j,t}$ for $j = 1, \cdots, d$.
Now, for $h > 0$,
\[ 
\resizebox{1\hsize}{!}{$ \begin{aligned}
&\Pr\!\left( \|\widehat{\bm{\theta}}_{t} - \bm{\theta}\|_2 \le h \right)
 \ge \Pr\!\left(
\left\|\sum_{s=1}^{t} \bm{z}_s e_s\right\|_2
\le tC \epsilon_t h
\right) \ge \Pr\!\left(
\left|\sum_{s=1}^{t} Z_{1,s} e_s\right|
\le \frac{tC \epsilon_t h}{\sqrt{d}},
\; \dots,
\left|\sum_{s=1}^{t} Z_{d,s} e_s\right|
\le \frac{tC \epsilon_t h}{\sqrt{d}}
\right) \\
&= 1 - \Pr \!\left(
\bigcup_{j=1}^{d}
\left\{
\left|\sum_{s=1}^{t} Z_{j,s} e_s\right|
> \frac{tC \epsilon_t h}{\sqrt{d}}
\right\}
\right)  \ge 1 - \sum_{j=1}^{d}
\Pr \!\left(
\left|\sum_{s=1}^{t} Z_{j,s} e_s \right|
> \frac{tC \epsilon_t h}{\sqrt{d}}
\right). 
\end{aligned}$}
\] 
We split $\sum_{j=1}^{d}\Pr \!\left(\left|\sum_{s=1}^{t} Z_{j,s} e_s \right| > \frac{tC \epsilon_t h}{\sqrt{d}} \right)$ into three parts:
\begin{equation}
\label{equ13}
\sum_{j=1}^{d_1}\Pr \!\left(
\left|\sum_{s=1}^{t} Z_{j,s} e_s \right|
> \frac{tC \epsilon_t h}{\sqrt{d}}
\right) + \sum_{j=d_1+1}^{2d_1}\Pr \!\left(
\left|\sum_{s=1}^{t} Z_{j,s} e_s \right|
> \frac{tC \epsilon_t h}{\sqrt{d}}
\right) \,  + \Pr \!\left(
\left|\sum_{s=1}^{t} Z_{d,s} e_s \right|
> \frac{tC \epsilon_t h}{\sqrt{d}}
\right).   
\end{equation} 
We know that $\bm{z}_t = ((1-a_t)\phi(\bm{x}_t)^{\top},  a_t\phi(\bm{x}_t)^{\top}, \kappa_t)^{\top}$, thus
\[ 
\resizebox{1\hsize}{!}{$ \begin{aligned}
&\sum_{j=1}^{d_1}\Pr \!\left(
\left|\sum_{s=1}^{t} Z_{j,s} e_s \right|
> \frac{tC \epsilon_t h}{\sqrt{d}}
\right)  = \sum_{j=1}^{d_1}\Pr \!\left(
\left|\sum_{s=1}^{t} (1-a_s)\phi(\bm{x}_s)_j \, e_s \right|
> \frac{tC \epsilon_t h}{\sqrt{d}}
\right) \\
=& \sum_{j=1}^{d_1}\Pr \!\left(
\left|\sum_{s=1}^{t} \mathbb{I}\left\{a_s=0 \right\}\phi(\bm{x}_s)_j \, e_s^{(0)} \right|
> \frac{tC \epsilon_t h}{\sqrt{d}}
\right)   \le \sum_{j=1}^{d_1}2 \exp \left\{-\frac{t\epsilon_t^2 C^2 h^2}{2d\sigma^2 L_z^2}\right\}  = 2d_1 \exp \left\{-\frac{t\epsilon_t^2 C^2 h^2}{2d\sigma^2 L_z^2}\right\},
\end{aligned}$}
\]
where $e_s^{(i)} = e_s$ when $a_s=i$ and 0 otherwise and $e_s^{(0)} \perp \bm{z}_s$ so the last inequality follows from Lemma~1 in \cite{chen2021}.
Similarly, we get
$\sum_{j=d_1+1}^{2d_1}\Pr \!\left(
\left|\sum_{s=1}^{t} Z_{j,s} e_s \right|
> \frac{tC \epsilon_t h}{\sqrt{d}}
\right) \le 2d_1 \exp \left\{-\frac{t\epsilon_t^2 C^2 h^2}{2d\sigma^2 L_z^2}\right\}.$
Focusing on the last part in~\eqref{equ13} with $Z_{d,s}=\kappa_s$, we have,
\[ 
\resizebox{1\hsize}{!}{$ \begin{aligned}
&\Pr \!\left(
\left|\sum_{s=1}^{t} Z_{d,s} e_s \right|
> \frac{tC \epsilon_t h}{\sqrt{d}}
\right) 
= \, \Pr \!\left(
\left|\sum_{s=1}^{t} \kappa_s \mathbb{I}\left\{a_s=0 \right\} e_s^{(0)} + \sum_{s=1}^{t} \kappa_s \mathbb{I}\left\{a_s=1 \right\} e_s^{(1)} \right|
> \frac{tC \epsilon_t h}{\sqrt{d}} \right) \\
\le&\, \Pr \!\left(
\left|\sum_{s=1}^{t} \kappa_s \mathbb{I}\left\{a_s=0 \right\} e_s^{(0)} \right| + \left| \sum_{s=1}^{t} \kappa_s \mathbb{I}\left\{a_s=1 \right\} e_s^{(1)} \right|
> \frac{tC \epsilon_t h}{\sqrt{d}} \right) \\
\le&\, \Pr \!\left(
\left|\sum_{s=1}^{t} \kappa_s \mathbb{I}\left\{a_s=0 \right\} e_s^{(0)} \right| > \frac{tC \epsilon_t h}{2\sqrt{d}}\right) + \Pr \!\left( \left| \sum_{s=1}^{t} \kappa_s \mathbb{I}\left\{a_s=1 \right\} e_s^{(1)} \right|
> \frac{tC \epsilon_t h}{2\sqrt{d}} \right) 
\le \, 4\exp \left\{-\frac{t\epsilon_t^2 C^2 h^2}{8d\sigma^2 L_z^2}\right\},
\end{aligned}$}
\]
where the last inequality is obtained by applying the same argument used in the first two parts.
Combining the three parts, we obtain
\begin{align*}
\Pr\!\left( \|\widehat{\bm{\theta}}_{t} - \bm{\theta}\|_2 \le h \right)
&\ge 1- 4d_1 \exp \left\{-\frac{t\epsilon_t^2 C^2 h^2}{2d\sigma^2 L_z^2}\right\} - 4\exp \left\{-\frac{t\epsilon_t^2 C^2 h^2}{8d\sigma^2 L_z^2}\right\}.
\end{align*}
Now, we establish the tail inequality for $\ell_1$ norm. By the Cauchy--Schwarz inequality, for any $\bm{a} \in \mathbb{R}^d$ we have $\Vert \bm{a} \Vert_1 \le \Vert \bm{a} \Vert_2 \sqrt{d}$. Therefore,
\[ 
\resizebox{1\hsize}{!}{$ \begin{aligned}
\Pr\!\left( \|\widehat{\bm{\theta}}_{t} - \bm{\theta}\|_1 \le h \right) 
\ge \Pr\!\left( \|\widehat{\bm{\theta}}_{t} - \bm{\theta}\|_2 \le \frac{h}{\sqrt{d}} \right) \ge 1- 4d_1 \exp \left\{-\frac{t\epsilon_t^2 C^2 h^2}{2d^2\sigma^2 L_z^2}\right\} - 4\exp \left\{-\frac{t\epsilon_t^2 C^2 h^2}{8d^2\sigma^2 L_z^2}\right\}.
\end{aligned}$}
\] 
\end{proof}

\subsection{Theorem~\ref{theor:kappaast} (Convergence of optimal interference)}
\label{app:ISO}

\begin{proof}
    Denote $u_s = \mathbb{E}(a_s^\ast) = \Pr\!\left\{\phi(\bm{x})^{\top} (\bm{\beta}_1-\bm{\beta}_0) + \left[\sum_{k=s+1}^\infty w_{ks}\right]\gamma \ge 0\right\}$. We know that $\text{Var}(a_t) \le 1$ since $a_t \in \left\{0,1 \right\}.$
    Without loss of generality, we set $w_{tt} = 0$ for all $t$, which simplifies the subsequent proof. This convention will be adopted throughout the remainder of the proofs.
    \begin{equation*}
         \kappa_t^\ast -  \kappa_\infty^\ast = \sum_{s=1}^{t} w_{ts} a_s^\ast -  \kappa_\infty^\ast = \underbrace{\sum_{s=1}^{t} w_{ts} (a_s^\ast-u_s)}_{b_t} +\underbrace{\sum_{s=1}^{t} w_{ts}u_s -  \kappa_\infty^\ast}_{c_t},
    \end{equation*}
    where $\mathbb{E}(b_t) = 0$. Then we $\mathbb{E}( \kappa_t^\ast -  \kappa_\infty^\ast) = c_t$.
    Since $ \kappa_\infty^\ast = \lim_{t \to \infty} \sum_{s=1}^{t} w_{ts} u_s$, we have $c_t \to 0$ as $t \to \infty$. Moreover, $
    \mathbb{E}(b_t^2) = \text{Var}(b_t) = \sum_{s=1}^t w_{ts}^2 \text{Var}(a_s^\ast) \to 0. $ 
Hence,
\begin{equation*}
    \mathbb{E}\!\left( \kappa_t^\ast -  \kappa_\infty^\ast \right)^2 
    = \mathbb{E}\!\left(b_t + c_t \right)^2 
    = \mathbb{E}(b_t^2) + 2c_t \mathbb{E}(b_t) + c_t^2 
    = \mathbb{E}(b_t^2) + c_t^2 \to 0,
\end{equation*}
as $t \to \infty$. Finally, by Chebyshev’s inequality, for any $\epsilon > 0$,
\begin{equation*}
    \Pr\!\left(\big| \kappa_t^\ast -  \kappa_\infty^\ast - c_t\big| > \epsilon\right) 
    \le \frac{\mathbb{E}\!\left( \kappa_t^\ast -  \kappa_\infty^\ast \right)^2}{\epsilon^2} \to 0.
\end{equation*}
Therefore, $ \kappa_t^\ast - c_t \stackrel{p}{\to}  \kappa_\infty^\ast$, which implies that $ \kappa_t^\ast \stackrel{p}{\to}  \kappa_\infty^\ast$ as $t \to \infty$.

\end{proof}

\subsection{Theorem~\ref{theor:kappat} (Convergence of $ \kappa_t$)}
\label{app:kappa}

\begin{proof}
Since force pulls are assumed not to affect the convergence of $\kappa_t$, we only consider actions under FRONT. FRONT selects actions according to $a_t \sim \text{Bernoulli}(\widehat{\pi}_t),$ with
$\widehat{\pi}_t = (1-\epsilon_t)\,\mathbb{I}\!\left\{\phi(\bm{x}_t)^{\top}\widehat{\bm{\beta}}_{t-1} + \zeta_t\widehat{\gamma}_{t-1} \geq 0 \right\} + {\epsilon_t}/{2}$ and $\widehat{\bm{\beta}}_{t-1} = \widehat{\bm{\beta}}_{1,t-1} - \widehat{\bm{\beta}}_{0,t-1}$.

\medskip
\noindent
    We begin with $\mathbb{E}(\kappa_t-\kappa_\infty)^2$ in preparation for applying Markov's inequality. 
    Since $\kappa_\infty$ is a constant and $\{\kappa_t\}_{t=1}^\infty$ is a sequence of random variables, we can expand as
    \begin{equation}
    \label{equ1}
        \mathbb{E}(\kappa_t-\kappa_\infty)^2 = \mathbb{E}(\kappa_t^2) - 2\kappa_\infty \,\mathbb{E}(\kappa_t) + \kappa_\infty^2 = \mathbb{E}\left(\sum_{s=1}^{t}w_{ts}a_s\right)^2 - 2\kappa_\infty \sum_{s=1}^{t}w_{ts}\,\mathbb{E}(a_s) + \kappa_\infty^2.
    \end{equation}
    
    \paragraph{Step 1: We first analyze the second term in \eqref{equ1}.} By iterated expectations,
    \begin{align*}
    \mathbb{E}(a_t \vert \mathcal{H}_{t-1}) &= \mathbb{E}[\mathbb{E}(a_t \vert \mathcal{H}_{t-1},\phi(\bm{x}_t))\vert \mathcal{H}_{t-1}] \\
    &=\mathbb{E} \left((1-\epsilon_t) \mathbb{I}\left\{\phi(\bm{x}_t)^{\top} \widehat{\bm{\beta}}_{t-1} + \zeta_t \widehat{\gamma}_{t-1} \geq 0 \right\} + \frac{\epsilon_t}{2} \Big| \mathcal{H}_{t-1} \right) \\
    &= (1-\epsilon_t) \Pr\left\{\phi(\bm{x})^{\top} \widehat{\bm{\beta}}_{t-1} + \zeta_t \widehat{\gamma}_{t-1} \geq 0 \Big| \widehat{\bm{\beta}}_{0,t-1}, \widehat{\bm{\beta}}_{1,t-1}, \widehat{\gamma}_{t-1} \right\} + \tfrac{\epsilon_t}{2} \\
    &= (1-\epsilon_t) q\!\left(\zeta_t,\widehat{\bm{\beta}}_{t-1},\widehat{\gamma}_{t-1}\right) + \tfrac{\epsilon_t}{2},
    \end{align*}
    where we define
$q\!\left(\zeta_t,\widehat{\bm{\beta}}_{t-1},\widehat{\gamma}_{t-1}\right) 
= \Pr\left\{\phi(\bm{x})^{\top} \widehat{\bm{\beta}}_{t-1} + \zeta_t \widehat{\gamma}_{t-1} \geq 0 \Big| \widehat{\bm{\beta}}_{0,t-1}, \widehat{\bm{\beta}}_{1,t-1}, \widehat{\gamma}_{t-1} \right\}$, and the probability is taken with respect to the distribution of $\bm{x}$.  
It then follows that
\[
\mathbb{E}(a_s) 
= \mathbb{E}\!\left(\mathbb{E}[a_s \mid \mathcal{H}_{s-1}]\right) 
= \left(1-\frac{\epsilon_s}{2}\right)\mathbb{E}\!\left[q\!\left(\zeta_s,\widehat{\bm{\beta}}_{s-1},\widehat{\gamma}_{s-1}\right)\right] + \tfrac{\epsilon_s}{2},
\]
where the outer expectation is taken with respect to the distribution of the parameter estimates. Then we will show $\sum_{s=1}^t w_{ts}\,\mathbb{E}(a_s) \to \kappa_\infty$.
\begin{align*}
    &\sum_{s=1}^t w_{ts}\,\mathbb{E}(a_s) - \sum_{s=1}^t w_{ts}\left[\left(1-\tfrac{\epsilon_\infty}{2} \right)\mathbb{E}(a_s^\ast)+\tfrac{\epsilon_\infty}{2}\right] \\
    = & \underbrace{\sum_{s=1}^t w_{ts}\left\{\left[\left(1-\tfrac{\epsilon_s}{2} \right)q\!\left(\zeta_s,\widehat{\bm{\beta}}_{s-1},\widehat{\gamma}_{s-1}\right) \right] - \left[ \left(1-\tfrac{\epsilon_\infty}{2} \right) q\!\left(\zeta_s,\bm{\beta},\gamma \right)\right]\right\}}_{\Delta_1} + \underbrace{\tfrac{1}{2}\sum_{s=1}^t w_{ts} (\epsilon_s - \epsilon_\infty)}_{\Delta_2}.
\end{align*}
To bound $\Delta_1$, we add and subtract the same term, yielding
\[ 
\resizebox{1\hsize}{!}{$ \begin{aligned}
    \Delta_1 =& \underbrace{\sum_{s=1}^t w_{ts}\left\{\left[\left(1-\tfrac{\epsilon_s}{2} \right)q\!\left(\zeta_s,\widehat{\bm{\beta}}_{s-1},\widehat{\gamma}_{s-1}\right) \right] -\left[\left(1-\tfrac{\epsilon_s}{2} \right)q\!\left(\zeta_s,\bm{\beta},\gamma\right) \right] \right\}}_{\Delta_{11}} + \underbrace{\sum_{s=1}^t w_{ts}\left\{\left[\left(1-\tfrac{\epsilon_s}{2} \right)q\!\left(\zeta_s,\bm{\beta},\gamma\right) \right] - \left[ \left(1-\tfrac{\epsilon_\infty}{2} \right) q\!\left(\zeta_s,\bm{\beta},\gamma \right)\right]\right\}}_{\Delta_{12}}.
\end{aligned}$}
\] 
Then we have $\Delta_{11} = \sum_{s=1}^t w_{ts} \left(1-\tfrac{\epsilon_s}{2} \right) \left[ q\!\left(\zeta_s,\widehat{\bm{\beta}}_{s-1},\widehat{\gamma}_{s-1}\right)-q\!\left(\zeta_s,\bm{\beta},\gamma\right)\right]$.
By Corollary~\ref{coro:consistency} and the Continuous Mapping Theorem, we have 
$q\!\left(\zeta_s,\widehat{\bm{\beta}}_{s-1},\widehat{\gamma}_{s-1}\right) 
- q\!\left(\zeta_s,\bm{\beta},\gamma\right)\to 0$.
Consequently, 
$\left(1-\tfrac{\epsilon_s}{2}\right) 
\left[q\!\left(\zeta_s,\widehat{\bm{\beta}}_{s-1},\widehat{\gamma}_{s-1}\right) 
- q\!\left(\zeta_s,\bm{\beta},\gamma\right)\right] \to 0.$
Applying Lemma~\ref{lemma:Toeplitz}, it follows that $\Delta_{11} \to 0$ as $t \to \infty$. 
Moreover, $\Delta_{12} 
= \tfrac{1}{2}\sum_{s=1}^t w_{ts}\, q\!\left(\zeta_s,\bm{\beta},\gamma\right)\bigl(\epsilon_\infty - \epsilon_s \bigr)$.
Since $q\!\left(\zeta_s,\bm{\beta},\gamma\right)\bigl(\epsilon_\infty - \epsilon_s \bigr) \to 0$, another application of Lemma~\ref{lemma:Toeplitz} yields $\Delta_{12} \to 0$. 
Similarly, applying Lemma~\ref{lemma:Toeplitz} to $\Delta_2$ gives $\Delta_2 \to 0$. 
Therefore,
\[
\sum_{s=1}^t w_{ts}\,\mathbb{E}(a_s) 
- \sum_{s=1}^t w_{ts}\!\left[\Bigl(1-\tfrac{\epsilon_\infty}{2}\Bigr)\mathbb{E}(a_s^\ast) + \tfrac{\epsilon_\infty}{2}\right] \;\to\; 0,
\]
which implies
\begin{equation}
\label{equ4}
    \lim_{t \to \infty} \sum_{s=1}^t w_{ts}\,\mathbb{E}(a_s) 
= \kappa_\infty,
\end{equation}
where $\kappa_\infty = \left(1-\epsilon_\infty\right)\kappa_\infty^\ast + \tfrac{\epsilon_\infty}{2}$.

\paragraph{Step 2: We then analyze the first term in \eqref{equ1}.}
We expand it to be
\begin{align}
    \mathbb{E}\left(\sum_{s=1}^{t}w_{ts} a_s\right)^2 &= \mathbb{E} \left(\sum_{s=1}^t w_{ts}^2 a_s^2 + 2\sum_{i<j}w_{ti}w_{tj}a_ia_j\right)  = \sum_{s=1}^t w_{ts}^2 \mathbb{E} (a_s) + 2\sum_{i<j}^t w_{ti}w_{tj}\,\mathbb{E} (a_ia_j).
    \label{equ2}
\end{align}
For the first term of \eqref{equ2}, we have $\sum_{s=1}^t w_{ts}^2 \mathbb{E} (a_s) \le \sum_{s=1}^t w_{ts}^2 \to 0$.

\noindent We then turn to the second term of \eqref{equ2}. When $i<j$,
\begin{align*}
\mathbb{E}(a_i a_j|\mathcal{H}_{j-1}) &= \mathbb{E}[\mathbb{E}(a_i a_j \vert \mathcal{H}_{j-1},\phi(\bm{x}_j))\vert \mathcal{H}_{j-1}] = \mathbb{E}[a_i\widehat{\pi}_j\vert \mathcal{H}_{j-1}] \\
&=\mathbb{E} \left(a_i \left[ (1-\epsilon_j) \mathbb{I}\left\{\phi(\bm{x}_j)^{\top} \widehat{\bm{\beta}}_{j-1} + \zeta_j \widehat{\gamma}_{j-1} \geq 0 \right\} + \tfrac{\epsilon_j}{2} \right]\Big| \mathcal{H}_{j-1} \right) \\
&= a_i \left[ (1-\epsilon_j) \Pr\left\{\phi(\bm{x})^{\top} \widehat{\bm{\beta}}_{j-1} + \zeta_j \widehat{\gamma}_{j-1} \geq 0 \Big| \widehat{\bm{\beta}}_{0,t-1}, \widehat{\bm{\beta}}_{1,j-1}, \widehat{\gamma}_{j-1} \right\} + \tfrac{\epsilon_j}{2} \right]\\
    &= a_i \left[ (1-\epsilon_j) q\!\left(\zeta_j,\widehat{\bm{\beta}}_{j-1},\widehat{\gamma}_{j-1}\right) + \tfrac{\epsilon_j}{2} \right].
\end{align*}
By iterated expectations, we get
\begin{align*}
    \mathbb{E}(a_i a_j) = \mathbb{E}\left(\mathbb{E}(a_i a_j|\mathcal{H}_{j-1}) \right) = \mathbb{E}\left(a_i \left[ (1-\epsilon_j) q\!\left(\zeta_j,\widehat{\bm{\beta}}_{j-1},\widehat{\gamma}_{j-1}\right) + \tfrac{\epsilon_j}{2} \right]\right),
\end{align*}
where the outer expectation is taken with respect to the distribution of the parameter estimates and the action $a_i$. We next consider the difference
\[ 
\resizebox{1\hsize}{!}{$ \begin{aligned}
     & \sum_{i<j}^t w_{ti}w_{tj}\,\mathbb{E}(a_i a_j) 
     - \sum_{i<j}^t w_{ti}w_{tj} 
     \Biggl[\Bigl(1-\tfrac{\epsilon_\infty}{2}\Bigr) q\!\left(\zeta_i,\bm{\beta},\gamma \right) 
     + \tfrac{\epsilon_\infty}{2} \Biggr]
     \Biggl[\Bigl(1-\tfrac{\epsilon_\infty}{2}\Bigr) q\!\left(\zeta_j,\bm{\beta},\gamma \right) 
     + \tfrac{\epsilon_\infty}{2} \Biggr] \\
     ={}& 
     \underbrace{\sum_{i<j}^t w_{ti}w_{tj}\,\mathbb{E}(a_i a_j) 
     - \sum_{i<j}^t w_{ti}w_{tj}\,
       \mathbb{E}\!\left[a_i\Bigl(\Bigl(1-\tfrac{\epsilon_\infty}{2}\Bigr) q\!\left(\zeta_j,\bm{\beta},\gamma \right) 
       + \tfrac{\epsilon_\infty}{2}\Bigr)\right]}_{\Delta_3} \\
     \,+& \underbrace{\sum_{i<j}^t w_{ti}w_{tj}\,
       \mathbb{E}\!\left[a_i\Bigl(\Bigl(1-\tfrac{\epsilon_\infty}{2}\Bigr) q\!\left(\zeta_j,\bm{\beta},\gamma \right) 
       + \tfrac{\epsilon_\infty}{2}\Bigr)\right] - \sum_{i<j}^t w_{ti}w_{tj}\,
     \Biggl[\Bigl(1-\tfrac{\epsilon_\infty}{2}\Bigr) q\!\left(\zeta_i,\bm{\beta},\gamma \right) 
     + \tfrac{\epsilon_\infty}{2} \Biggr]
     \Biggl[\Bigl(1-\tfrac{\epsilon_\infty}{2}\Bigr) q\!\left(\zeta_j,\bm{\beta},\gamma \right) 
     + \tfrac{\epsilon_\infty}{2} \Biggr]}_{\Delta_4}.
\end{aligned}$}
\] 

\[ 
\resizebox{1\hsize}{!}{$ \begin{aligned}
    \Delta_3 
&= \sum_{i<j}^t w_{ti}w_{tj}\,
   \mathbb{E}\!\Biggl[ a_i \Biggl(a_j 
   - \Bigl(\Bigl(1-\tfrac{\epsilon_\infty}{2}\Bigr) q\!\left(\zeta_j,\bm{\beta},\gamma \right) 
   + \tfrac{\epsilon_\infty}{2}\Bigr) \Biggr) \Biggr] \le \sum_{i<j}^t w_{ti}w_{tj}\,
   \Biggl[\mathbb{E}(a_j) 
   - \Bigl(\Bigl(1-\tfrac{\epsilon_\infty}{2}\Bigr) q\!\left(\zeta_j,\bm{\beta},\gamma \right) 
   + \tfrac{\epsilon_\infty}{2}\Bigr)\Biggr] \\
&\le \sum_{i=1}^t w_{ti} \sum_{j=1}^t w_{tj} \,
   \Biggl[\mathbb{E}(a_j) 
   - \Bigl(\Bigl(1-\tfrac{\epsilon_\infty}{2}\Bigr) q\!\left(\zeta_j,\bm{\beta},\gamma \right) + \tfrac{\epsilon_\infty}{2}\Bigr)\Biggr] \le L_z\sum_{j=1}^t w_{tj} \,
   \Biggl[\mathbb{E}(a_j) 
   - \Bigl(\Bigl(1-\tfrac{\epsilon_\infty}{2}\Bigr) q\!\left(\zeta_j,\bm{\beta},\gamma \right) + \tfrac{\epsilon_\infty}{2}\Bigr)\Biggr].
\end{aligned}$}
\] 
We have established in Step~1 that $\sum_{j=1}^t w_{tj} \,\left[\mathbb{E}(a_j) - \left(\left(1-\tfrac{\epsilon_\infty}{2}\right) q\!\left(\zeta_j,\bm{\beta},\gamma \right) + \tfrac{\epsilon_\infty}{2}\right)\right] \to 0$. Hence we get $\Delta_3 \to 0$ as $t \to \infty$. For $\Delta_4$, we have
\[ 
\resizebox{1\hsize}{!}{$ \begin{aligned}
    \Delta_4 &= \sum_{i<j}^t w_{ti}w_{tj}\,\Biggl[\Bigl(1-\tfrac{\epsilon_\infty}{2}\Bigr) q\!\left(\zeta_j,\bm{\beta},\gamma \right) 
     + \tfrac{\epsilon_\infty}{2} \Biggr] \Biggl[\mathbb{E}(a_i) 
   - \Bigl(\Bigl(1-\tfrac{\epsilon_\infty}{2}\Bigr) q\!\left(\zeta_i,\bm{\beta},\gamma \right) + \tfrac{\epsilon_\infty}{2}\Bigr)\Biggr] \\
   &\le \sum_{i<j}^t w_{ti}w_{tj}\, \Biggl[\mathbb{E}(a_i) 
   - \Bigl(\Bigl(1-\tfrac{\epsilon_\infty}{2}\Bigr) q\!\left(\zeta_i,\bm{\beta},\gamma \right) + \tfrac{\epsilon_\infty}{2}\Bigr)\Biggr] \\
   &\le \sum_{j=1}^t w_{tj}\, \sum_{i=1}^t w_{ti}\Biggl[\mathbb{E}(a_i) 
   - \Bigl(\Bigl(1-\tfrac{\epsilon_\infty}{2}\Bigr) q\!\left(\zeta_i,\bm{\beta},\gamma \right) + \tfrac{\epsilon_\infty}{2}\Bigr)\Biggr] \le L_z\sum_{i=1}^t w_{ti}\Biggl[\mathbb{E}(a_i) 
   - \Bigl(\Bigl(1-\tfrac{\epsilon_\infty}{2}\Bigr) q\!\left(\zeta_i,\bm{\beta},\gamma \right) + \tfrac{\epsilon_\infty}{2}\Bigr)\Biggr].
\end{aligned}$}
\] 
Following the same argument used to prove $\Delta_3 \to 0$, we have $\Delta_4 \to 0$. Therefore, we obtain 
\begin{equation*}
    \sum_{i<j}^t w_{ti}w_{tj}\,\mathbb{E}(a_i a_j) - \sum_{i<j}^t w_{ti}w_{tj} \Biggl[\Bigl(1-\tfrac{\epsilon_\infty}{2}\Bigr) q\!\left(\zeta_i,\bm{\beta},\gamma \right) + \tfrac{\epsilon_\infty}{2} \Biggr]\Biggl[\Bigl(1-\tfrac{\epsilon_\infty}{2}\Bigr) q\!\left(\zeta_j,\bm{\beta},\gamma \right) + \tfrac{\epsilon_\infty}{2} \Biggr] \to 0.
\end{equation*}
We now show that $2\sum_{i<j}^t w_{ti}w_{tj}\,\mathbb{E}(a_i a_j) \;\to\; \kappa_\infty^2$ as $ t \to \infty$.
Observe that
\begin{equation}
\label{equ5}
    \sum_{i=1}^t w_{ti}^2 \Biggl[\Bigl(1-\tfrac{\epsilon_\infty}{2}\Bigr) q\!\left(\zeta_i,\bm{\beta},\gamma \right) + \tfrac{\epsilon_\infty}{2} \Biggr]^2 \;\le\; \sum_{i=1}^t w_{ti}^2 \;\to\; 0.
\end{equation}
Moreover,
\[ 
\resizebox{1\hsize}{!}{$ \begin{aligned}
&\Biggl( \sum_{s=1}^t w_{ts} 
\Bigl[\Bigl(1-\tfrac{\epsilon_\infty}{2}\Bigr) q\!\left(\zeta_s,\bm{\beta},\gamma \right) + \tfrac{\epsilon_\infty}{2} \Bigr] \Biggr)^2 \\
=& \sum_{s=1}^t w_{ts}^2 \Bigl[\Bigl(1-\tfrac{\epsilon_\infty}{2}\Bigr) q\!\left(\zeta_s,\bm{\beta},\gamma \right) + \tfrac{\epsilon_\infty}{2} \Bigr]^2 + \sum_{i \neq j}^t w_{ti}w_{tj} \Bigl[\Bigl(1-\tfrac{\epsilon_\infty}{2}\Bigr) q\!\left(\zeta_i,\bm{\beta},\gamma \right) + \tfrac{\epsilon_\infty}{2} \Bigr] \Bigl[\Bigl(1-\tfrac{\epsilon_\infty}{2}\Bigr) q\!\left(\zeta_j,\bm{\beta},\gamma \right) + \tfrac{\epsilon_\infty}{2} \Bigr] \\
=& \sum_{s=1}^t w_{ts}^2 \Bigl[\Bigl(1-\tfrac{\epsilon_\infty}{2}\Bigr) q\!\left(\zeta_s,\bm{\beta},\gamma \right) + \tfrac{\epsilon_\infty}{2} \Bigr]^2 + 2\sum_{i < j}^t w_{ti}w_{tj} \Bigl[\Bigl(1-\tfrac{\epsilon_\infty}{2}\Bigr) q\!\left(\zeta_i,\bm{\beta},\gamma \right) + \tfrac{\epsilon_\infty}{2} \Bigr] \Bigl[\Bigl(1-\tfrac{\epsilon_\infty}{2}\Bigr) q\!\left(\zeta_j,\bm{\beta},\gamma \right) + \tfrac{\epsilon_\infty}{2} \Bigr].
\end{aligned}$}
\] 
Therefore,
$2\sum_{i<j}^t w_{ti}w_{tj}\,\mathbb{E}(a_i a_j) 
- \left( \sum_{s=1}^t w_{ts} 
\left[\left(1-\tfrac{\epsilon_\infty}{2}\right) q\!\left(\zeta_s,\bm{\beta},\gamma \right) 
+ \tfrac{\epsilon_\infty}{2} \right] \right)^2 \to 0$,
which implies that
\begin{equation}
\label{equ3}
    \lim_{t \to \infty}2\sum_{i<j}^t w_{ti}w_{tj}\,\mathbb{E}(a_i a_j) =\kappa_\infty^2.
\end{equation}

\paragraph{Step 3: Combine the terms and apply Markov's inequality.}
Combining \eqref{equ4} and \eqref{equ3}, we obtain from \eqref{equ1} that
$\mathbb{E}(\kappa_t-\kappa_\infty)^2 \;\to\; 
\kappa_\infty^2 - 2\kappa_\infty^2 + \kappa_\infty^2 = 0$.
By Markov's inequality, for any $\epsilon > 0$,
\[
\Pr(|\kappa_t-\kappa_\infty|>\epsilon) 
= \Pr\!\left((\kappa_t-\kappa_\infty)^2>\epsilon^2\right) 
\le \frac{\mathbb{E}(\kappa_t-\kappa_\infty)^2}{\epsilon^2} \;\to\; 0.
\]
Hence, we conclude that $\kappa_t \stackrel{p}{\rightarrow} \kappa_\infty$ as $t \to \infty$.

\textbf{Under the weaker conditions stated in Theorem~\ref{theor:kappat}, two modifications arise in the proof.}
The first concerns the leading term of \eqref{equ2}. Since
$\sup_t \sum_{s=1}^t w_{ts}^2 \le L_z^2 < \infty$ and $w_{ts}^2 \to 0$ for each fixed $s$,
applying the same reasoning as in Step~1 and repeated applications of Lemma~\ref{lemma:Toeplitz}, we obtain
\[
\sum_{s=1}^t w_{ts}^2 \,\mathbb{E}(a_s) 
- \sum_{s=1}^t w_{ts}^2 \!\left[\Bigl(1-\tfrac{\epsilon_\infty}{2}\Bigr)\mathbb{E}(a_s^\ast) 
+ \tfrac{\epsilon_\infty}{2}\right] \;\to\; 0.
\]
Since $\sum_{s=1}^{t-1} w_{ts}^2 \,\mathbb{E}(a_s^\ast) \to 0$ by assumption, 
it follows that the first term in \eqref{equ2} satisfies
$\sum_{s=1}^{t} w_{ts}^2 \,\mathbb{E}(a_s) \to 0$.

\noindent
The second modification appears in the proof of \eqref{equ5}. We note that
\[
\sum_{i=1}^t w_{ti}^2 
\left[\left(1-\tfrac{\epsilon_\infty}{2}\right) q\!\left(\zeta_i,\bm{\beta},\gamma \right) + \tfrac{\epsilon_\infty}{2} \right]^2 \;\le\; 
\sum_{i=1}^t w_{ti}^2 \left[\left(1-\tfrac{\epsilon_\infty}{2}\right) q\!\left(\zeta_i,\bm{\beta},\gamma \right) \tfrac{\epsilon_\infty}{2} \right].
\]
When $\epsilon_\infty=0$, the difference becomes
\begin{equation*}
\sum_{i=1}^t w_{ti}^2 \left[\left(1-\tfrac{\epsilon_\infty}{2}\right) q\!\left(\zeta_i,\bm{\beta},\gamma \right) 
+ \tfrac{\epsilon_\infty}{2} \right]
- \sum_{i=1}^t w_{ti}^2\,\mathbb{E}(a_i^\ast) =\sum_{i=1}^t w_{ti}^2 
\left[q\!\left(\zeta_i,\bm{\beta},\gamma \right) - q\!\left(\zeta_i,\bm{\beta},\gamma \right)\right] =0.
\end{equation*}
Since we have assumed $\sum_{i=1}^t w_{ti}^2\,\mathbb{E}(a_i^\ast) \to 0$, it follows that $\sum_{i=1}^t w_{ti}^2 \left[\left(1-\tfrac{\epsilon_\infty}{2}\right) q\!\left(\zeta_i,\bm{\beta},\gamma \right) + \tfrac{\epsilon_\infty}{2}\right]$ goes to 0.
Therefore, we conclude that 
$\sum_{i=1}^t w_{ti}^2 \left[\left(1-\tfrac{\epsilon_\infty}{2}\right) q\!\left(\zeta_i,\bm{\beta},\gamma \right) + \tfrac{\epsilon_\infty}{2}\right]^2 \to 0$,
and hence the proof remains valid under the weaker conditions.
\end{proof}

\subsection{Corollary~\ref{theor:gt} (Growing interference scale with equal weights)}
\label{app:gt}
\begin{proof}
Since $w_{ts} = \frac{\mathbb{I}(t-g(t) \le s \le t-1)}{g(t)}$, we have $\kappa_t = \frac{1}{g(t)} \sum_{s=t-g(t)}^{t-1} a_s$, 
and the optimal policy becomes  
\begin{equation*}
    a_t^\ast = \pi^\ast(\bm{x_t})
    = \mathbb{I}\!\left\{\phi(\bm{x}_t)^{\top} (\bm{\beta}_1 - \bm{\beta}_0) 
    + \Bigg[\sum_{s \in \mathcal{A}_t} \frac{1}{g(s)}\Bigg] \gamma \ge 0 \right\},
\end{equation*}
where $\mathcal{A}_t = \{\, s : s - g(s) \leq t \leq s - 1 \,\}$ denotes the set of future individuals for whom the $t$-th individual lies in their neighborhood.

\noindent
\textbf{We first verify the conditions in Theorem~\ref{theor:kappaast}.} We start from
\begin{equation}
\label{equ6}
\sum_{s=1}^{t-1} w_{ts}^2\text{Var}\,(a_s^\ast) \le \sum_{s=1}^{t-1} w_{ts}^2 = \sum_{s=t-g(t)}^{t-1} \frac{1}{g(t)^2} =\frac{1}{g(t)}\to 0, 
\end{equation}
as $t \to \infty$, since $g(t) \to \infty$. 
\textbf{We next prove the existence of $\kappa_\infty^\ast$ in Theorem~\ref{theor:kappaast}.}  
In particular, we derive the closed-form expression of $\lim_{t \to \infty}\sum_{s \in \mathcal{A}_t} \frac{1}{g(s)}$
for two special cases of the neighborhood size: linear growth and square-root growth.

\paragraph{Case 1.} $g(t) = \lfloor \rho t \rfloor$.

\noindent
The set of individuals receiving the influence from the $t$-th individual is
\[
\mathcal{A}_t =\left\{s: s- \lfloor \rho s \rfloor \le t \le s-1 \right\} = \left\{s: t+1 \le s \le \left\lfloor \frac{t}{1-\rho} \right\rfloor \right\},
\]
and then $\sum_{s\in \mathcal{A}_t} \frac{1}{g(s)}$ is given by $\sum_{s=t+1}^{\left\lfloor \frac{t}{1-\rho} \right\rfloor} \frac{1}{g(s)}$. For this summation, we can bound it by
\[
\int_{t+1}^{\left\lfloor \frac{t}{1-\rho} \right\rfloor+1}\frac{1} {\lfloor \rho s\rfloor} \, ds \le \sum_{s=t+1}^{\left\lfloor \frac{t}{1-\rho} \right\rfloor} \frac{1}{\lfloor \rho s \rfloor} \le \int_t^{\left\lfloor \frac{t}{1-\rho} \right\rfloor}\frac{1} {\lfloor \rho s\rfloor} \, ds.
\]
The lower bound can be formulated as
\[
\int_{t+1}^{\left\lfloor \frac{t}{1-\rho} \right\rfloor+1}\frac{1} {\lfloor \rho s\rfloor}\,ds \ge \int_{t+1}^{\frac{t}{1-\rho}} \frac{1}{\rho s}\,ds = \frac{1}{\rho} \ln{\rho s}\Big|_{t+1}^{\frac{t}{1-\rho}} = \frac{1}{\rho} \ln{\frac{\frac{\rho t}{1-\rho}}{\rho(t+1)}},
\]
then let $t \to \infty$ and by L'Hopital's rule, we have $\frac{1}{\rho} \lim_{t \to \infty}\ln{\frac{\frac{\rho t}{1-\rho}}{\rho(t+1)}} = \frac{1}{\rho} \ln{\frac{1}{1-\rho}}$. Similarly, the upper bound can be formulated as 
\[
\int_t^{\left\lfloor \frac{t}{1-\rho} \right\rfloor}\frac{1} {\lfloor \rho s\rfloor} \, ds \le \int_t^{\frac{t}{1-\rho}}\frac{1} {\rho s-1} \, ds = \frac{1}{\rho}\ln{(\rho s-1)}\Big|_{t}^{\frac{t}{1-\rho}} = \frac{1}{\rho}\ln{\frac{\frac{\rho t}{1-\rho}-1}{\rho t-1}},
\]
then let $t \to \infty$ and by L'Hopital's rule, we have $\frac{1}{\rho} \lim_{t \to \infty}\ln{\frac{\frac{\rho t}{1-\rho}-1}{\rho t-1}} = \frac{1}{\rho} \ln{\frac{1}{1-\rho}}$.
Finally, by Squeeze Theorem, when $g(t) = \lfloor \rho t \rfloor$, we have $\lim_{t \to \infty} \sum_{s \in \mathcal{A}_t} \frac{1}{g(s)} = \frac{1}{\rho} \ln{\frac{1}{1-\rho}}$.

\paragraph{Case 2.} $g(t) = \lfloor \rho \sqrt{t} \rfloor$.

\noindent
By the quadratic formula, the set of individuals that receive the influence of the $t$-th individual is
{\small
\begin{align*}
    \mathcal{A}_t =\left\{s: s-\lfloor \rho \sqrt{s} \rfloor \le t \le s-1 \right\} = \left\{s: t+1 \le s \le   \left\lfloor \frac{(\rho+\sqrt{\rho^2+4t})^2}{4}\right\rfloor \right\},
\end{align*}
}
then $\sum_{s\in \mathcal{A}_t} \frac{1}{g(s)} =\sum_{s=t+1}^{\left\lfloor \frac{(\rho+\sqrt{\rho^2 + 4t})^2}{4}\right\rfloor} \frac{1}{g(s)}$.\
Similarly, we obtain the lower and upper bounds:
\[
\int_{t+1}^{\left\lfloor \frac{(\rho+\sqrt{\rho^2 + 4t})^2}{4}\right\rfloor+1}\frac{1} {\lfloor \rho \sqrt{s}\rfloor}\,ds \le \sum_{s=t+1}^{\left\lfloor \frac{(\rho+\sqrt{\rho^2 + 4t})^2}{4}\right\rfloor} \frac{1}{\lfloor \rho \sqrt{s} \rfloor} \le \int_t^{\left\lfloor \frac{(\rho+\sqrt{\rho^2 + 4t})^2}{4}\right\rfloor}\frac{1} {\lfloor \rho \sqrt{s} \rfloor}\,ds.
\]
The lower bound can be formulated as
\begin{eqnarray*}
\int_{t+1}^{\left\lfloor \frac{(\rho+\sqrt{\rho^2 + 4t})^2}{4}\right\rfloor+1}\frac{1} {\lfloor \rho \sqrt{s}\rfloor}\,ds \ge \frac{2}{\rho}\sqrt{s} \Big|_{t+1}^{\frac{(\rho+\sqrt{\rho^2 + 4t})^2}{4}} = \frac{2}{\rho}\left(\frac{\rho+\sqrt{\rho^2+4t}}{2} - \sqrt{t+1} \right),
\end{eqnarray*}
then let $t \to \infty$, we have $\lim_{t \to \infty}\left(\frac{\sqrt{\rho^2+4t}}{2} - \sqrt{t+1} \right)=0$.
Finally, the limit is given by $\frac{2}{\rho} \lim_{t \to \infty}\left(\frac{\rho+\sqrt{\rho^2+4t}}{2} - \sqrt{t+1} \right)= 1$.
We next consider the upper bound,
\[
\int_t^{\left\lfloor \frac{(\rho+\sqrt{\rho^2 + 4t})^2}{4}\right\rfloor}\frac{1} {\lfloor \rho \sqrt{s} \rfloor}\,ds \le \int_t^{\frac{(\rho+\sqrt{\rho^2 + 4t})^2}{4}}\frac{1} {\rho \sqrt{s}-1}\,ds = 
\left( \frac{2}{\rho}\sqrt{s}+\frac{2}{\rho}\ln{(\rho\sqrt{s}-1)} \right) \Big|_{t}^{\frac{(\rho+\sqrt{\rho^2 + 4t})^2}{4}}.
\]
Following a similar approach as in deriving the lower bound, when $t \to \infty$, the first term becomes $\frac{2}{\rho}\lim_{t \to \infty}\left(\frac{\rho+\sqrt{\rho^2+4t}}{2} - \sqrt{t} \right) =1$.
For the second term, by L'Hopital's rule, we have,
$\frac{2}{\rho}\lim_{t \to \infty} \ln{\left(\frac{\rho(\rho+\sqrt{\rho^2+4t})/2-1}{\rho \sqrt{t}-1} \right)} = 0$,
thus the limit of upper bound is also 1. By Squeeze Theorem, we have $\lim_{t \to \infty} \sum_{s \in \mathcal{A}_t} \frac{1}{g(s)}=1$. 
Therefore, in both cases we can show the existence of $\kappa_\infty^\ast$:
\[ 
\resizebox{1\hsize}{!}{$ \begin{aligned}
    \kappa_\infty^\ast 
    &= \lim_{t \to \infty} \sum_{s=t-g(t)}^{t-1} \frac{1}{g(t)}\, 
    \Pr\!\left(\phi(\bm{x})^{\top} (\bm{\beta}_1-\bm{\beta}_0) + \sum_{s \in \mathcal{A}_t} \frac{1}{g(s)} \gamma \ge 0\right) \\
    &= \lim_{t \to \infty} \frac{1}{g(t)} \sum_{s=t-g(t)}^{t-1}\Pr\!\left(\phi(\bm{x})^{\top} (\bm{\beta}_1-\bm{\beta}_0) + \sum_{s \in \mathcal{A}_t} \frac{1}{g(s)} \gamma \ge 0\right) = \Pr\!\left(\phi(\bm{x})^{\top} (\bm{\beta}_1-\bm{\beta}_0) + \lim_{t \to \infty} \sum_{s \in \mathcal{A}_t} \frac{1}{g(s)} \gamma \ge 0\right).
\end{aligned}$}
\]

\noindent
\textbf{We next verify the conditions (ii) and (iii) in Theorem~\ref{theor:kappat}.}
\noindent
(ii) The argument follows directly from the proof of \eqref{equ6}.
\noindent
(iii) Recall that $w_{ts} = \frac{\mathbb{I}(t-g(t) \le s \le t-1)}{g(t)}$.
In both cases of $g(t)$, we have $t-g(t) \to \infty$ as $t \to \infty$. 
Hence, for sufficiently large $t$, it holds that $t-g(t) > s$ for any fixed $s$, which implies $w_{ts} \to 0$. The proof is hence completed. 
\end{proof}

\subsection{Theorem~\ref{theor:infer} (Asymptotic normality for the online estimator)}
\label{app:infer}

\begin{proof}
    Based on our online estimator, we have
\begin{equation*}
    \sqrt{t}(\widehat{\bm{\theta}}_{t} - \bm{\theta}) = \left(\frac{1}{t} \sum_{s=1}^t \bm{z}_s \bm{z}_s^{\top}\right)^{-1} \left( \frac{1}{\sqrt{t}} \sum_{s=1}^t \bm{z}_s e_s \right), \quad i=0,1.
\end{equation*}

\paragraph{Step 1:} First we will show that
$$\frac{1}{\sqrt{t}} \sum_{s=1}^t \bm{z}_s e_s \stackrel{D}{\to}\mathcal{N}_d(0,G).$$
Using Cramer-Wold device, it suffices to show for any $\bm{v} \in \mathbb{R}^d$, $$\frac{1}{\sqrt{t}} \sum_{s=1}^t \bm{v}^{\top}\bm{z}_s e_s \stackrel{D}{\to}\mathcal{N}_d(0,\bm{v}^{\top} G \bm{v}).$$
For $1 \le j\le t$ and $t \ge 1$, define $\mathcal{H}_{tj} = \mathcal{H}_j$, and $M_{tj} = \frac{1}{\sqrt{t}} \sum_{s=1}^t \bm{v}^{\top} \bm{z}_s e_s$.
Note that 
\begin{align*}
&\mathbb{E}(\bm{v}^{\top} \bm{z}_s e_s \vert \mathcal{H}_{t,s-1}) = \mathbb{E}(\mathbb{E}(\bm{v}^{\top} \bm{z}_s e_s\vert \mathcal{H}_{t,s-1},a_s) \vert \mathcal{H}_{t,s-1})= \mathbb{E}(\bm{v}^{\top} \bm{z}_s \vert \mathcal{H}_{t,s-1}) \mathbb{E}(e_s\vert \mathcal{H}_{t,s-1},a_s)=0,
\end{align*}
indicating that $\left\{M_{tj},\mathcal{H}_{tj}, 1\le j\le t, t\ge 1 \right\}$ is a martingale array. 

\noindent
We will now prove the convergence of $M_{tt}$ using Martingale Central Limit Theorem (see Theorem 3.2 in \cite{hall2014martingale}). The proof proceeds in two steps: firstly we will verify the conditional Lindeberg condition is satisfied and secondly we will find the limit of conditional variance. \\
\textbf{(a) Check the conditional Lindeberg condition.} For $\forall \delta >0$,
\[ 
\resizebox{1\hsize}{!}{$ \begin{aligned}
    &\sum_{s=1}^t \mathbb{E} \left[ \frac{1}{t} \left(\bm{v}^{\top} \bm{z}_s e_s \right)^2 \mathbb{I} \left\{\bm{v}^{\top} \bm{z}_s e_s > \delta \sqrt{t} \right\} \Big| \mathcal{H}_{t,s-1} \right]  
     \le   \frac{\Vert \bm{v} \Vert_2^2 L_z^2d} {t} \sum_{s=1}^t \mathbb{E} \left[ e_s^2 \, \mathbb{I} \left\{e_s^2 > \frac{\delta^2 t}{\|\bm{v}\|^2 L_z^2 d} \right\} \Big| \mathcal{H}_{s-1} \right], 
\end{aligned}$}
\] 
where $e_s$ conditioned on $a_s$ are i.i.d distributed, so the right hand side comes to be 
$$\Vert \bm{v} \Vert_2^2 L_z^2d \mathbb{E} \left[e^2 \mathbb{I} \left\{e^2 > \frac{\delta^2 t}{\|\bm{v}\|^2 L_z^2 d} \right\} \right],$$
where $e$ is a random variable defined by $e_s \vert \mathcal{H}_{s-1}$. Since $e^2 \mathbb{I} \left\{e^2 > \frac{\delta^2 t}{\|\bm{v}\|^2 L_z^2 d} \right\}$ is bounded by $e^2$ with $\mathbb{E}(e^2) < \infty$ and convergences to 0 almost surely as $t \to \infty$. Therefore, by Dominated Convergence Theorem, we get
\[
\sum_{s=1}^t \mathbb{E} \left[ \frac{1}{t} \left(\bm{v}^{\top} \bm{z}_s e_s \right)^2 \mathbb{I} \left\{\bm{v}^{\top} \bm{z}_s e_s > \delta \sqrt{t} \right\} \Big| \mathcal{H}_{t,s-1} \right] \to 0, \text{ as } t\to \infty.
\]
\textbf{(b) Derive the limit of the conditional variance.} The conditional variance is given by
\begin{align*}
    \widehat{\eta}_t^2 &= \sum_{s=1}^t \mathbb{E}\left\{\frac{1}{t} (\bm{v}^{\top} \bm{z}_s e_s)^2 \Big| \mathcal{H}_{t,s-1} \right\}  = \frac{1}{t} \sum_{s=1}^t \mathbb{E} \left\{\left( \bm{v}^{\top} \bm{z}_s \right)^2 \mathbb{E} \left[ e_s^2 \mid a_s , \bm{x}_s \right] \mid \mathcal{H}_{s-1} \right\}.
\end{align*}
Given $\mathcal{H}_{s-1}$, $\kappa_s$ is known, so the expectation is take with respect to $\bm{x}_s$ and $a_s$. As for the random noise, given $a_s$, $e_s$ is independent of $\mathcal{H}_{s-1}$ and $\bm{x}_s$, and $\mathbb{E}(e_s^2|a_s=i,\bm{x}_s) = \sigma_i^2$. Thus we have 
\begin{align}
   & \widehat{\eta}_t^2 = \frac{1}{t} \sum_{s=1}^t \mathbb{E} \left\{\left( \bm{v}^{\top} \bm{z}_s \right)^2 \sigma_{a_s}^2 \mid \mathcal{H}_{s-1} \right\} \nonumber =\frac{1}{t} \sum_{s=1}^t \mathbb{E} \left\{\mathbb{E} \left\{\left( \bm{v}^{\top} \bm{z}_s \right)^2 \sigma_{a_s}^2 \mid \bm{x}_s,\mathcal{H}_{s-1} \right\} \mid \mathcal{H}_{s-1}\right\} \nonumber\\
     = &\frac{1}{t} \sum_{s=1}^t \mathbb{E} \left\{\left( \bm{v}^{\top} \bm{z}_s^{(0)} \right)^2 \sigma_{0}^2 \Pr(a_s=0\mid \bm{x}_s,\mathcal{H}_{s-1}) + \left( \bm{v}^{\top} \bm{z}_s^{(1)} \right)^2 \sigma_{1}^2 \Pr(a_s=1\mid \bm{x}_s,\mathcal{H}_{s-1}) \mid \mathcal{H}_{s-1}\right\}, \label{equ14}
\end{align}
where $\bm{z}_s^{(0)} = (\phi(\bm{x}_s)^{\top}, \bm{0}^{\top}, \kappa_s)^{\top}$ and $\bm{z}_s^{(1)} = (\bm{0}^{\top}, \phi(\bm{x}_s)^{\top}, \kappa_s)^{\top}$, representing the forms of $\bm{z}_s$ corresponding to $a_s = 0$ and $a_s = 1$, respectively.
Let $\bm{v} = (\bm{v}^{\top}_1,\bm{v}^{\top}_2,v_3)^{\top}$ to align with $\bm{z}_s^{(0)}$ and $\bm{z}_s^{(1)}$, where $\bm{v}_1,\bm{v}_2 \in \mathbb{R}^{d_1}$ and $v_3 \in \mathbb{R}$. Then we have 
\begin{align*}
   \left( \bm{v}^{\top} \bm{z}_s^{(0)} \right)^2 = \left(\bm{v}_1^{\top} \phi(\bm{x}_s) + v_3\kappa_s \right)^2,  \quad  \left( \bm{v}^{\top} \bm{z}_s^{(1)} \right)^2 = \left(\bm{v}_2^{\top} \phi(\bm{x}_s) + v_3\kappa_s \right)^2.
\end{align*}
Thus \eqref{equ14} can be written as
\begin{align*}
    \widehat{\eta}_t^2 =& \frac{\sigma_0^2 }{t} \sum_{s=1}^t \mathbb{E}\left\{\left(\bm{v}_1^{\top} \phi(\bm{x}_s) \phi(\bm{x}_s)^{\top} \bm{v}_1 + v_3^2\kappa_s^2+2\bm{v}_1^{\top}\phi(\bm{x}_s) v_3\kappa_s\right) \Pr(a_s=0\mid \bm{x}_s,\mathcal{H}_{s-1})\Big| \mathcal{H}_{s-1}\right\} \\
    &+ \frac{\sigma_1^2 }{t} \sum_{s=1}^t \mathbb{E}\left\{\left(\bm{v}_2^{\top} \phi(\bm{x}_s) \phi(\bm{x}_s)^{\top} \bm{v}_2 + v_3^2\kappa_s^2+2\bm{v}_2^{\top}\phi(\bm{x}_s) v_3\kappa_s\right) \Pr(a_s=1\mid \bm{x}_s,\mathcal{H}_{s-1}) \Big| \mathcal{H}_{s-1}\right\}.
\end{align*}
Denote the difference between estimated parameter, $\widehat{\bm{\beta}}_{1,s-1} - \widehat{\bm{\beta}}_{0,s-1}$, as $\widehat{\bm{\beta}}_{s-1}$. We have defined $\zeta_t = \sum_{s=t+1}^\infty w_{st}$ in Proposition~\ref{prop:optimal}. Then the second expectation term in the above equation can be expressed as a continuous function $p_1$ of $\widehat{\bm{\beta}}_{s-1}$, $\widehat{\gamma}_{s-1}$, $\kappa_s$, $\zeta_s$ and $\epsilon_s$,
\begin{align*}
    p_1(\widehat{\bm{\beta}}_{s-1},\widehat{\gamma}_{s-1},\kappa_s,\zeta_s,\epsilon_s) &= \frac{\epsilon_s}{2}\int (\bm{v}_2^{\top} \phi(\bm{x}_s) \phi(\bm{x}_s)^{\top} \bm{v}_2 + v_3^2\kappa_s^2+2\bm{v}_2^{\top}\phi(\bm{x}_s) v_3\kappa_s) \, d\mathcal{P}_X \\
    &+ (1-\epsilon_s) \int_{\mathcal{X}_{1s}} (\bm{v}_2^{\top} \phi(\bm{x}_s) \phi(\bm{x}_s)^{\top} \bm{v}_2 + v_3^2\kappa_s^2+2\bm{v}_2^{\top}\phi(\bm{x}_s)v_3\kappa_s) \,d\mathcal{P}_X,
\end{align*}
where $\mathcal{X}_{1s} = \left\{\bm{x}:  \phi(\bm{x})^{\top} \widehat{\bm{\beta}}_{s-1}+ \zeta_s \widehat{\gamma}_{s-1} \geq 0 \right\}$
and $\mathcal{X}_{0s} = \mathbb{R}^{d_0} \backslash \mathcal{X}_{1s}$. 
We can rewrite $p_1$ as
\[ 
\resizebox{1\hsize}{!}{$ \begin{aligned}
    &p_1(\widehat{\bm{\beta}}_{s-1},\widehat{\gamma}_{s-1},\kappa_s,\zeta_s,\epsilon_s) =\frac{\epsilon_s}{2}\int (\bm{v}_2^{\top} \phi(\bm{x}_s)  \phi(\bm{x}_s)^{\top} \bm{v}_2 + v_3^2\kappa_s^2+2\bm{v}_2^{\top}\phi(\bm{x}_s)v_3\kappa_s) \, d\mathcal{P}_X \\
    &~~~~+ (1-\epsilon_s) \int\mathbb{I}\left\{\phi(\bm{x}_s)^{\top} \widehat{\bm{\beta}}_{s-1} + \zeta_s \widehat{\gamma}_{s-1} \geq 0\right\}\left(\bm{v}_2^{\top} \phi(\bm{x}_s) \phi(\bm{x}_s)^{\top} \bm{v}_2 + v_3^2\kappa_s^2+2\bm{v}_2^{\top}\phi(\bm{x}_s)v_3\kappa_s\right) \,d\mathcal{P}_X,
\end{aligned}$}
\] 
and similarly, 
\[ 
\resizebox{1\hsize}{!}{$ \begin{aligned}
    &p_0(\widehat{\bm{\beta}}_{s-1},\widehat{\gamma}_{s-1},\kappa_s,\zeta_s,\epsilon_s)
    = \frac{\epsilon_s}{2}\int (\bm{v}_1^{\top} \phi(\bm{x}_s) \phi(\bm{x}_s)^{\top} \bm{v}_1 + v_3^2\kappa_s^2+2\bm{v}_1^{\top}\phi(\bm{x}_s)v_3\kappa_s) \, d\mathcal{P}_X \\
    &~~~~+ (1-\epsilon_s) \int\mathbb{I}\left\{ \phi(\bm{x}_s)^{\top} \widehat{\bm{\beta}}_{s-1} + \zeta_s \widehat{\gamma}_{s-1} < 0\right\} \,(\bm{v}_1^{\top} \phi(\bm{x}_s) \phi(\bm{x}_s)^{\top} \bm{v}_1 + v_3^2\kappa_s^2+2\bm{v}_1^{\top}\phi(\bm{x}_s)v_3\kappa_s) \,d\mathcal{P}_X.
\end{aligned}$}
\] 
Then by Corollary~\ref{coro:consistency}, Theorem~\ref{theor:kappat} and Continuous Mapping Theorem, 
\begin{equation*}
    p_i(\widehat{\bm{\beta}}_{s-1},\widehat{\gamma}_{s-1},\kappa_s,\zeta_s,\epsilon_s) \to p_i(\bm{\beta},\gamma,\kappa_\infty,\zeta_\infty,\epsilon_\infty),
\end{equation*}
as $s\to\infty$, where $\bm{\beta} \equiv \bm{\beta}_1 - \bm{\beta}_0$. Moreover, $p_i(\widehat{\bm{\beta}}_{s-1},\widehat{\gamma},\kappa_s,\zeta_s,\epsilon_s)$ is bounded by $\bm{v}^{\top} \mathbb{E}(\bm{z}_s \bm{z}_s^{\top})\bm{v} \le d\Vert \bm{v}\Vert_2^2 L_z^2$, then by Lemma~\ref{lemma:weighted-convergence}, we have 
\begin{align*}
    \widehat{\eta}_t^2 \stackrel{p}{\to} \sigma_0^2 p_0(\bm{\beta},\gamma,\kappa_\infty,\zeta_\infty,\epsilon_\infty) + \sigma_1^2 p_1(\bm{\beta},\gamma,\kappa_\infty,\zeta_\infty,\epsilon_\infty),
\end{align*}
where 
\begin{align*}
p_0(\bm{\beta},\gamma,\kappa_\infty,\zeta_\infty,\epsilon_\infty) =& \frac{\epsilon_\infty}{2}\int (\bm{v}_1^{\top} \phi(\bm{x}_s) \phi(\bm{x}_s)^{\top} \bm{v}_1 + v_3^2\kappa_\infty^2+2\bm{v}_1^{\top}\phi(\bm{x}_s)v_3\kappa_\infty) \, d\mathcal{P}_X \\
    &+ (1-\epsilon_\infty) \int_{\mathcal{X}_0} \,(\bm{v}_1^{\top} \phi(\bm{x}_s) \phi(\bm{x}_s)^{\top} \bm{v}_1 + v_3^2\kappa_\infty^2+2\bm{v}_1^{\top}\phi(\bm{x}_s)v_3\kappa_\infty) \,d\mathcal{P}_X, \\
p_1(\bm{\beta},\gamma,\kappa_\infty,\zeta_\infty,\epsilon_\infty) =& \frac{\epsilon_\infty}{2}\int (\bm{v}_2^{\top} \phi(\bm{x}_s) \phi(\bm{x}_s)^{\top} \bm{v}_2 + v_3^2\kappa_\infty^2+2\bm{v}_2^{\top}\phi(\bm{x}_s)v_3\kappa_\infty) \, d\mathcal{P}_X \\
    &+ (1-\epsilon_\infty) \int_{\mathcal{X}_1} \,(\bm{v}_2^{\top} \phi(\bm{x}_s) \phi(\bm{x}_s)^{\top} \bm{v}_2 + v_3^2\kappa_\infty^2+2\bm{v}_2^{\top}\phi(\bm{x}_s)v_3\kappa_\infty) \,d\mathcal{P}_X.
\end{align*}
with $\mathcal{X}_1 = \left\{\bm{x}: \phi(\bm{x})^{\top} (\bm{\beta}_1 - \bm{\beta}_0) + \zeta_\infty \gamma \geq 0 \right\}$ and $\mathcal{X}_0 = \left\{\bm{x}: \phi(\bm{x})^{\top} (\bm{\beta}_1 - \bm{\beta}_0) + \zeta_\infty \gamma < 0 \right\}$. 
To move $\bm{v}$ out of the integral, we can rewrite the above equation as $\widehat{\eta}_t^2 \stackrel{p}{\to} \bm{v}^\top (\sigma_0^2 G_0+\sigma_1^2 G_1) \bm{v}$,
where $G_0$ is 
\[ 
\resizebox{1\hsize}{!}{$ \begin{pmatrix}
        \frac{\epsilon_{\infty}}{2} \int \phi(\bm{x})\phi(\bm{x})^{\top} \,d\mathcal{P}_X+(1-\epsilon_{\infty})\int_{\mathcal{X}_0} \phi(\bm{x})\phi(\bm{x})^{\top} \,d\mathcal{P}_X  & \bm{0} & \kappa_{\infty}\left[ \frac{\epsilon_{\infty}}{2} \int \phi(\bm{x})^{\top}\,d\mathcal{P}_X+(1-\epsilon_{\infty})\int_{\mathcal{X}_0} \phi(\bm{x})^{\top} \,d\mathcal{P}_X\right]\\
        \bm{0} & \bm{0} & \bm{0} \\
        \kappa_{\infty}\left[ \frac{\epsilon_{\infty}}{2} \int \phi(\bm{x})\,d\mathcal{P}_X+(1-\epsilon_{\infty})\int_{\mathcal{X}_0} \phi(\bm{x}) \,d\mathcal{P}_X\right] & \bm{0} & \kappa_{\infty}^2 \left[\frac{\epsilon_{\infty}}{2} +(1-\frac{\epsilon_{\infty}}{2}) \Pr(\bm{x} \in \mathcal{X}_0) \right]
    \end{pmatrix},$}
    \] 
and $G_1$ is 
\[  
\resizebox{1\hsize}{!}{$ \begin{pmatrix}
        \bm{0}  & \bm{0} & \bm{0}\\
        \bm{0} & \frac{\epsilon_{\infty}}{2} \int \phi(\bm{x})\phi(\bm{x})^{\top} \,d\mathcal{P}_X+(1-\epsilon_{\infty})\int_{\mathcal{X}_1} \phi(\bm{x})\phi(\bm{x})^{\top} \,d\mathcal{P}_X & \kappa_{\infty}\left[ \frac{\epsilon_{\infty}}{2} \int \phi(\bm{x})^{\top}\,d\mathcal{P}_X+(1-\epsilon_{\infty})\int_{\mathcal{X}_1} \phi(\bm{x})^{\top} \,d\mathcal{P}_X\right] \\
        \bm{0} & \kappa_{\infty}\left[ \frac{\epsilon_{\infty}}{2} \int \phi(\bm{x})\,d\mathcal{P}_X+(1-\epsilon_{\infty})\int_{\mathcal{X}_1} \phi(\bm{x}) \,d\mathcal{P}_X\right] & \kappa_{\infty}^2 \left[\frac{\epsilon_{\infty}}{2} +(1-\frac{\epsilon_{\infty}}{2}) \Pr(\bm{x} \in \mathcal{X}_1) \right]
    \end{pmatrix}.$}
    \] 
Finally, by Martingale Central Limit Theorem, we have $\frac{1}{\sqrt{t}}\sum_{s=1}^t \bm{z}_s e_s \stackrel{D}{\to} \mathcal{N}_d(\bm{0},G)$,
with $G = \sigma_0^2 G_0+\sigma_1^2 G_1$.

\paragraph{Step 2:} Secondly, we will show the limit of the second moment term. By Lemma~\ref{lemma:matrix-convergence}, for $\forall \bm{v} = (\bm{v}_1^{\top},\bm{v}_2^{\top},v_3)^{\top} \in \mathbb{R}^d$, we need to find the limit of 
$
\widehat{\xi}_t = (1/t)\sum_{s=1}^t \bm{v}^{\top} \bm{z}_s \bm{z}_s^{\top} \bm{v}. $ 
Let $\bm{x} \sim \mathcal{P}_X$ and define $\tilde{\bm{z}}$ as
\begin{equation}
\label{equ15}
    \tilde{\bm{z}} = (\phi(\bm{x})^{\top}, \phi(\bm{x})^{\top}, L_z).
\end{equation}
Then for any $h >0$ and each $s$, we have 
\begin{equation*}
    \Pr(\bm{v}^{\top} \bm{z}_s \bm{z}_s^{\top} \bm{v}>h)
    =\Pr(\Vert \bm{v}^{\top} \bm{z}_s\Vert_2^2 >h) 
    \le \Pr(\Vert \bm{v}\Vert_2^2 \Vert \bm{z}_s \Vert_2^2 >h)
    \le \Pr(\Vert \bm{v}\Vert_2^2 \Vert \tilde{\bm{z}} \Vert_2^2 >h),
\end{equation*}
and $\mathbb{E}(\Vert \bm{v}\Vert_2^2 \Vert \tilde{\bm{z}} \Vert_2^2) \le \Vert \bm{v}\Vert_2^2 L_z^2 < \infty$. By Theorem 2.19 in \cite{hall2014martingale}, we have
\begin{align*}
    &\frac{1}{t}\sum_{s=1}^t \left\{\bm{v}^{\top} \bm{z}_s \bm{z}_s^{\top} \bm{v} - \mathbb{E}(\bm{v}^{\top} \bm{z}_s \bm{z}_s^{\top} \bm{v} \mid \mathcal{H}_{s-1})\right\} \\
    =& \widehat{\xi}_t - \frac{1}{t}\sum_{s=1}^t\left[p_0(\widehat{\bm{\beta}}_{s-1},\widehat{\gamma}_{s-1},\kappa_s,\zeta_s,\epsilon_s)+ p_0(\widehat{\bm{\beta}}_{s-1},\widehat{\gamma}_{s-1},\kappa_s,\zeta_s,\epsilon_s) \right]\stackrel{p}{\to}0.
\end{align*}
According to the results of the first part, we have 
$\widehat{\xi}_t \stackrel{p}{\to} p_0(\bm{\beta},\gamma,\kappa_\infty,\zeta_\infty,\epsilon_\infty) + p_1(\bm{\beta},\gamma,\kappa_\infty,\zeta_\infty,\epsilon_\infty)$.
Using Lemma~\ref{lemma:matrix-convergence} and Continuous Mapping Theorem, we derive 
\begin{equation}
\label{equ16}
    \left(\frac{1}{t} \sum_{s=1}^t \bm{z}_s \bm{z}_s^{\top} \right)^{-1} \stackrel{p}{\to} (G_0+G_1)^{-1}.
\end{equation}

\paragraph{Step 3:} Combining results from the first two parts and using Slutsky's Theorem, we have
\[
\sqrt{t}(\widehat{\bm{\theta}}_{t} - \bm{\theta}) \stackrel{D}{\to}  (G_0+G_1)^{-1} \mathcal{N}_d(\bm{0},\sigma_0^2 G_0+\sigma_1^2G_1) (G_0+G_1)^{-1} = \mathcal{N}_d(\bm{0},S),
\]
with $S = \left(G_0+G_1\right)^{-1} \left(\sigma_0^2 G_0+\sigma_1^2G_1 \right)\left(G_0+G_1\right)^{-1}=\left(G_0+G_1\right)^{-1} G\left(G_0+G_1\right)^{-1}$.

\paragraph{Step 4:} Finally, we will show the consistency of variance estimator
\[
\left( \frac{1}{t} \sum_{s=1}^t \bm{z}_s \bm{z}_s^{\top} \right)^{-1} \left(\frac{1}{t}\sum_{s=1}^t \bm{z}_s \bm{z}_s^{\top} \widehat{e}_s^2 \right) \left( \frac{1}{t} \sum_{s=1}^t \bm{z}_s \bm{z}_s^{\top} \right)^{-1}.
\]
Based on~\eqref{equ16}, it is sufficient to show 
\[
\widehat{G} := \frac{1}{t} \sum_{s=1}^t \bm{z}_s \bm{z}_s^{\top} \widehat{e}_s^2 = \frac{1}{t} \sum_{s=1}^t \bm{z}_s \bm{z}_s^{\top} \left\{\bm{z}_s^\top(\bm{\theta}-\widehat{\bm{\theta}_t})+e_s \right\}^2
\stackrel{p}{\to} G.
\]
By Lemma~\ref{lemma:matrix-convergence}, it suffices to show for any $\bm{v} \in \mathbb{R}^d$,
\[
\bm{v}^\top \widehat{G}\bm{v} = \frac{1}{t} \sum_{s=1}^t \bm{v}^\top\bm{z}_s \bm{z}_s^{\top} \bm{v} \widehat{e}_s^2 = \frac{1}{t} \sum_{s=1}^t \bm{v}^\top \bm{z}_s \bm{z}_s^{\top} \bm{v}\left\{\bm{z}_s^\top(\bm{\theta}-\widehat{\bm{\theta}_t})+e_s \right\}^2
\stackrel{p}{\to} \bm{v}^\top G \bm{v}.
\]
We will expand the quadratic term and analyze each of the three resulting components,
\[
\bm{v}^\top \widehat{G}\bm{v}= \frac{1}{t} \sum_{s=1}^t \bm{v}^\top\bm{z}_s \bm{z}_s^{\top} \bm{v} \left[ \left\{\bm{z}_s^{\top}(\bm{\theta}-\widehat{\bm{\theta}}_{t}) \right\}^2 + 2\bm{z}_s^{\top}(\bm{\theta}-\widehat{\bm{\theta}}_{t})e_s+e_s^2 \right].
\]
By Corollary~\ref{coro:consistency}, the first term 
\begin{align*}
    (\bm{\theta}-\widehat{\bm{\theta}}_{t})^\top \frac{1}{t} \sum_{s=1}^t (\bm{v}^\top\bm{z}_s \bm{z}_s^{\top} \bm{v}) \bm{z}_s\bm{z}_s^\top (\bm{\theta}-\widehat{\bm{\theta}}_{t}) \le L_z^4 \Vert\bm{v} \Vert_2^2\Vert \bm{\theta} - \widehat{\bm{\theta}}_{t} \Vert_2^2 \stackrel{p}{\to}0.
\end{align*}
We now turn to the second term
\[ 
\resizebox{1\hsize}{!}{$ \begin{aligned}
    & 2(\bm{\theta}-\widehat{\bm{\theta}}_{t})^\top \frac{1}{t} \sum_{s=1}^t (\bm{v}^\top \bm{z}_s \bm{z}_s^{\top} \bm{v}) \bm{z}_s e_s \\
    = & 2(\bm{\theta}-\widehat{\bm{\theta}}_{t})^\top \left[ \frac{1}{t} \sum_{s=1}^t (\bm{v}^\top \bm{z}_s^{(0)} \bm{z}_s^{{(0)}\top} \bm{v}) \mathbb{I} \left\{a_s=0 \right\} \bm{z}_s^{(0)} e_s^{(0)}+ \frac{1}{t} \sum_{s=1}^t (\bm{v}^\top \bm{z}_s^{(1)} \bm{z}_s^{{(1)}\top} \bm{v}) \mathbb{I} \left\{a_s=1 \right\} \bm{z}_s^{(1)} e_s^{(1)}\right],
\end{aligned}$}
\] 
where \(e_s^{(i)}\) represents i.i.d. random noise from \(\mathcal{P}_{ei}\). 
By Corollary~\ref{coro:consistency}, \(\widehat{\bm{\theta}}_{t} \stackrel{p}{\to} \bm{\theta} \), the estimation error converges to zero in probability. Furthermore, by Lemma~\ref{lemma:subgaussian}, we have 
\[
\frac{1}{t} \sum_{s=1}^t (\bm{v}^\top \bm{z}_s^{(0)} \bm{z}_s^{{(0)}\top} \bm{v}) \mathbb{I} \left\{a_s=0 \right\} \bm{z}_s^{(0)} e_s^{(0)} \stackrel{p}{\to} \bm{0};\quad 
\frac{1}{t} \sum_{s=1}^t (\bm{v}^\top \bm{z}_s^{(1)} \bm{z}_s^{{(1)}\top} \bm{v}) \mathbb{I} \left\{a_s=0 \right\} \bm{z}_s^{(1)} e_s^{(1)} \stackrel{p}{\to} \bm{0}.
\]
Thus, the second term $2(\bm{\theta}-\widehat{\bm{\theta}}_{t})^\top \frac{1}{t} \sum_{s=1}^t (\bm{v}^\top \bm{z}_s \bm{z}_s^{\top} \bm{v}) \bm{z}_s e_s \stackrel{p}{\to} 0$.

\noindent
Finally, we show that the last term $\frac{1}{t} \sum_{s=1}^t \bm{v}^\top\bm{z}_s \bm{z}_s^{\top} \bm{v} e_s^2 \stackrel{p}{\to} \bm{v}^\top G \bm{v}$.
As shown in Step 1(b),
\[
\mathbb{E} \left\{ \bm{v}^\top\bm{z}_s \bm{z}_s^{\top} \bm{v} e_s^2 \mid \mathcal{H}_{s-1}\right\}=\sigma_0^2 \, p_0(\widehat{\bm{\beta}}_{s-1},\widehat{\gamma}_{s-1},\kappa_s,\zeta_s,\epsilon_s) + \sigma_1^2 p_1(\widehat{\bm{\beta}}_{s-1},\widehat{\gamma}_{s-1},\kappa_s,\zeta_s,\epsilon_s).
\]
Then, by the Continuous Mapping Theorem and Corollary~\ref{coro:consistency}, we obtain,
\[
\frac{1}{t}\sum_{s=1}^t \mathbb{E} \left\{ \bm{v}^\top\bm{z}_s \bm{z}_s^{\top} \bm{v} e_s^2 \mid \mathcal{H}_{s-1}\right\} \to \sigma_0^2 \, p_0(\bm{\beta},\gamma,\kappa_\infty,\zeta_\infty,\epsilon_\infty) + \sigma_1^2 \, p_1(\bm{\beta},\gamma,\kappa_\infty,\zeta_\infty,\epsilon_\infty).
\]
We use the same definition of $\tilde{\bm{z}}$ as  in~\eqref{equ15} and apply a similar argument as in Step 2.
\[
\Pr(\bm{v}^{\top} \bm{z}_s \bm{z}_s^{\top} \bm{v} e_s^2>h)
    =\Pr(\Vert \bm{v}^{\top} \bm{z}_s\Vert_2^2 e_s^2 >h) 
    \le \Pr(\Vert \bm{v}\Vert_2^2 \Vert \bm{z}_s \Vert_2^2 e_s^2>h)
    \le \Pr(\Vert \bm{v}\Vert_2^2 \Vert \tilde{\bm{z}} \Vert_2^2 e_s^2>h),
\]
and $\mathbb{E}(\Vert \bm{v}\Vert_2^2 \Vert \tilde{\bm{z}} \Vert_2^2 e_s^2) \le \Vert \bm{v}\Vert_2^2 L_z^2 \sigma^2< \infty$. By Theorem 2.19 in \cite{hall2014martingale}, we have
\begin{align*}
    &\frac{1}{t}\sum_{s=1}^t \left\{\bm{v}^{\top} \bm{z}_s \bm{z}_s^{\top} \bm{v} e_s^2- \mathbb{E}(\bm{v}^{\top} \bm{z}_s \bm{z}_s^{\top} \bm{v} e_s^2\mid \mathcal{H}_{s-1})\right\} \\
    =& \frac{1}{t} \sum_{s=1}^t \bm{v}^\top\bm{z}_s \bm{z}_s^{\top} \bm{v} e_s^2 - \frac{1}{t}\sum_{s=1}^t\left[p_0(\widehat{\bm{\beta}}_{s-1},\widehat{\gamma}_{s-1},\kappa_s,\zeta_s,\epsilon_s)+ p_0(\widehat{\bm{\beta}}_{s-1},\widehat{\gamma}_{s-1},\kappa_s,\zeta_s,\epsilon_s) \right] \stackrel{p}{\to}0.
\end{align*}
Therefore, by Lemma~\ref{lemma:weighted-convergence}, $\frac{1}{t} \sum_{s=1}^t \bm{v}^\top\bm{z}_s \bm{z}_s^{\top} \bm{v} e_s^2 \stackrel{p}{\to} \sigma_0^2 \, p_0(\bm{\beta},\gamma,\kappa_\infty,\zeta_\infty,\epsilon_\infty) + \sigma_1^2 \, p_1(\bm{\beta},\gamma,\kappa_\infty,\zeta_\infty,\epsilon_\infty)$, where the limit can be equivalently expressed as $\bm{v}^\top (\sigma_0^2 G_0+\sigma_1^2 G_1)\bm{v} = \bm{v}^\top G \bm{v}$.
The proof is hence completed by combining the limit of the three terms.
\end{proof}

\subsection{Theorem~\ref{theor:regret} (Regret bound)}
\label{app:regret}

\begin{proof}
Under Assumption~\ref{ass:margin}, and in the absence of force pulls, the bound for $R_2(T)$ follows directly from the argument in \citet{chen2021}. Accounting for force pulls, we obtain $R_2(T) = \mathcal{O}_p\left(\sum_{t=1}^T \epsilon_t + |\mathcal{F}_t|\right)$. Hence, we focus our analysis on bounding $R_1(T)$.
\begin{equation*}
    R_1(T) = \sum_{t=1}^{T} \mathbb{E} \left|\phi(\bm{x}_t)^{\top} (\bm{\beta}_1 - \bm{\beta}_0) + \left[\sum_{s=t+1}^T w_{st} \right] \gamma \right| \, \mathbb{I} \left\{ a_t \neq \mathbb{I} \left(\phi(\bm{x}_t)^{\top} (\bm{\beta}_1 - \bm{\beta}_0) + \zeta_t \gamma \ge 0\right) \right\}.
\end{equation*}
For each term, we have
\begin{align*}
    \left|\phi(\bm{x}_t)^{\top} (\bm{\beta}_1 - \bm{\beta}_0) + \left[\sum_{s=t+1}^T w_{st} \right] \gamma \right| 
    &\le \left|\phi(\bm{x}_t)^{\top} (\bm{\beta}_1 - \bm{\beta}_0) + \zeta_t \gamma \right| + \left| \sum_{s=T+1}^\infty w_{st} \gamma \right|,
\end{align*}
then we can reorganize and bound $R_1(T)$ as 
\begin{align*}
    R_1(T) &\le \sum_{t=1}^{T} \mathbb{E} \left|\phi(\bm{x}_t)^{\top} (\bm{\beta}_1 - \bm{\beta}_0) + \zeta_t \gamma \right| \, \mathbb{I} \left\{ a_t \neq \mathbb{I} \left(\phi(\bm{x}_t)^{\top} (\bm{\beta}_1 - \bm{\beta}_0) + \zeta_t \gamma \ge 0 \right) \right\} \\
    &+ \sum_{t=1}^{T} \mathbb{E} \left|\sum_{s=T+1}^\infty w_{ts} \gamma \right| \, \mathbb{I} \left\{ a_t \neq \mathbb{I} \left(\phi(\bm{x}_t)^{\top} (\bm{\beta}_1 - \bm{\beta}_0) + \zeta_t \gamma \ge 0 \right) \right\}.
\end{align*}
Since $\zeta_t \to \zeta_\infty$ as $t \to \infty$, it follows that $\left|\sum_{s=T+1}^\infty w_{ts}\gamma\right|$ is bounded by some constant, which we denote by $C_0$. Hence, we obtain
\begin{align*}
    R_1(T) &\le \sum_{t=1}^{T} \mathbb{E} \left|\phi(\bm{x}_t)^{\top} (\bm{\beta}_1 - \bm{\beta}_0) + \zeta_t \gamma \right| \, \mathbb{I} \left\{ a_t \neq \mathbb{I} \left(\phi(\bm{x}_t)^{\top} (\bm{\beta}_1 - \bm{\beta}_0) + \zeta_t \gamma \ge 0 \right) \right\} \\
    &+ C_0 \sum_{t=1}^{T} \mathbb{E} \left\{\, \mathbb{I} \left\{ a_t \neq \mathbb{I} \left(\phi(\bm{x}_t)^{\top} (\bm{\beta}_1 - \bm{\beta}_0) + \zeta_t \gamma \ge 0 \right) \right\} \right\}.
\end{align*}
The first term corresponds exactly to $R_2(T)$, while the second term requires further analysis. In particular, it can be decomposed into two components: regret due to exploration
\begin{equation*}
    \tau_1= C_0 \sum_{t=1}^{T} \mathbb{E} \left\{ \mathbb{I} \left\{ a_t \neq \mathbb{I} \left(\phi(\bm{x}_t)^{\top} (\widehat{\bm{\beta}}_{1,t-1} - \widehat{\bm{\beta}}_{0,t-1}) + \zeta_t \widehat{\gamma}_{t-1} \ge 0 \right) \right\} \right\},
\end{equation*}
and regret due to estimation error
\begin{equation*}
     \tau_2= C_0  \sum_{t=1}^{T} \mathbb{E} \left|\mathbb{I} \left(\phi(\bm{x}_t)^{\top} (\widehat{\bm{\beta}}_{1,t-1} - \widehat{\bm{\beta}}_{0,t-1}) + \zeta_t \widehat{\gamma}_{t-1} \ge 0 \right) \neq \mathbb{I} \left(\phi(\bm{x}_t)^{\top} (\bm{\beta}_1 - \bm{\beta}_0) + \zeta_t \gamma \ge 0 \right) \right|.
\end{equation*}
Since $\mathbb{E} \left\{ \mathbb{I} \left\{ a_t \neq \mathbb{I} \left(\phi(\bm{x}_t)^{\top} (\widehat{\bm{\beta}}_{1,t-1} - \widehat{\bm{\beta}}_{0,t-1}) + \zeta_t \widehat{\gamma}_{t-1} \ge 0 \right) \right\} \right\} = \epsilon_t/2$, we have $\tau_1$ is bounded by $C_0 \sum_{t=1}^T \epsilon_t /2$.
Use the fact that
\begin{align*}
    &\left|\mathbb{I} \left(\phi(\bm{x}_t)^{\top} (\widehat{\bm{\beta}}_{1,t-1} - \widehat{\bm{\beta}}_{0,t-1}) + \zeta_t \widehat{\gamma}_{t-1} \ge 0 \right) - \mathbb{I} \left(\phi(\bm{x}_t)^{\top} (\bm{\beta}_1 - \bm{\beta}_0) + \zeta_t \gamma \ge 0 \right) \right| \\
    \le & \, \mathbb{I}\left\{\left| \phi(\bm{x}_t)^{\top} (\widehat{\bm{\beta}}_{t-1}-\bm{\beta}) + \zeta_t (\widehat{\gamma}_{t-1} - \gamma) \right| > \left| \phi(\bm{x}_t)^{\top} \bm{\beta} + \zeta_t \gamma \right| \right\},
\end{align*}
where $\bm{\beta} = \bm{\beta}_1 - \bm{\beta}_0$ and $\widehat{\bm{\beta}}_{t-1} = \widehat{\bm{\beta}}_{1,t-1} - \widehat{\bm{\beta}}_{0,t-1}$. Then $\tau_2$ is bounded by
\begin{align*}
    \tau_2 \le C_0 \sum_{t=1}^T\mathbb{E} \left\{ \mathbb{I}\left\{\left| \phi(\bm{x}_t)^{\top} (\widehat{\bm{\beta}}_{t-1}-\bm{\beta}) + \zeta_t (\widehat{\gamma}_{t-1} - \gamma) \right| > \left| \phi(\bm{x}_t)^{\top} \bm{\beta} + \zeta_t \gamma \right| \right\} \right\}.
\end{align*}
We then split $\tau_2$ into two parts:
{\small
\begin{align}
    J_1 &= \sum_{t=1}^{T} \int \mathbb{I}\left\{0<\left|\phi(\bm{x})^{\top} \bm{\beta} + \zeta_t \gamma \right| < T^{-\frac{1}{4}} \right\}  \mathbb{I}\left\{\left| \phi(\bm{x})^{\top} (\widehat{\bm{\beta}}_{t-1}-\bm{\beta}) + \zeta_t (\widehat{\gamma}_{t-1} - \gamma) \right| > \left| \phi(\bm{x})^{\top} \bm{\beta} + \zeta_t \gamma \right| \right\} \,d\mathcal{P}_X, \label{equ7}\\
    J_2 &= \sum_{t=1}^{T} \int \mathbb{I}\left\{\left|\phi(\bm{x})^{\top} \bm{\beta} + \zeta_t \gamma \right| > T^{-\frac{1}{4}} \right\}  \mathbb{I}\left\{\left| \phi(\bm{x})^{\top} (\widehat{\bm{\beta}}_{t-1}-\bm{\beta}) + \zeta_t (\widehat{\gamma}_{t-1} - \gamma) \right| > \left| \phi(\bm{x})^{\top} \bm{\beta} + \zeta_t \gamma \right| \right\} \,d\mathcal{P}_X. \label{equ8}
\end{align}}
Following Assumption~\ref{ass:margin}, 
\begin{equation*}
    J_1 \le \sum_{t=1}^{T} \int \mathbb{I}\left\{0<\left|\phi(\bm{x})^{\top} \bm{\beta} + \zeta_t \gamma \right| < T^{-\frac{1}{4}} \right\} \,d\mathcal{P}_X 
    \le \sum_{t=1}^{T} M T^{-\frac{1}{4}} = \mathcal{O}(T^{\frac{3}{4}}).
\end{equation*}
For $J_2$, we have
\[ 
\resizebox{1\hsize}{!}{$ \begin{aligned}
    J_2 &\le \sum_{t=1}^{T} \mathbb{I}\left\{\left|\phi(\bm{x})^{\top} \bm{\beta} + \zeta_t \gamma \right| > T^{-\frac{1}{4}} \right\} \, \frac{\left| \phi(\bm{x})^{\top} (\widehat{\bm{\beta}}_{t-1}-\bm{\beta}) + \zeta_t (\widehat{\gamma}_{t-1} - \gamma) \right|}{\left| \phi(\bm{x})^{\top}\bm{\beta}+\zeta_t \gamma \right|} \,d\mathcal{P}_X \\
    & \le T^{\frac{1}{4}} \sum_{t=1}^{T}  \int \left| \phi(\bm{x})^{\top} (\widehat{\bm{\beta}}_{t-1}-\bm{\beta}) + \zeta_t (\widehat{\gamma}_{t-1} - \gamma) \right| \,d\mathcal{P}_X   \le T^{\frac{1}{4}} \sum_{t=1}^{T} C_1 \Big\Vert\left((\widehat{\bm{\beta}}_{t-1}-\bm{\beta})^{\top}, \widehat{\gamma}_{t-1}- \gamma\right)^{\top}\Big\Vert_1,
\end{aligned}$}
\]
where $C_1$ is a positive constant related to $L_x$ and the upper bound of $\zeta_t$. By Theorem~\ref{theor:infer}, $(\widehat{\bm{\beta}}_{1,t}^{\top}, \widehat{\bm{\beta}}_{0,t}^{\top}, \widehat{\gamma}_{t})^{\top}$ is asymptotically normal. 
Because
\[
\begin{pmatrix}
        \widehat{\bm{\beta}}_{t}-\bm{\beta}\\
        \widehat{\gamma}_{t} - \gamma\\
\end{pmatrix}
=
\begin{pmatrix}
        \bm{1}_{d_1}^\top & -\bm{1}_{d_1}^\top & 0 \\
        \bm{0}_{d_1}^\top & \bm{0}_{d_1}^\top & 1\\
\end{pmatrix}
\begin{pmatrix}
        \widehat{\bm{\beta}}_{1,t} - \bm{\beta}_1 \\
        \widehat{\bm{\beta}}_{0,t} - \bm{\beta}_0 \\
        \widehat{\gamma}_{t} - \gamma \\
\end{pmatrix}
:= Q
\begin{pmatrix}
        \widehat{\bm{\beta}}_{1,t} - \bm{\beta}_1 \\
        \widehat{\bm{\beta}}_{0,t} - \bm{\beta}_0 \\
        \widehat{\gamma}_{t} - \gamma
\end{pmatrix},
\]
then, by Slutsky's Theorem, we can derive the asymptotic distribution of \( (\widehat{\bm{\beta}}_{t}^{\top}, \widehat{\gamma}_t)^{\top} \), which is given by
$\sqrt{t}
    \begin{pmatrix}
        \widehat{\bm{\beta}}_{t}-\bm{\beta}\\
        \widehat{\gamma}_{t} - \gamma\\
    \end{pmatrix}\stackrel{D}{\to} \mathcal{N}_d (\bm{0}, H)$,
where $H=QSQ^\top$. Then we have 
$\left((\widehat{\bm{\beta}}_{t}-\bm{\beta})^{\top}, \widehat{\gamma}_{t}- \gamma\right)^{\top} = \mathcal{O}_p(t^{-\frac{1}{2}})$, so $\left\|\left((\widehat{\bm{\beta}}_{t}-\bm{\beta})^{\top}, \widehat{\gamma}_{t}- \gamma\right)^{\top} \right\|_1 = \mathcal{O}_p(t^{-\frac{1}{2}}),$ thus the upper bound becomes
$J_2 \le C_1 T^{1/4} \mathcal{O}_p (T^{1/2})=\mathcal{O}_p (T^{3/4})$.
Then $\tau_2 = J_1 +J_2 = \mathcal{O}_p(T^{3/4})$. Therefore, $\tau_1+\tau_2 = \mathcal{O}_p(\sum_{t=1}^T \epsilon_t+T^{3/4})$.
Besides, considering the force pulls regret in practice, we obtain the regret bound $R_1(T) \le \mathcal{O}_p(\sum_{t=1}^T \epsilon_t + T^{3/4}+|\mathcal{F}_T|)$.
\end{proof}

\subsection{Theorem~\ref{theor:regret_N} (Regret bound without inference)}
\label{app:regret_N}

By substituting $h$ with $\tfrac{h}{\sqrt{t}\epsilon_t}$ in Theorem~\ref{theor:tail}, we obtain
\begin{equation}
\label{equ9}
\Pr\!\left\{\left\Vert  \widehat{\bm{\theta}}_{t} - \bm{\theta} \right\Vert_1 
    \le \frac{h}{\sqrt{t}\epsilon_t} \right\} \ge 1- 4d_1 \exp \left\{-\frac{C^2 h^2}{2d^2\sigma^2 L_z^2}\right\} - 4\exp \left\{-\frac{C^2 h^2}{8d^2\sigma^2 L_z^2}\right\}.
\end{equation}

\noindent
We begin with the analysis of $R_2(T)$, while temporarily disregarding force pulls. 
\begin{align*}
    R_2(T) = \sum_{t=1}^{T} \mathbb{E} \left|\phi(\bm{x}_t)^{\top} (\bm{\beta}_1 - \bm{\beta}_0) + \zeta_t \gamma \right| \, \mathbb{I} \left\{ a_t \neq \mathbb{I} \left(\phi(\bm{x}_t)^{\top} (\bm{\beta}_1 - \bm{\beta}_0) + \zeta_t \gamma \ge 0 \right) \right\}.
\end{align*}
Analogously to Section~\ref{app:regret}, we decompose $R_2(T)$ into two parts: regret from exploration
\begin{equation*}
    \tau_3 = \sum_{t=1}^{T} \mathbb{E} \left|\phi(\bm{x}_t)^{\top} (\bm{\beta}_1 - \bm{\beta}_0) + \zeta_t \gamma \right| \, \mathbb{I} \left\{ a_t \neq \mathbb{I} \left(\phi(\bm{x}_t)^{\top} (\widehat{\bm{\beta}}_{1,t-1} - \widehat{\bm{\beta}}_{0,t-1}) + \zeta_t \widehat{\gamma}_{t-1} \ge 0 \right) \right\},
\end{equation*}
and regret from estimation error
\begin{equation*}
    \tau_4 = \sum_{t=1}^{T} \mathbb{E} \left|\phi(\bm{x}_t)^{\top} \bm{\beta} + \zeta_t \gamma \right| \, \left|\mathbb{I} \left\{\phi(\bm{x}_t)^{\top} \widehat{\bm{\beta}}_{t-1} + \zeta_t \widehat{\gamma}_{t-1} \ge 0 \right\}\neq \mathbb{I} \left\{\phi(\bm{x}_t)^{\top} \bm{\beta} + \zeta_t \gamma \ge 0 \right\} \right|.
\end{equation*}
Since
$
\mathbb{E} \!\left[ \mathbb{I} \!\left\{ a_t \neq 
\mathbb{I} \!\left(\phi(\bm{x}_t)^{\top} (\widehat{\bm{\beta}}_{1,t-1} - \widehat{\bm{\beta}}_{0,t-1}) + \zeta_t \widehat{\gamma}_{t-1} \ge 0 \right) \right\} \right] 
= {\epsilon_t}/{2}, $ 
and $\left|\phi(\bm{x}_t)^{\top} (\bm{\beta}_1 - \bm{\beta}_0) + \zeta_t \gamma \right|$ is uniformly bounded by 
$L_z(\|\bm{\beta}\|_\infty + |\gamma|)$, we obtain $\tau_3 \le \frac{L_z(\|\bm{\beta}\|_\infty + |\gamma|)}{2} \sum_{t=1}^T \epsilon_t$.
Moreover,
{\small
\begin{align}
    -\tau_4 
    &= \sum_{t=1}^{T} \int 
       \left(\phi(\bm{x})^{\top} \widehat{\bm{\beta}}_{t-1} + \zeta_t \widehat{\gamma}_{t-1}\right)
       \left( \mathbb{I}\!\left\{\phi(\bm{x})^{\top} \widehat{\bm{\beta}}_{t-1} + \zeta_t \widehat{\gamma}_{t-1} \ge 0 \right\}
            - \mathbb{I}\!\left\{\phi(\bm{x})^{\top} \bm{\beta} + \zeta_t \gamma \ge 0 \right\} \right)
       \, d\mathcal{P}_X \label{equ10}\\
    &- \sum_{t=1}^{T} \int 
       \left(\phi(\bm{x})^{\top} (\widehat{\bm{\beta}}_{t-1}-\bm{\beta}) + \zeta_t (\widehat{\gamma}_{t-1}-\gamma)\right)  \left( \mathbb{I}\!\left\{\phi(\bm{x})^{\top} \widehat{\bm{\beta}}_{t-1} + \zeta_t \widehat{\gamma}_{t-1} \ge 0 \right\}
            - \mathbb{I}\!\left\{\phi(\bm{x})^{\top} \bm{\beta} + \zeta_t \gamma \ge 0 \right\} \right)
       \, d\mathcal{P}_X . \label{equ11}
\end{align}
}
\noindent
To bound $\tau_4 $ from above, which is equivalent to lower bound $-\tau_4 $. \eqref{equ10} is nonnegative, so it suffices to establish an upper bound for \eqref{equ11}. We then split \eqref{equ11} into two terms:
\begin{align*}
   J_3 =& \sum_{t=1}^{T} \int 
    \mathbb{I}\!\left\{ 0 < \big|\phi(\bm{x})^{\top} \bm{\beta} + \zeta_t \gamma \big| < T^{-\alpha^\prime} \right\} \, \left(\phi(\bm{x})^{\top} (\widehat{\bm{\beta}}_{t-1}-\bm{\beta}) + \zeta_t (\widehat{\gamma}_{t-1}-\gamma)\right)
    \\
    &~~ \qquad \times \left( \mathbb{I}\!\left\{\phi(\bm{x})^{\top} \widehat{\bm{\beta}}_{t-1} + \zeta_t \widehat{\gamma}_{t-1} \ge 0 \right\}
            - \mathbb{I}\!\left\{\phi(\bm{x})^{\top} \bm{\beta} + \zeta_t \gamma \ge 0 \right\} \right)  \, d\mathcal{P}_X, \\[0.5em]
    J_4 =& \sum_{t=1}^{T} \int 
    \mathbb{I}\!\left\{ \big|\phi(\bm{x})^{\top} \bm{\beta} + \zeta_t \gamma \big| \ge T^{-\alpha^\prime} \right\} \,
    \left(\phi(\bm{x})^{\top} (\widehat{\bm{\beta}}_{t-1}-\bm{\beta}) + \zeta_t (\widehat{\gamma}_{t-1}-\gamma)\right)
    \\
    &~~ \qquad \times \left( \mathbb{I}\!\left\{\phi(\bm{x})^{\top} \widehat{\bm{\beta}}_{t-1} + \zeta_t \widehat{\gamma}_{t-1} \ge 0 \right\}
            - \mathbb{I}\!\left\{\phi(\bm{x})^{\top} \bm{\beta} + \zeta_t \gamma \ge 0 \right\} \right)  \, d\mathcal{P}_X,
\end{align*}
where $\alpha^\prime>0$ is a constant to be specified later. 
By \eqref{equ9}, we obtain that 
$\widehat{\bm{\theta}}_{t} - \bm{\theta} = \mathcal{O}_p\left(\frac{1}{\sqrt{t}\,\epsilon_t}\right)$ and then 
\begin{equation}
\label{equ12}
  \begin{pmatrix}
\widehat{\bm{\beta}}_{t-1}-\bm{\beta} \\
\widehat{\gamma}_{t-1}- \gamma
\end{pmatrix}
= \mathcal{O}_p\left(\frac{1}{\sqrt{t}\,\epsilon_t}\right).
\end{equation}
Together with Assumptions~\ref{ass:bound} and~\ref{ass:margin}, 
\[ 
\resizebox{1\hsize}{!}{$ \begin{aligned}
    &J_3 \le \sum_{t=1}^{T} \int 
    \mathbb{I}\!\left\{ 0 < \big|\phi(\bm{x})^{\top} \bm{\beta} + \zeta_t \gamma \big| < T^{-\alpha^\prime} \right\} \left|\phi(\bm{x})^{\top} (\widehat{\bm{\beta}}_{t-1}-\bm{\beta}) + \zeta_t (\widehat{\gamma}_{t-1}-\gamma)\right| \, d\mathcal{P}_X, \\
    \le& L_z M T^{-\alpha^\prime} \sum_{t=1}^{T} \Big\Vert \left((\widehat{\bm{\beta}}_{t-1}-\bm{\beta})^{\top}, \widehat{\gamma}_{t-1}- \gamma\right)^{\top}\Big\Vert_1  = \mathcal{O}_p \left( T^{1-\alpha^\prime} \Big\Vert \left((\widehat{\bm{\beta}}_{t-1}-\bm{\beta})^{\top}, \widehat{\gamma}_{t-1}- \gamma\right)^{\top}\Big\Vert_1 \right)  = \mathcal{O}_p \left(T^{\frac{1}{2}-\alpha^\prime}\epsilon_T^{-1}\right).
\end{aligned}$}
\]
For $J_4$, use the fact that
\begin{align*}
    &\mathbb{I} \left(\phi(\bm{x})^{\top} (\widehat{\bm{\beta}}_{1,t-1} - \widehat{\bm{\beta}}_{0,t-1}) + \zeta_t \widehat{\gamma}_{t-1} \ge 0 \right) - \mathbb{I} \left(\phi(\bm{x})^{\top} (\bm{\beta}_1 - \bm{\beta}_0) + \zeta_t \gamma \ge 0 \right) \\
    \le & \, \mathbb{I}\left\{\left| \phi(\bm{x})^{\top} (\widehat{\bm{\beta}}_{t-1}-\bm{\beta}) + \zeta_t (\widehat{\gamma}_{t-1} - \gamma) \right| > \left| \phi(\bm{x})^{\top} \bm{\beta} + \zeta_t \gamma \right| \right\},
\end{align*}
we have
\[ 
\resizebox{1\hsize}{!}{$ \begin{aligned}
   J_4 \le& \sum_{t=1}^{T} \int 
    \mathbb{I}\!\left\{ \big|\phi(\bm{x})^{\top} \bm{\beta} + \zeta_t \gamma \big| \ge T^{-\alpha^\prime} \right\} \,
    \mathbb{I}\left\{\left| \phi(\bm{x})^{\top} (\widehat{\bm{\beta}}_{t-1}-\bm{\beta}) + \zeta_t (\widehat{\gamma}_{t-1} - \gamma) \right| > \left| \phi(\bm{x})^{\top} \bm{\beta} + \zeta_t \gamma \right| \right\} \\
    & \qquad \times \left|\phi(\bm{x})^{\top} (\widehat{\bm{\beta}}_{t-1}-\bm{\beta}) + \zeta_t (\widehat{\gamma}_{t-1}-\gamma)\right| \, d\mathcal{P}_X \\
    \le& \sum_{t=1}^{T} \int 
    \mathbb{I}\!\left\{ \big|\phi(\bm{x})^{\top} \bm{\beta} + \zeta_t \gamma \big| \ge T^{-\alpha^\prime} \right\} \,\frac{\left|\phi(\bm{x})^{\top} (\widehat{\bm{\beta}}_{t-1}-\bm{\beta}) + \zeta_t (\widehat{\gamma}_{t-1}-\gamma)\right|^2}{\big|\phi(\bm{x})^{\top} \bm{\beta} + \zeta_t \gamma \big|}
     \, d\mathcal{P}_X \\
     \le & T^{\alpha^\prime} \sum_{t=1}^T \int \left|\phi(\bm{x})^{\top} (\widehat{\bm{\beta}}_{t-1}-\bm{\beta}) + \zeta_t (\widehat{\gamma}_{t-1}-\gamma)\right|^2 \, d\mathcal{P}_X  
     \le  L_z^2 \, T^{\alpha^\prime} \sum_{t=1}^T \Big\Vert \left((\widehat{\bm{\beta}}_{t-1}-\bm{\beta})^{\top}, \widehat{\gamma}_{t-1}- \gamma\right)^{\top}\Big\Vert_1^2.
\end{aligned}$}
\] 
By \eqref{equ12}, we derive $
    J_4 \le \mathcal{O}_p \left(T^{1+\alpha^\prime} (T^{-1/2} \epsilon_T^{-1} \right)^2 ) = \mathcal{O}_p (T^{\alpha^\prime} \epsilon_T^{-2} ). $ 
Suppose the exploration rate satisfies $\epsilon_t = \mathcal{O}(t^{-\alpha})$ with $0 \le \alpha < 1/2$. 
Set $\alpha^\prime = 1/4-\alpha/2$. 
Under this choice, both $J_1$ and $J_2$ are bounded by $\mathcal{O}_p (T^{1/4+3\alpha/2})$. 
Consequently, we obtain $\tau_4 \;\le\; J_1 + J_2 \;=\; \mathcal{O}_p\left(T^{1/4} \epsilon_T^{-3/2}\right)$.
Therefore, considering the force pulls regret, we derive
$R_2(T) \le \mathcal{O}_p\left(\sum_{t=1}^T \epsilon_t + T^{1/4} \epsilon_T^{-3/2} + |\mathcal{F}_T|\right)$.

We then turn to the analysis of $R_1(T)$. The initial steps of the proof are identical to those of Theorem~\ref{theor:infer} in Section~\ref{app:regret}; we therefore focus only on the differences, starting from $J_1$ \eqref{equ7} and $J_2$ \eqref{equ8}.
Without the $\sqrt{T}$-consistency of the parameter estimators, we reorganize the term $\tau_2$ by redefining $J_1$ and $J_2$ as follows:\[ 
\resizebox{1\hsize}{!}{$ \begin{aligned}
    J_1 &= \sum_{t=1}^{T} \int \mathbb{I}\left\{0<\left|\phi(\bm{x})^{\top} \bm{\beta} + \zeta_t \gamma \right| < T^{-\alpha^\prime} \right\} \times \mathbb{I}\left\{\left| \phi(\bm{x})^{\top} (\widehat{\bm{\beta}}_{t-1}-\bm{\beta}) + \zeta_t (\widehat{\gamma}_{t-1} - \gamma) \right| > \left| \phi(\bm{x})^{\top} \bm{\beta} + \zeta_t \gamma \right| \right\} \,d\mathcal{P}_X, \\
    J_2 &= \sum_{t=1}^{T} \int \mathbb{I}\left\{\left|\phi(\bm{x})^{\top} \bm{\beta} + \zeta_t \gamma \right| > T^{-\alpha^\prime} \right\} \times \mathbb{I}\left\{\left| \phi(\bm{x})^{\top} (\widehat{\bm{\beta}}_{t-1}-\bm{\beta}) + \zeta_t (\widehat{\gamma}_{t-1} - \gamma) \right| > \left| \phi(\bm{x})^{\top} \bm{\beta} + \zeta_t \gamma \right| \right\} \,d\mathcal{P}_X,
\end{aligned}$}
 \]
where $\alpha^\prime>0$ is a constant to be specified later. Similarly as Section~\ref{app:regret}, we have 
\begin{equation*}
    J_1 \le \sum_{t=1}^{T} \int \mathbb{I}\left\{0<\left|\phi(\bm{x})^{\top} \bm{\beta} + \zeta_t \gamma \right| < T^{-\alpha^\prime} \right\} \,d\mathcal{P}_X 
    \le \sum_{t=1}^{T} M T^{-\alpha^\prime} = \mathcal{O}(T^{1-\alpha^\prime}).
\end{equation*}
For $J_2$, we have $J_2 \le T^{-\alpha^\prime} \sum_{t=1}^{T} C_1 \Big\Vert \left((\widehat{\bm{\beta}}_{t-1}-\bm{\beta})^{\top}, \widehat{\gamma}_{t-1}- \gamma\right)^{\top}\Big\Vert_1$. 
By \eqref{equ12}, the upper bound becomes $J_2 \le C_1 T^{\alpha^\prime} \mathcal{O}_p (T^{1/2}\epsilon_T^{-1}) = \mathcal{O}_p(T^{1/2+\alpha^\prime}\epsilon_T^{-1})$.
Suppose the exploration rate satisfies $\epsilon_t = \mathcal{O}(t^{-\alpha})$ with $0 \le \alpha < 1/2$. 
Set $\alpha^\prime = 1/4-\alpha/2$. 
Under this choice, both $J_1$ and $J_2$ are bounded by $\mathcal{O}_p (T^{3/4+\alpha/2} )$. 
Consequently, we obtain $\tau_2 \le J_1 + J_2 = \mathcal{O}_p(T^{3/4} \epsilon_T^{-1/2})$.
Therefore, $\tau_1+\tau_2 = \mathcal{O}_p (\sum_{t=1}^T \epsilon_T +T^{3/4} \epsilon_T^{-1/2})$.
In addition, considering that the force pulls regret \(|\mathcal{F}_t|\) in practice, we obtain the regret bound 
\[ 
\resizebox{1\hsize}{!}{$ \begin{aligned}
   R_1(T) \le R_2(T) + \tau_1+\tau_2 = \mathcal{O}_p\left(\sum_{t=1}^T \epsilon_t + T^{\tfrac{1}{4}} \epsilon_T^{-\tfrac{3}{2}}+T^{\frac{3}{4}} \epsilon_T^{-\tfrac{1}{2}}+|\mathcal{F}_T|\right)= \mathcal{O}_p\left(\sum_{t=1}^T \epsilon_t + T^{\frac{3}{4}} \epsilon_T^{-\tfrac{1}{2}}+|\mathcal{F}_T|\right),
\end{aligned}$}
\]  
where the last equality holds as $T^{3/4} \epsilon_T^{-1/2}$ dominates $T^{1/4} \epsilon_T^{-3/2}$ under the condition $t \epsilon_t^2 \to \infty$.

\subsection{Optimal Policy for Other Working Models}
\label{app:othermodel}

Here, we demonstrate how to derive the optimal policy for alternative working models, as discussed in Section~\ref{sec:other}, using backward inductive reasoning \citep[see, e.g.,][]{chakraborty2014}.

\medskip
\noindent
1. $\mu(\bm{x}_t, \kappa_t, a_t) = \mathbb{E}(y_t \vert \bm{x}_t, \kappa_t, a_t) = \phi(\bm{x}_t)^{\top}\bm{\beta}_0+a_t \phi(\bm{x}_t)^{\top}(\bm{\beta}_1-\bm{\beta}_0)+ \kappa_t\gamma_0 + a_t\kappa_t (\gamma_1-\gamma_0)$.

\noindent
Using backward induction, we assume that the optimal decisions $a_1$, $a_2$, \dots, $a_{T-1}$ have already been determined. The optimal action $a_T^\ast$ is then selected to maximize the cumulative reward:  $
\sum_{t=1}^{T-1} \mu(\bm{x}_t,\kappa_t, a_t) + \mu(\bm{x}_T,\kappa_T, a_T). $
Because $a_1$, $a_2$, \dots, $a_{T-1}$ are determined, the corresponding interference actions $\kappa_1$, $\kappa_2$, \dots, $\kappa_{T-1}$, $\kappa_{T}$ are also fixed. We denote $\bm{\beta} = \bm{\beta}_1 - \bm{\beta}_0$ and $\gamma = \gamma_1-\gamma_0$ for simplicity. So the optimal action at time $T$ is given by,
\[
a_T^\ast = \arg \max_{a_T} a_T \phi(\bm{x}_T)^{\top} \bm{\beta} + a_T \kappa_T \gamma = \mathbb{I}\left\{\phi(\bm{x}_T)^{\top} \bm{\beta} +  \kappa_T \gamma \ge 0 \right\}.
\]
Then we will select $a_{T-1}^\ast$ to maximize the expected outcome that would result from choosing the option at time $T$ optimally given the history available at that point. Suppose we have determined the decisions $a_1$, $a_2$, \dots, $a_{T-2}$. We plug in $\kappa_T = \sum_{s=1}^{T-1}w_{Ts}a_s$ and $a_T^\ast = \mathbb{I}\left\{\phi(\bm{x}_T)^{\top} \bm{\beta} + \kappa_T \gamma \ge 0 \right\} = \mathbb{I}\left\{\phi(\bm{x}_T)^{\top} \bm{\beta} + \left[\left( \sum_{s=1}^{T-2}w_{Ts}a_s \right) + w_{T,T-1}a_{T-1}\right] \gamma \ge 0 \right\}$. Then we need to solve
\[ 
\resizebox{1\hsize}{!}{$ \begin{aligned}
   \arg \max_{a_{T-1}} & \sum_{t=1}^{T-2} \mu(\bm{x}_t,\kappa_t, a_t) + \mu(\bm{x}_{T-1},\kappa_{T-1}, a_{T-1}) \\
    & +\mu\left(\bm{x}_T, \left[\left( \sum_{s=1}^{T-2}w_{Ts}a_s\right) +  w_{T,T-1} a_{T-1}\right],\mathbb{I}\left\{\phi(\bm{x}_T)^{\top} \bm{\beta} + \left[\left( \sum_{s=1}^{T-2}w_{Ts}a_s \right) + w_{T,T-1}a_{T-1}\right] \gamma \ge 0 \right\}\right). 
\end{aligned}$}
\]  
Removing all terms that are unrelated to $a_{T-1}$, we obtain,
\[ 
\resizebox{1\hsize}{!}{$ \begin{aligned}
   \arg \max_{a_{T-1}} &a_{T-1} \left( \phi(\bm{x}_{T-1})^\top \bm{\beta} + \kappa_{T-1} \gamma \right) \\
    &+ \mathbb{I}\left\{\phi(\bm{x}_T)^{\top} \bm{\beta} + \left[\left( \sum_{s=1}^{T-2}w_{Ts}a_s \right) + w_{T,T-1}a_{T-1}\right] \gamma \ge 0 \right\} \phi(\bm{x}_T)^\top\bm{\beta} + \left[\left( \sum_{s=1}^{T-2}w_{Ts}a_s \right) + w_{T,T-1}a_{T-1}\right]\gamma_0 \\
    &+\mathbb{I}\left\{\phi(\bm{x}_T)^{\top} \bm{\beta} + \left[\left( \sum_{s=1}^{T-2}w_{Ts}a_s \right) + w_{T,T-1}a_{T-1}\right] \gamma \ge 0 \right\} \left[\left( \sum_{s=1}^{T-2}w_{Ts}a_s \right) + w_{T,T-1}a_{T-1}\right]\gamma.\\
\end{aligned}$}
\]  
Plugging in $a_{T-1}=1$ and $a_{T-1}=0$ and comparing the results, we obtain the solution:
\begin{equation}
\label{equ:othermodel1}
    a_{T-1}^\ast = \mathbb{I} \left\{ \psi_1 \ge \psi_0  \right\}, 
\end{equation}
where $\psi_1$ and $\psi_0$ correspond to $a_{T-1}=1$ and $a_{T-1}=0$, respectively:
\begin{equation}
\label{equ:othermodel10} 
\resizebox{1\hsize}{!}{$ \begin{aligned}
    \psi_1 =& \,\phi(\bm{x}_{T-1})^\top \bm{\beta} + \kappa_{T-1} \gamma + \mathbb{I}\left\{\phi(\bm{x}_T)^{\top} \bm{\beta} + \left[\left( \sum_{s=1}^{T-2}w_{Ts}a_s \right) + w_{T,T-1}\right] \gamma \ge 0 \right\} \phi(\bm{x}_T)^\top\bm{\beta}\\
    &+ w_{T,T-1} \gamma_0 +\mathbb{I}\left\{\phi(\bm{x}_T)^{\top} \bm{\beta} + \left[\left( \sum_{s=1}^{T-2}w_{Ts}a_s \right) + w_{T,T-1}\right] \gamma \ge 0 \right\} \left[\left( \sum_{s=1}^{T-2}w_{Ts}a_s \right) + w_{T,T-1}\right]\gamma. 
\end{aligned}$}
\end{equation}
\begin{equation}
\label{equ:othermodel11}
\resizebox{1\hsize}{!}{$ \begin{aligned}
    \psi_0 =& \,\mathbb{I}\left\{\phi(\bm{x}_T)^{\top} \bm{\beta} + \left[\left( \sum_{s=1}^{T-2}w_{Ts}a_s \right)\right] \gamma \ge 0 \right\} \phi(\bm{x}_T)^\top\bm{\beta} +\mathbb{I}\left\{\phi(\bm{x}_T)^{\top} \bm{\beta} + \left[\left( \sum_{s=1}^{T-2}w_{Ts}a_s \right)\right] \gamma \ge 0 \right\} \left[\left( \sum_{s=1}^{T-2}w_{Ts}a_s \right) \right]\gamma.\\
\end{aligned}$}
\end{equation}
We observe that the optimal action at time $T-1$ is a nested indicator function. Similarly, the optimal actions for times $T-2, \dots, 1$ can be determined through the same recursive process. However, when the termination time $T$ is unknown, deriving a general form for the optimal policy becomes infeasible. 

\medskip
\noindent
2. $\mu(\bm{x}_t, \kappa_t, a_t) = \mathbb{E}(y_t \vert \bm{x}_t, \kappa_t, a_t) = \phi(\bm{x}_t)^{\top}\bm{\beta}_0+a_t\phi(\bm{x}_t)^{\top}(\bm{\beta}_1-\bm{\beta}_0)+\kappa_t\gamma_0+\kappa_t \phi(\bm{x}_t)^{\top} \bm{\gamma}$.

\noindent
We also denote $\bm{\beta} = \bm{\beta}_1 - \bm{\beta}_0$.
\begin{align*}
    \sum_{t=1}^\infty \mu(\bm{x}_t,\kappa_t, a_t) 
    &= \sum_{t=1}^\infty \phi(\bm{x}_t)^{\top}\bm{\beta}_0+a_t\phi(\bm{x}_t)^{\top}\bm{\beta} + \gamma_0\sum_{s=1}^{t-1}w_{ts}a_s + \sum_{s=1}^{t-1}w_{ts}a_s \phi(\bm{x}_s)^{\top}\bm{\gamma} \\
    &= \sum_{t=1}^\infty \phi(\bm{x}_t)^{\top}\bm{\beta}_0+a_t\phi(\bm{x}_t)^{\top}\bm{\beta} + a_t\left\{\gamma_0\sum_{s=t+1}^\infty w_{st} + \sum_{s=t+1}^\infty w_{st}\phi(\bm{x}_s)^\top \bm{\gamma} \right\}.
\end{align*}
The optimal action should be
\begin{equation}
\label{equ:othermodel20}
    a_t^\ast = 
        \mathbb{I}\left\{\phi(\bm{x}_t)^{\top} \bm{\beta} + \left[\sum_{s=t+1}^\infty w_{st}\right] \gamma_0 + \left[\sum_{s=t+1}^\infty w_{st}\phi(\bm{x}_s)^\top\right] \bm{\gamma} \ge 0 \right\}.
\end{equation}
We observe that the optimal action at time $t$ depends on contextual features that are not available at time $t$.

\section{Auxiliary Lemmas}
\label{app:lemmas}
Lemmas~\ref{lemma:subgaussian}, \ref{lemma:weighted-convergence}, and \ref{lemma:matrix-convergence}, along with their proofs, are provided in \cite{chen2021}.

\begin{lemma}
\label{lemma:subgaussian}
Suppose $\{\mathcal{F}_t : t = 1,\ldots,T\}$ is an increasing filtration of $\sigma$-fields.  
Let $\{W_t : t = 1,\ldots,T\}$ be a sequence of random variables such that $W_t$ is $\mathcal{H}_{t-1}$-measurable and $|W_t|\le L_z$ almost surely for all $t$.  
Let $\{e_t : t = 1,\ldots,T\}$ be independent $\sigma$-subgaussian random variables, independent of $\mathcal{H}_{t-1}$ for all $t$.  
Let $\mathcal{S}=\{s_1,\ldots,s_{|\mathcal{S}|}\}\subseteq \{1,\ldots,T\}$ be an index set where $|\mathcal{S}|$ denotes the number of elements in $\mathcal{S}$.  
Then for any $\varkappa>0$, $$\Pr\!\left(\sum_{s\in\mathcal{S}} W_s e_s \;\ge\; \varkappa \right) 
\le \exp\!\left\{-\frac{\varkappa^2}{2|\mathcal{S}|\,\sigma^2 L_z^2}\right\}.$$
\end{lemma}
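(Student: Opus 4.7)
The plan is to apply the standard Chernoff bound combined with the tower property of conditional expectation to exploit the filtration structure. First I would fix $\lambda > 0$ and write
\[
\Pr\!\left(\sum_{s\in\mathcal{S}} W_s e_s \ge \varkappa \right)
\;\le\; e^{-\lambda \varkappa}\,\mathbb{E}\!\left[\exp\!\left(\lambda \sum_{s\in\mathcal{S}} W_s e_s\right)\right],
\]
reducing the problem to bounding the moment generating function (MGF) of $\sum_{s\in\mathcal{S}} W_s e_s$.

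Next, I would order the elements of $\mathcal{S}$ as $s_1 < s_2 < \cdots < s_{|\mathcal{S}|}$ and peel off the largest index by iterated conditioning on $\mathcal{H}_{s_{|\mathcal{S}|}-1}$. Since $W_{s_{|\mathcal{S}|}}$ is $\mathcal{H}_{s_{|\mathcal{S}|}-1}$-measurable and $e_{s_{|\mathcal{S}|}}$ is independent of $\mathcal{H}_{s_{|\mathcal{S}|}-1}$, the $\sigma$-subgaussian property of $e_{s_{|\mathcal{S}|}}$ gives
\[
\mathbb{E}\!\left[\exp(\lambda W_{s_{|\mathcal{S}|}} e_{s_{|\mathcal{S}|}}) \,\big|\, \mathcal{H}_{s_{|\mathcal{S}|}-1}\right]
\;\le\; \exp\!\left(\tfrac{\lambda^2 \sigma^2 W_{s_{|\mathcal{S}|}}^2}{2}\right)
\;\le\; \exp\!\left(\tfrac{\lambda^2 \sigma^2 L_z^2}{2}\right),
\]
where the last step uses $|W_t|\le L_z$ almost surely. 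Iterating this peeling argument $|\mathcal{S}|$ times yields the tensorized MGF bound $\mathbb{E}\exp(\lambda \sum_{s\in\mathcal{S}} W_s e_s) \le \exp(|\mathcal{S}|\lambda^2 \sigma^2 L_z^2/2)$.

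Finally, plugging this MGF bound into the Chernoff inequality produces
\[
\Pr\!\left(\sum_{s\in\mathcal{S}} W_s e_s \ge \varkappa \right)
\;\le\; \exp\!\left(-\lambda \varkappa + \tfrac{|\mathcal{S}|\lambda^2 \sigma^2 L_z^2}{2}\right),
\]
and optimizing over $\lambda > 0$ at $\lambda^\star = \varkappa / (|\mathcal{S}|\sigma^2 L_z^2)$ gives the claimed tail bound. The only subtlety, and the step I would be most careful about, is ensuring that the filtration-and-independence structure (namely that $W_s$ is predictable with respect to $\mathcal{H}_{s-1}$ while $e_s$ is independent of $\mathcal{H}_{s-1}$) actually supports the peeling, even when $\mathcal{S}$ is a strict subset of $\{1,\dots,T\}$; this follows because we still condition along the natural filtration $\{\mathcal{H}_t\}$ and simply ignore indices outside $\mathcal{S}$ (treating the corresponding contributions as zero), so the martingale-difference structure is preserved.
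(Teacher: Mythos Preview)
Your argument is correct and is exactly the standard Chernoff--peeling proof one expects here; the paper itself does not give a proof of this lemma but simply cites \cite{chen2021}, where the same moment-generating-function argument is carried out. The only point worth noting is that the peeling step implicitly uses that $e_{s_j}$ is $\mathcal{H}_{s_{|\mathcal{S}|}-1}$-measurable for $j<|\mathcal{S}|$, which is not literally stated in the abstract lemma but holds in the paper's setting because $e_t$ is determined by $y_t\in\mathcal{H}_t$.
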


\begin{lemma}
\label{lemma:Toeplitz}
Let $\{w_{ts}\}_{t=1,\,s \le t}^\infty$ be a triangular array with $w_{ts} \in \mathbb{R}$, 
and let $\{h_t\}_{t=1}^\infty$ and $\{g_t\}_{t=1}^\infty$ be two sequences with $h_t, g_t \in \mathbb{R}$ for all $t \ge 1$.
Suppose the following conditions hold:
1. $\displaystyle \sup_{t} \sum_{s=1}^t |w_{ts}| \le R < \infty$; 2. for each fixed $s$, $w_{ts} \to 0$ as $t \to \infty$; 3. $h_t - g_t \to 0$ as $t \to \infty$. 
Then, as $t \to \infty$, $\sum_{s=1}^t w_{ts} h_s \;-\; \sum_{s=1}^t w_{ts} g_s \to 0$.
\end{lemma}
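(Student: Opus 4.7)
The plan is to reduce the claim to a single Toeplitz-type statement and then prove it by a classical $\varepsilon$-splitting argument. First I would introduce $d_s := h_s - g_s$, so the claim becomes
\[
\sum_{s=1}^t w_{ts} d_s \to 0 \quad \text{as } t \to \infty, \qquad \text{given } d_s \to 0.
\]
This reduction uses only linearity of the sum and condition~(3), and it isolates the essential content of the lemma: that a row-bounded triangular array that ``forgets'' each fixed column annihilates any null sequence to which it is applied.

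Next I would fix an arbitrary $\varepsilon > 0$. Using $d_s \to 0$, choose $N = N(\varepsilon)$ such that $|d_s| < \varepsilon/(2R)$ for all $s > N$. Split the sum as
\[
\sum_{s=1}^t w_{ts} d_s \;=\; \underbrace{\sum_{s=1}^{N} w_{ts} d_s}_{A_t(N)} \;+\; \underbrace{\sum_{s=N+1}^{t} w_{ts} d_s}_{B_t(N)}.
\]
The tail piece $B_t(N)$ is immediately controlled by condition~(1):
\[
|B_t(N)| \;\le\; \frac{\varepsilon}{2R} \sum_{s=N+1}^t |w_{ts}| \;\le\; \frac{\varepsilon}{2R}\cdot R \;=\; \frac{\varepsilon}{2}.
\]
For the head piece $A_t(N)$, the indices $s = 1,\ldots,N$ are now \emph{fixed}, and the sequence $\{d_s\}_{s\le N}$ is bounded (say by some constant $M_N$). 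Condition~(2) gives $w_{ts} \to 0$ as $t \to \infty$ for each such $s$. Since $A_t(N)$ is a finite sum of $N$ terms each tending to zero, there exists $T_0 = T_0(N,\varepsilon)$ such that $|A_t(N)| \le N M_N \max_{s \le N} |w_{ts}| < \varepsilon/2$ for all $t \ge T_0$.

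Combining the two bounds yields $|\sum_{s=1}^t w_{ts} d_s| < \varepsilon$ for all $t \ge T_0$, which proves the claim since $\varepsilon$ was arbitrary. The argument is standard and presents no real obstacle; the only subtlety worth flagging is the order of quantifiers, namely that $N$ must be chosen before invoking the pointwise convergence in~(2), so that the head piece consists of finitely many columns whose weights vanish uniformly in $s \le N$. Condition~(1) is essential precisely to prevent the tail piece from accumulating mass even when the individual weights are small.
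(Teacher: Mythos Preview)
Your proposal is correct and follows essentially the same approach as the paper's proof: both split the sum at an index $N$ (or $m$) chosen so that $|d_s|$ is small beyond it, bound the tail via the uniform row-sum bound in condition~(1), and handle the finite head using the columnwise convergence in condition~(2). The only cosmetic difference is that you normalize $\varepsilon$ by $2R$ up front to obtain a clean final bound of $\varepsilon$, whereas the paper bounds the two pieces by $\epsilon$ and $\epsilon R$ separately.
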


\begin{proof}
 Because $h_s - g_s \to 0$, for any $\epsilon > 0$ there exists $m$ such that $|h_s - g_s| < \epsilon$, for all $s > m$. We split the difference between the two weighted sums at $m$:
\[
\Biggl|\sum_{s=1}^t w_{ts} h_s - \sum_{s=1}^t w_{ts} g_s \Biggr|
\le \Biggl|\sum_{s=1}^m w_{ts} (h_s-g_s)\Biggr|
   + \Biggl|\sum_{s=m+1}^t w_{ts} (h_s-g_s)\Biggr|.
\]

\noindent
We first consider the first $m$ terms. Since each $|h_s - g_s|$ is finite for $s \le m$, let $
M := \max_{1 \le s \le m} |h_s - g_s|. $
For each $1 \le s \le m$, because $w_{ts} \to 0$ as $t \to \infty$, there exists $t_s$ such that  $
|w_{ts}| <  {\epsilon}/({mM)}, \quad \text{when } t > t_s.$
Let $t^\prime := \max\{t_1, \dots, t_m\}$. Then for all $t > t'$ and all $1 \le s \le m$, we have $|w_{ts}| < \epsilon/(mM)$. Hence,
\[
\Biggl|\sum_{s=1}^m w_{ts} (h_s-g_s)\Biggr|
\le \sum_{s=1}^m |w_{ts}|\,|h_s-g_s|
\le \sum_{s=1}^m \frac{\epsilon}{mM}\,M = \epsilon.
\]

\noindent
We then consider the the tail terms. 
For $s > m$, we have $|h_s - g_s| < \epsilon$, so
\[
\Biggl|\sum_{s=m+1}^t w_{ts} (h_s-g_s)\Biggr|
\le \epsilon \sum_{s=m+1}^t |w_{ts}|
\le \epsilon \sup_{t}\sum_{s=1}^t |w_{ts}|
\le \epsilon R ,
\]
where $R < \infty$ is the uniform bound on the row sums. 
Combining the two parts, we obtain that for all sufficiently large $t$,
$ 
\Biggl|\sum_{s=1}^t w_{ts} h_s - \sum_{s=1}^t w_{ts} g_s\Biggr|
\;\le\; \epsilon + \epsilon R. $ 
Since $\epsilon > 0$ is arbitrary, letting $\epsilon \to 0$ yields
$ 
\sum_{s=1}^t w_{ts} h_s - \sum_{s=1}^t w_{ts} g_s \to 0, $ 
as $t \to \infty$.
\end{proof}

\begin{lemma}
\label{lemma:weighted-convergence}
Let $X_1, \cdots, X_n$ be a sequence of bounded random variables.  
If $X_n \stackrel{p}{\to} a$, then $\frac{1}{n}\sum_{i=1}^n X_i \;\stackrel{p}{\to}\; a$. Furthermore, if non-random sequences $\{b_n\}$ and $\{c_n\}$ are bounded and $b_n \to b$, $c_n \to c$, then $\frac{1}{n}\sum_{i=1}^n b_i X_i + c_i \;\stackrel{p}{\to}\; ab + c$.
\end{lemma}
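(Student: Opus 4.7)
The plan is to reduce both parts of the lemma to convergence in $L^1$ and then invoke Markov's inequality. For part (1), I set $Y_i := X_i - a$. Since the $X_i$ are uniformly bounded (say $|X_i| \le M$ for all $i$), so is $|Y_i|$, and by hypothesis $Y_n \stackrel{p}{\to} 0$. I would first apply the bounded convergence theorem in its version for convergence in probability, which gives $\mathbb{E}|Y_n| \to 0$. Because this is a convergent real sequence, its Cesàro average also vanishes, i.e. $\frac{1}{n}\sum_{i=1}^n \mathbb{E}|Y_i| \to 0$. Then by the triangle inequality,
\[
\mathbb{E}\left|\frac{1}{n}\sum_{i=1}^n X_i - a\right| \;\le\; \frac{1}{n}\sum_{i=1}^n \mathbb{E}|X_i - a| \;\longrightarrow\; 0,
\]
and Markov's inequality immediately yields $\frac{1}{n}\sum_{i=1}^n X_i \stackrel{p}{\to} a$.

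For part (2), I would decompose the deviation from the target limit into one random and two deterministic pieces:
\[
\frac{1}{n}\sum_{i=1}^n (b_i X_i + c_i) - (ab + c) \;=\; \frac{1}{n}\sum_{i=1}^n b_i (X_i - a) \;+\; a\!\left(\frac{1}{n}\sum_{i=1}^n b_i - b\right) \;+\; \left(\frac{1}{n}\sum_{i=1}^n c_i - c\right).
\]
The last two terms are deterministic, and since $b_n \to b$ and $c_n \to c$, the classical Cesàro theorem for convergent real sequences forces each to zero. The random term is controlled in $L^1$ by $\sup_i |b_i| \cdot \frac{1}{n}\sum_{i=1}^n \mathbb{E}|X_i - a|$, which is $o(1)$ by exactly the bound established in part (1); a second application of Markov's inequality then delivers convergence in probability of the whole expression.

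The proof is largely routine, so I do not expect a genuine obstacle. The single step that deserves care is the passage from $Y_n \stackrel{p}{\to} 0$ to $\mathbb{E}|Y_n|\to 0$: this is precisely where the uniform boundedness assumption is used, via the standard splitting $\mathbb{E}|Y_n| \le \epsilon + 2M\Pr(|Y_n|>\epsilon)$. Everything else in the argument is bookkeeping on top of this single analytic fact plus the deterministic Cesàro lemma.
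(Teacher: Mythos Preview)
Your argument is correct: uniform boundedness plus convergence in probability upgrades to $L^1$ convergence via the splitting $\mathbb{E}|Y_n|\le\epsilon+2M\Pr(|Y_n|>\epsilon)$, the Ces\`aro average of a null sequence is null, and the triangle inequality plus Markov's inequality finishes part (1); your three-term decomposition for part (2) is also clean and correct. The paper itself does not supply a proof of this lemma---it simply cites \cite{chen2021}---so there is no in-paper argument to compare against, but your approach is the standard one and would be accepted without issue.
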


\begin{lemma}
\label{lemma:matrix-convergence}
For a sequence of $d \times d$ symmetric random matrices $X_1, X_2, \dots, X_n$ 
and a $d \times d$ symmetric matrix $X$, $X_n \stackrel{p}{\to} X$ if and only if $\bm{v}^\top X_n \bm{v} \stackrel{p}{\to} \bm{v}^\top X \bm{v}$, for any $\bm{v} \in \mathbb{R}^d$.
\end{lemma}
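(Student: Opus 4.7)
The plan is to prove the biconditional by treating the two directions separately; both rely on standard tools from finite-dimensional linear algebra combined with the Continuous Mapping Theorem and elementary properties of convergence in probability.

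For the forward direction, I would observe that the map $A \mapsto \bm{v}^\top A \bm{v}$ is a continuous function from the space of $d\times d$ real matrices to $\mathbb{R}$ (it is in fact a linear functional, hence continuous). Given any fixed $\bm{v} \in \mathbb{R}^d$, if $X_n \stackrel{p}{\to} X$, then the Continuous Mapping Theorem immediately yields $\bm{v}^\top X_n \bm{v} \stackrel{p}{\to} \bm{v}^\top X \bm{v}$. This direction is essentially a one-line argument.

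For the reverse direction, the key idea is to reduce entrywise convergence to quadratic-form convergence via the polarization identity. Since $X_n$ and $X$ are symmetric, for any pair of vectors $\bm{u}, \bm{v} \in \mathbb{R}^d$ we have
\[
\bm{u}^\top (X_n - X)\bm{v} = \tfrac{1}{2}\bigl[(\bm{u}+\bm{v})^\top (X_n - X)(\bm{u}+\bm{v}) - \bm{u}^\top (X_n - X)\bm{u} - \bm{v}^\top (X_n - X)\bm{v}\bigr].
\]
Applying the hypothesis to the three quadratic forms on the right-hand side with the vectors $\bm{u}+\bm{v}$, $\bm{u}$, and $\bm{v}$, and then invoking Slutsky's theorem, we conclude that $\bm{u}^\top X_n \bm{v} \stackrel{p}{\to} \bm{u}^\top X \bm{v}$ for every $\bm{u}, \bm{v}$. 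In particular, choosing $\bm{u} = \bm{e}_i$ and $\bm{v} = \bm{e}_j$ to be the standard basis vectors shows that each entry $(X_n)_{ij}$ converges in probability to $X_{ij}$.

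Finally, since there are only $d^2$ entries and convergence in probability is preserved under finite unions of null events, entrywise convergence in probability implies convergence of $X_n$ to $X$ in probability with respect to any matrix norm (for instance, the Frobenius norm), because all norms on the finite-dimensional space of $d\times d$ matrices are equivalent. The main obstacle, if any, is merely keeping track of the polarization bookkeeping; no deep probabilistic tools are required beyond the Continuous Mapping Theorem and Slutsky's theorem.
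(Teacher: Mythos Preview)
Your argument is correct in both directions: the forward implication is indeed an immediate application of the Continuous Mapping Theorem, and for the reverse you correctly exploit the polarization identity for symmetric matrices to upgrade quadratic-form convergence to bilinear-form convergence, then specialize to standard basis vectors to obtain entrywise convergence in probability, which in finite dimensions is equivalent to convergence in any matrix norm. The bookkeeping is clean and no step is missing.

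As for comparison with the paper: the paper does not actually supply its own proof of this lemma; it simply records the statement and attributes the proof to \cite{chen2021}. Your self-contained polarization argument is the standard route and is almost certainly what one would find in the cited reference, so there is nothing substantive to contrast.
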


\end{document}